\documentclass[11pt]{article}

\usepackage{setspace}

\usepackage{geometry}
\usepackage{xfrac}
\usepackage{nicefrac}

\newcounter{daggerfootnote}
\newcommand*{\daggerfootnote}[1]{%
    \setcounter{daggerfootnote}{\value{footnote}}%
    \renewcommand*{\thefootnote}{\fnsymbol{footnote}}%
    \footnote[2]{#1}%
    \setcounter{footnote}{\value{daggerfootnote}}%
    \renewcommand*{\thefootnote}{\arabic{footnote}}%
    }

 \oddsidemargin  0in
\evensidemargin 0in
\textwidth      6.5in
\headheight     0.0in
\topmargin      -0.5in
\textheight=9.0in

\makeatletter

\makeatother
\usepackage{amsfonts,amssymb,amsthm}

\usepackage{graphicx}
\usepackage{amsfonts}
\usepackage{algorithm}
\usepackage[noend]{algpseudocode}

\renewcommand{\Pr}{\mathbb{P}}
\usepackage{dsfont}
\usepackage{amsmath, amssymb}
\usepackage{color}

\newcommand{\ident}{\mathds{1}}
\renewcommand{\H}{\mathcal{H}}
\renewcommand{\S}{\mathcal{S}}
\newcommand{\R}{\mathcal{R}}

\newcommand{\Rb}{\bar{\R}}
\newcommand{\Rt}{\widetilde{\R}}

\newcommand{\BB}{\mathcal{B}}
\newcommand{\BBil}[3]{\BB^{{#3}}_{{#1},\tau_I({#2})}}
\newcommand{\BBul}[3]{\BB^{{#3}}_{\tau_U({#1}),{#2}}}

\newcommand{\EEb}{\overline{\mathcal{E}}}

\newcommand{\Mcc}{\mathcal{M}^1} 
\newcommand{\Mcct}{\mathcal{M}^2} 
\newcommand{\St}[1]{\mathcal{S}^{#1}} 

\newcommand{\Fc}{\mathcal{F}} 
\newcommand{\Gc}{\mathcal{G}} 

\newcommand{\Lc}{\mathcal{L}}
\newcommand{\Pc}{\mathcal{P}}
\newcommand{\U}{\mathcal{U}}

\newcommand{\qu}{\texttt{q}_U}
\newcommand{\tu}{\textit{k}}

\newcommand{\qi}{\texttt{q}_I}
\newcommand{\ti}{\textit{j}} 

\newcommand{\tauh}{\widehat{\tau}}

\newcommand{\rep}{\textbf{\textmd{i}}} 

\newcommand{\eps}{\epsilon}
\newcommand{\calS}{\mathcal{S}}

\newcommand{\reg}{\mathtt{regret}}

\newcommand{\rated}{\mathsf{rated}}


\makeatletter
\newtheorem*{rep@theorem}{\rep@title}
\newcommand{\newreptheorem}[2]{%
\newenvironment{rep#1}[1]{%
 \def\rep@title{#2 \ref{##1}}%
 \begin{rep@theorem}}%
 {\end{rep@theorem}}}

\newtheorem{theorem}{Theorem}[section]
\newreptheorem{theorem}{Theorem}

\newtheorem{lemma}[theorem]{Lemma}

\newtheorem{claim}[theorem]{Claim}
\newtheorem{prop}[theorem]{Proposition}


\newcommand{\rgl}{\Omega}





\newcommand{\good}{\mathsf{good}}

\newcommand{\Asf}[1]{\mathtt{A{#1}}}


\newcommand{\Exp}[1]{\mathbb{E}\left[ #1 \right]}
\newcommand{\thr}{\mathsf{r}}


\newcommand{\thril}{\mathsf{r}}


\newcommand{\iuerr}{\mathsf{Err}}

\newcommand{\Ex}{\mathbb{E}}
\newcommand{\EE}{\mathcal{E}}

\renewcommand{\Pr}{\mathbb{P}}

\newtheorem{remark}{Remark}[section]
\theoremstyle{definition}
\newtheorem{definition}{Definition}[section]

\usepackage{enumitem}

\newcommand{\Tu}{T_{0}}

\begin{document}

\title{Regret Bounds and Regimes of Optimality for \\ User-User and Item-Item Collaborative Filtering}
\author{Guy Bresler\thanks{Guy Bresler is with the Department of Electrical Engineering and Computer Science at MIT and a member of LIDS and IDSS. {\tt\small guy@mit.edu} }  \ 
	and 
	Mina Karzand\thanks{Mina Karzand is with the Wisconsin Institute of Discovery at University of Wisconsin, Madison. 
	The work was done when the author was in the Department of Electrical Engineering and Computer Science at MIT and a member of LIDS and IDSS. 
	{\tt\small karzand@wisc.edu}}\thanks{The authors are ordered alphabetically.}
}

\maketitle

\begin{abstract} We consider an online model for recommendation systems, with each user being recommended an item at each time-step and providing `like' or `dislike' feedback. Each user may be recommended a given item at most once. A latent variable model specifies the user preferences: both users and items are clustered into \emph{types}. All users of a given type have identical preferences for the items, and similarly, items of a given type are either all liked or all disliked by a given user. We assume that the matrix encoding the preferences of each user type for each item type is randomly generated; in this way, the model captures structure in both the item and user spaces, the amount of structure depending on the number of each of the types. 
The measure of performance of the recommendation system is the expected number of disliked recommendations per user, defined as expected regret.
We propose two algorithms inspired by user-user and item-item collaborative filtering (CF), modified to explicitly make exploratory recommendations, and prove performance guarantees in terms of their expected regret. 
For two regimes of model parameters, with structure only in item space or only in user space, we prove information-theoretic lower bounds on regret that match our upper bounds up to logarithmic factors. Our analysis elucidates system operating regimes in which existing CF algorithms are nearly optimal.
\end{abstract}

\section{Introduction}
Options are good, but if there are too many options, we need help. It is increasingly the case that our interaction with content is mediated by recommendation systems. There are two main approaches taken in recommendation systems: \emph{content filtering} and \emph{collaborative filtering}. Content filtering makes use of features associated with items and users (e.g., age, location, gender of users and genre, actors, director of movies). In contrast, collaborative filtering is based on observed user preferences. Thus, two users are thought of as similar if they have revealed similar preferences, irrespective of their profile. Likewise, two items are thought of as similar if most users have similar preferences for them. 
More generally, collaborative filtering (CF) makes use of structure in the matrix of preferences, as in low-rank matrix formulations~\cite{aditya2011channel,biau2010statistical,
candes2009exact,jain2013low,keshavan2010matrix,negahban2012restricted,
rohde2011estimation,srebro2005generalization}. In this paper, since our model has no item and user features, all algorithms must do collaborative filtering.

An important aspect of most recommendation systems is that each recommendation influences what is learned about the users and items, which in turn determines the possible accuracy of future recommendations.
This introduces a tension between exploring to obtain information and exploiting existing knowledge to make good recommendations. The tension between exploring and exploiting is exactly the phenomenon of interest in the substantial literature on the multi-armed bandit (MAB) problem and its variants \cite{bubeck2012regret,lai1985asymptotically,russo2014learning}.
In the multi-armed bandit setup, optimal algorithms necessarily converge to repeated play of the same arm; in contrast, a recommendation system that repeatedly recommends the same movie, even if it is a very good movie, is surely problematic!
For this reason we will allow each item to be recommended at most once to each user (as done in~\cite{bresler2014latent,bresler2015regret}). 

It is common to think of recommendation systems as a matrix completion problem. Given a subset of observed entries, the matrix completion problem is to estimate the rest of matrix, where it is assumed that the matrix satisfies some properties. This criterion does not capture the experience of users in a recommendation system: a more appropriate measure of performance is the proportion of good recommendations made by the algorithm.

With the aforementioned issues in mind, we work within a mathematical framework for evaluating the performance of various recommendation system algorithms, related to the models studied in \cite{bresler2014latent,bresler2015regret}. The framework is detailed in Section~\ref{sec:model}, but in brief, at each time-step each user in the system is given a recommendation and then provides binary feedback in the form of 'like' or 'dislike'. 
The user preferences are described by a latent variable model in which each user is associated with a user type and each item is associated with an item type. 
Users who belong to the same user type have identical preferences for all items and items belonging to the same type have identical ratings from all users.\footnote{A similar model of data to ours, in which there is an underlying clustering of  rows and columns, has been studied in other settings~\cite{shen2009mining,xu2014jointly}.}
The basic measure of performance is \emph{expected regret}, defined as the expected number of bad recommendations made per user over a time horizon of interest. A second performance criterion is the \emph{cold start time}, the first time at which recommendations become nontrivial in quality. Our goal is to understand the dependence of these quantities on system parameters and we will therefore seek  bounds accurate only to within constant or logarithmic factors. 
                          
In the literature there are two categories of collaborative filtering (CF) algorithms.
\emph{User-user} algorithms \cite{bellogin2012using,bresler2014latent,das2007google} use structure in the user space to predict user preferences. Here, the preference of user $u$ for item $i$ is estimated from the preference of other users $u'$ believed to be similar to $u$ based on their previous ratings.
 Alternatively, \emph{item-item} algorithms \cite{bresler2015regret,linden2003amazon,sarwar2001item} use structure in the item space. This time, the preference of user $u$ for item $i$ is estimated from the preference of the same user $u$ for other items $i'$ believed to be similar to $i$ based on previous ratings from users that have rated both $i$ and $i'$. 
 In Sections~\ref{s:uu-upper} and~\ref{s:itemitem} we develop versions of user-user and item-item CF algorithms tailored to our online recommendation system model and prove performance guarantees. In order to achieve good performance, these algorithms must carefully explore the \emph{a priori} unknown relationships between users and items.  
 One of the unexpected insights that emerge from the analysis is that the item-item algorithm must limit the exploration to only a subset of the items types, where the size of this subset depends on the system parameters and time-horizon. The straightforward approach to Item-Item CF algorithms is to learn the whole preference matrix, and as described in Section~\ref{s:results} this results in a qualitatively suboptimal cold-start time that can be arbitrarily worse than the one obtained by our algorithm.

In order to focus on the information structure of the recommendation problem, and the associated exploration-exploitation tradeoff, the majority of the paper assumes that user feedback is noiseless. We generalize our user-user algorithm to handle noisy feedback and also describe how one would similarly accommodate noisy feedback in the item-item algorithm. In essence, estimation of similarity between users (or items) requires some redundancy in the information collected in order to average out the noise.

We prove nearly tight lower bounds on regret for two parameter regimes of interest, identifying settings in which the proposed algorithms cannot be significantly improved. In the \emph{user structure only} scenario, the model parameters are such that there is no structure in the item space. Analogously, in the \emph{item structure only} scenario, the parameters are such that there is no structure in the user space. We prove information-theoretic lower bounds for the performance of any algorithm in the user-structure only and item-structure only models, which match to within a logarithmic factor the performance obtained by our proposed user-user and item-item CF algorithms. These results are outlined in Section~\ref{s:results}.

One of this paper's main contributions is the development of techniques for proving lower bounds on the performance of online recommendation algorithms. Our lower bounds depend crucially on the inability to repeatedly recommend the same item to a given user, and for this reason are completely different from lower bounds for multi-armed bandit problems~\cite{lai1985asymptotically,bubeck2012regret}.
At a high level, however, the basic challenge is the same as when proving lower bounds for bandits: one must connect the information obtained by the algorithm to the regret incurred. This allows to reason that subsequent recommendations will have low regret only if prior recommendations yielded significant information, which in turn necessitated exploratory recommendations with correspondingly substantial regret. Thus, regret is a conserved quantity and cannot be avoided by employing complicated adaptive algorithms.

 The methods used for the lower bounds are elementary in nature. For example, in the user structure only model, the arguments in Section~\ref{s:item_lower}
are based on two observations. First, one cannot be confident in recommending any item to user $u$ at time $t$ if there is no user $u'$ that has rated enough items in common, and in agreement, with user $u$ by time $t-1$. In this situation, the similarity of $u$ to any other user is uncertain and so too is the outcome of any recommendation. Second, the outcome of recommending item $i$ to user $u$ at time $t$ is also uncertain if none of the users that actually are similar to $u$ have rated item $i$ by time $t-1$. These observations imply a lower bound on the necessary number of exploratory recommendations before it is possible to recommend with much better likelihood of success than chance. Similar reasoning leads to lower bounds in Section~\ref{sec:UL}
for the model with only item-structure.

 A few papers including \cite{bresler2014latent,bresler2015regret,dabeer2013adaptive}  have theoretical analyses for online collaborative filtering. The paper~\cite{bresler2014latent} analyzes a user-user CF algorithm in a similar setting to ours and \cite{bresler2015regret} analyzes an item-item CF algorithm in a somewhat different and more flexible model. Relative to these, our main distinction is obtaining nearly matching lower bounds showing optimality of our algorithms and analysis. 
 The model studied by Dabeer and coauthors~\cite{dabeer2013adaptive,aditya2011channel,barman2012analysis} is also quite similar to our setup, but their objective is different: 
 they seek an algorithm that \textit{exploits} in a provably optimal fashion asymptotically in time, but their approach does not reveal how to explore. In a different direction, Kerenidis and Prakash~\cite{kerenidis2016quantum} seek to achieve low computational complexity for recommendation in a similar setup as ours. What they show is that reconstructing the preference matrix only partially, which is what our item-item CF algorithm does, is useful also with regards to computation. 

Hybrid algorithms exploiting both structure in user space and item space have been studied before in \cite{song2016blind,wang2006unifying,kim2010imp}. Both Song et al. \cite{song2016blind} and 
Borgs et al. \cite{borgs2017thy}
study a more flexible latent variable model 
in the offline (matrix completion style) setting and propose collaborative filtering algorithms using both item and user space. In a forthcoming paper we analyze a hybrid algorithm within the same framework studied here. 

 \subsection{Outline}
 The model and performance metric are described in Section~\ref{sec:model}. Section~\ref{s:results} overviews the main results of this paper and includes numerical simulations to complement the theoretical analyses.
 Our version of user-user CF is introduced and analyzed in Section~\ref{s:uu-upper}.
 In Section~\ref{sec:UL} we prove that the proposed algorithm is almost information-theoretically optimal in the setup with user structure only. Our version of item-item CF is described and analyzed in Section~\ref{s:itemitem}, and the corresponding lower bound in the setting with item structure only is given in Section~\ref{s:item_lower}. Appendix~\ref{s:lemmas} contains a few basic probabilistic lemmas, and Appendix~\ref{s:any-time-reg-alg} relates so-called anytime regret (unknown time horizon) to known time horizon. 
 	
\subsection{Notation}

For an integer $a$ we write $[a]=\{1,\cdots,a\}$ and for real-valued $x$ let $(x)_+ =\max\{x,0\}$. All logarithms are to the base of $2$. The set of natural numbers (positive integers) is denoted by $\mathbb{N}$. We note here that variables or parameters in Figure~\ref{f:notation} have the same meaning throughout the paper, but any others may take different values in each section. For real-valued $x$, $\lfloor x \rfloor$ denotes the greatest integer less than or equal to $x$ and $\lceil x \rceil$ denotes the smallest integer greater than or equal to $x$. Numerical constants ($c, c_1,c_2$ and so forth) may take different values in different theorem statements unless explicitly stated otherwise.

\section{Model} \label{sec:model}
\subsection{Problem setup}

There is a fixed set of users $\{1,\dots,N\}$. At each time $t= 1,2,3,\dots$ the algorithm recommends an item $a_{u,t}\in\mathbb{N}$ to each user $u$ and receives feedback $L_{u,a_{u,s}}\in \{+1,-1\}$ (`like' or `dislike'). For the reasons stated in the introduction, we impose the condition that each item may be recommended at most once to each user. In order that the algorithm never run out of items to recommend, we suppose there are infinitely many items to draw from and identify them with the natural numbers. 

The history $\H_{t}=\{a_{u,s}, L_{u,a_{u,s}}, \text{ for } u\in [N], s\in [t]\}$ is the collection of actions and feedback up to time $t$.
We are interested in online learning algorithms, in which the action $a_{u,t}$ is a (possibly random) function of the history up through the end of the previous time-step $\H_{t-1}$. This additional randomness is encoded in a random variable $\zeta_{u,t}$, assumed to be independent of all other variables. In this way, $a_{u,t}=f_{u,t}(\H_{t-1},\zeta_{u,t})$, for some deterministic function $f_{u,t}$.

Algorithm performance will be evaluated after some arbitrary number  of time-steps $T$. The performance metric we use is expected regret (simply called regret in what follows), defined as the expected number of disliked items recommended per user: 
\begin{equation}\label{eq:regdef}
\reg(T)=\Ex\sum_{t=1}^{T}\frac{1}{N}\sum_{u=1}^{N}\ident[L_{u,a_{u,t}}=-1]\,.
\end{equation}
Here the expectation is with respect to the randomness in both the model and the algorithm.
The algorithms we describe depend on knowing the time-horizon $T$, but by a standard doubling trick (explained in Appendix~\ref{s:any-time-reg-alg}) it is possible to convert these to algorithms achieving the same (up to constant factors) regret without this knowledge~(see, e.g., \cite{cesa2006prediction}). This latter notion of regret, where the algorithm does not know the time-horizon of interest and must achieve good performance across all time-scales, is called \emph{anytime regret} in the literature. 

The time at which point recommendations become nontrivial in quality is another important performance criterion, because until that point users invest effort but get little in return. In the recommendation systems literature the notion of cold start describes the difficulty of providing useful recommendations when insufficient information is available about user preferences. 
We define the \emph{cold start time} to be the first time at which the slope of regret as a function of $T$ is bounded by some value $\gamma$:
$$
\textsf{coldstart}(\gamma) = \min\Big\{ T:\frac{\reg(T)}{T}\leq \gamma\Big\}\,.
$$
This is similar to (but somewhat simpler than) the definition in \cite{bresler2015regret}. 

\subsection{User preferences}\label{ss:pref}
We study a latent-variable model for the preferences (`like' or `dislike') of the users for the items, based on the idea that there are relatively few \emph{types of users} and/or few \emph{types of items}.
 Each user $u \in [N]$ has a user type $\tau_U(u)$ i.i.d. uniform on $[\qu]$, where $\qu$ is the number of user types. 
We assume that $\qu\leq N$, because if $\qu> N$ then most users have their own type and all of the results remain unchanged upon replacing $\qu$ by $N$.
 Similarly, each item $i\in\mathbb{N}$ has a  random item type $\tau_I(i)$ i.i.d. uniform on $[\qi]$, where $\qi$ is the number of item types\footnote{Throughout the paper, we will assume that the number of user types $\qu$ and the number of item types $\qi$ are both $\Omega(\log N)$, since the whole problem becomes easy if either of these parameters are too small.}.
The random variables $\{\tau_{U}(u)\}_{1\leq u \leq N}$ and $\{\tau_I(i)\}_{1\leq i}$ are assumed to be jointly independent. 

\begin{figure}
	\begin{center}
		\begin{tabular}{ |c|c| } 
			\hline
			$N$ & Number of users \\ 
			$T$ & Time horizon \\ 
			$\qu$ & Number of user types \\ 
			$\qi$ & Number of item types \\ 
			$\tau_U(u)$ & User type of user $u$ \\ 
			$\tau_I(i)$ & Item type of item $i$ \\ 
			$a_{u,t}$ & Item recommended to user $u$ at time $t$ \\ 
			$L_{u,i}$ & Rating of user $u$ for item $i$\\
			$\xi_{\tu,\ti}$ & Preference of user type $\tu$ for item type $\ti$\\
			$\Xi$ & Preference matrix \\
			\hline
		\end{tabular}
	\end{center}
	\caption{Notation for the recommendation system model.}
	\label{f:notation}
\end{figure}
 
All users of a given type have identical preferences for all the items, and similarly all items of a given type are rated in the same way by any particular user. The entire collection of user preferences $(L_{u,i})_{u,i}$ is therefore encoded into a much smaller \textit{preference matrix} $\Xi=(\xi_{\tu,\ti})\in\{-1,+1\}^{\qu\times \qi}$, which specifies the preference of each user type for each item type. The preference $L_{u,i}$ of user $u\in [N]$ for item  $i\in \mathbb{N}$ is the preference $\xi_{\tau_U(u),\tau_I(i)}$ of the associated user type $\tau_{U}(u)$ for the item type $\tau_{I}(i)$ in the matrix $\Xi$, \textit{i.e.},
$$L_{u,i}=\xi_{\tau_U(u),\tau_{I}(i)}\,.$$ 
We assume that the entries of $\Xi$ are i.i.d., $\xi_{\tu,\ti}=+1$ w.p. $1/2$ and $\xi_{\tu,\ti}=-1$ w.p. $1/2$. Generalizing our results to i.i.d. entries with bias $p$ is  straightforward. However, the independence assumption is quite strong and an important future research direction is to obtain results for more realistic preference matrices. 
We also consider a noisy model with $L_{u,i}=\xi_{\tau_U(u),\tau_{I}(i)} \cdot z_{u,i}$ where $z_{u,i}$ are i.i.d. random variables with $\Pr[z_{u,i} = +1]=1-\gamma$ and  $\Pr[z_{u,i} = -1]=\gamma$.

\subsection{Two regimes of interest}
Two specific parameter regimes play a central role in this paper, capturing settings with structure only in user space or only in item space. As described in Section~\ref{s:results}, each of user-user or item-item CF is almost optimal in the corresponding regime.

\begin{definition}[User structure only ($\qi = 2^{\qu}$)]\label{d:user}
The \emph{user structure model} refers to the case that there is no structure in the item space. To simplify matters, we assume that the preference matrix $\Xi\in\{-1,+1\}^{\qu \times 2^{\qu}}$ is deterministic and has columns consisting of all sequences in $\{-1,+1\}^{\qu}$. Essentially the same preference matrix would arise (with high probability) if $\qi$ is much larger than $2^{\qu}$ (when the entries are i.i.d. as specified above in Subsection~\ref{ss:pref}). 
\end{definition}

\begin{definition}
[Item structure only ($\qu = N$)]\label{d:item}
The \emph{item structure model} refers to the case that there is no structure in the user space. This happens when 
$\qu$ is much larger than $N$, since then most user types have no more than one user. For the purpose of proving near-optimality of item-item CF, it suffices to take $\qu = N$ (and we do so).
\end{definition}

\section{Main results}
\label{s:results}
We will analyze a version of each of user-user and item-item CF within the general setup described in Section~\ref{sec:model}. The resulting regret bounds appear in Theorems~\ref{t:user-user} and~\ref{th:Item-upper}. These theorems are complemented by information theoretic-lower bounds, Theorems~\ref{t:user-userL} and~\ref{t:item-itemL}, showing that no other algorithm can achieve much better regret  (up to multiplicative logarithmic factors) in the specific extreme parameter regimes with user-structure only and item-structure only. The simplified versions of these theorems appear in this section. Towards the end of this section we present simulation results supporting the theorems.

\subsection{User-user collaborative filtering}
User-user CF exploits structure in the user space: the basic idea is to recommend items to a user that are liked by similar users. We analyze an instance of user-user CF described in detail in Section~\ref{ss:userAlg}, obtaining the regret bound given in Theorem~\ref{t:user-user} below. Essentially, the algorithm clusters users according to type by recommending random items for an initial phase, and then uses this knowledge to efficiently explore the preferences of each user type (as opposed to each user individually). The subsequent savings is due to the fact that the cost of exploration can be shared amongst users of the same type. 

The random recommendations made during the initial phase incur regret with slope $1/2$, because a random recommendation is disliked with probability half. Afterward, the users are clustered according to type. Recommending an item to $\qu$ users, one from each type, gives us the preferences of all $N$ users for the item, and each such recommendation is disliked with probability $1/2$. This results in a slope of $\qu/2N$ for regret in the second phase of the algorithm.

\begin{reptheorem}{t:user-user}[Regret upper bound in user-user CF, simplified version]
Consider the recommendation system model described in Section~\ref{sec:model} with $N$ users, $\qu$ user types, and $\qi>126\log N$ item types. 
There exists numerical constants $c, C$ so that Algorithm \ref{alg:user} achieves regret
	\[\reg(T)\leq\begin{cases} \frac{T}{2}\,, \quad\quad &\text{if } T\leq c \log N\\
C\big(\log N +  \frac{\qu}{N} T \,\big)	\,, \quad\quad &\text{if } T>c \log N\,.\end{cases}\]
\end{reptheorem}
The cold-start time, the time until the slope of the regret drops below $\gamma$, is evidently $\Theta(\log N)$ for any $\gamma\in (\frac{C\qu}N,\frac12)$.
It follows from the next theorem that if there is no structure in the item space and the number of user types is $\qu=N^{\alpha}$ for fixed $0<\alpha<1$, then the user-user CF algorithm achieves both regret and cold start time that are optimal up to multiplicative constants.

\begin{reptheorem}{t:user-userL}
[Regret lower bound with user structure only, simplified version]
There exist a numerical constant $c$ such that in the user structure model (Defn~\ref{d:user}) with $\qu>(\log N)^{1.1}$ user types and $N>N_0$ users, any recommendation algorithm must incur regret 
\[\reg(T)\geq \begin{cases} 0.49T-4\,, \quad\quad &\text{if } T\leq c\log\qu\\
0.2 \frac{\qu}{N} T \,, \quad\quad &\text{if } T>c
\log\qu\,.\end{cases}
\]
\end{reptheorem}

The reasoning for the first part of the lower bound is as follows. If a user has been recommended fewer than $\log\qu$ items, then its similarity with respect to other users cannot be determined. This implies that any recommendation made to this user has uncertain outcome. The second part of the lower bound is obtained by showing that when an item is recommended for the first time to a user from a given user type the outcome of that recommendation is uncertain, and lower bounding the number of such recommendations. This is where we use the condition that each item is recommended at most once to each user.

The lower bound shows that the poor initial performance of user-user CF, as bad as simply recommending random items, is unavoidable in the setting with only user structure and that its duration depends on the number of user types. In \cite{bresler2015regret} it was shown that a version of item-item CF obtains much smaller cold start time than user-user CF in a model with item structure only. Our results on item-item CF, described next, corroborate this.

\subsection{Item-item collaborative filtering}
Item-item CF exploits structure in the item space: users are recommended items similar to those they have liked.
We analyze an instance of item-item CF in Section~\ref{ss:itemalg}, obtaining the regret bound given in Theorem~\ref{th:Item-upper} below. 
The algorithm creates several clusters of items, as well as a set of unclustered items. Similarity of two items is estimated by having random users rate both items. 
Users then explore a single item from each cluster and liked clusters are subsequently recommended.
The effort of clustering is shared amongst all the users, and the savings is due to liked explorations yielding an entire cluster of items to recommend.

Crucially, this version of item-item CF has the feature that only a \emph{subset} of the item space is explored  (\textit{i.e.}, only a subset of the item types are clustered, with the others cast aside)\footnote{In comparison, learning the entire preference matrix requires $\qi$ explorations per user, resulting in a cold start time of at least $\qi$ versus our $\qi/N$. The more fundamental savings in clustering only a portion of the space is in reducing the shared cost, and thereby again reducing the cold start time. A naive clustering of $M$ items from the entire space requires $\qi (\log \qi)M$ comparisons (each item is compared $\log \qi$ times to a representative from each of $\qi$ types), and for the clusters to be larger than constant size on average, $M$ must be significantly larger than $\qi$. The resulting cold start time is thus at least $\qi^2 (\log \qi)/N$, which is much larger than our cold start time on the order of $\qi/N$.}. 
To the best of our knowledge, the benefit of limiting the scope of item exploration has not been made explicit before; this only became evident to us in seeking to match the lower bound. 
The total number of items compared and the number of clusters are chosen depending on the system parameters to give the best regret bound.

%

\begin{reptheorem}{th:Item-upper}[Regret upper bound in item-item CF, simplified version]
	Consider the recommendation system model described in Section~\ref{sec:model} with $N>5$ users, $\qi>13 \log N$ item types, and $\qu>16\log (N\qi)$ user types.  There are numerical constants $C, c_1,c_2$ and $c_3$ such that Algorithm~\ref{alg:item-fixed} obtains regret per user at time $T$ upper bounded as 
\begin{align*}
&\hspace{-.1cm}\reg(T)
\leq
C 
\begin{cases} 
T, 
\quad \quad
&\text{ if } \,
T \leq \max\{c_1, c_2 \frac{\qi \log (N\qi)}{N}\}
\\
{\log T +\sqrt{\frac{\qi \log (N\qi)}{N}T }}, 
\quad\quad
&\text{ if } \,
\max\{c_1, c_2 \frac{\qi \log (N\qi)}{N}\}
< 
T
\leq 
c_3\frac{N\qi}{\log (N\qi)}
\\[8pt]
\frac{\log (N\qi)}{N}T,
\quad \quad\quad \quad
&\text{ if }\,
c_3\frac{N\qi}{\log (N\qi)}
\leq T \,.
 \end{cases} 
\end{align*}
\end{reptheorem}

If there is no structure in the user space and the number of item types is $\qi=N^{\beta}$ for fixed $\beta>0$, then the item-item CF algorithm is optimal up to a logarithmic factor.

\begin{reptheorem}{t:item-itemL}[Regret lower bound for item structure only, simplified version]
In the item-structure model (Defn.~\ref{d:item}) with $\qi>25\,(\log N)^5$  item types and $N>32$ users, there exist numerical constants $C, c_1, c_2, c_3,$ and $c_4$
such that  any recommendation algorithm must incur regret
%

\begin{align*}
\reg(T)>
C \,\,\begin{cases}
T, 
\quad\quad \quad&\text{if }  \,
T\leq c_1\frac{\sqrt{\qi}\log \qi}{N}
\\[8pt]
\frac{T}{\log \qi},
\quad\quad\quad &\text{if } \,
c_1\frac{\sqrt{\qi}\log \qi}{N}\leq T< c_2\frac{\qi(\log\qi)^2}{N}
\\[8pt]
\sqrt{\frac{T\qi}{N}},  
\quad\quad\quad &\text{if } \,
c_2\frac{\qi(\log\qi)^2}{N}\leq T<c_3\frac{N\qi}{(\log\qi)^2}
\\[8pt]
\frac{{\log\qi}}{N}T,  
\quad\quad\quad &\text{if } \,
c_3\frac{N\qi}{(\log\qi)^2}\leq T\,.
\end{cases}
\end{align*}
\end{reptheorem}

It follows that the cold start time $\textsf{coldstart}(\gamma)$ with $\gamma = C$ in the item-structure only regime is lower bounded as $\widetilde{\Omega}(\sqrt{\qi}/N)$, while the upper bound based on our proposed algorithm is $\widetilde{O}(\qi/N)$. Note that the cold start time with $\gamma=1/\log \qi$ is $\widetilde\Theta(\qi/N)$. The gap in the upper and lower bounds on cold start time is a consequence of the existence of the second regime in the lower bound given above, and appears to be an artifact of our proof.

The proof of the lower bound is based on two main observations. First, if an item has been recommended to fewer than $\log \qi$ users, then its similarity with respect to other items cannot be determined; this implies that recommending this item to any user has uncertain outcome. Second, when a user is recommended an item from a given item type for the first time, the outcome of that recommendation is uncertain since this reveals a new variable in the preference matrix. Lower bounding the number of such uncertain recommendations gives the lower bound for regret.

If there is structure in the item space it is possible to avoid the long cold-start time of algorithms using only user structure: even for a very short time horizon, they can guarantee nontrivial bounds on regret. In particular, the near-optimal algorithm proposed here suffers from a constant value of regret for an initial period. 
Note that as $N$ increases, the regret upper bound (given in Theorem~\ref{th:Item-upper}) in the initial phase (constant $c_1$) does not change, but  the length of the initial phase increases. Thus increasing $N$ makes it easier to make meaningful recommendations. The same phenomenon is true more generally: the upper bound on regret at any time $T$ is a decreasing function of~$N$.

\subsection{Numerical Simulations}
We simulated our versions of \textsc{User-User} and \textsc{Item-Item} Algorithms (As described in Sections~\ref{s:uu-upper} and~\ref{s:itemitem}). 
In Figure~\ref{fig:User}, we plot the regret  as a function of time for the \textsc{User-User} Algorithm (Alg.~\ref{alg:user} in Section~\ref{s:uu-upper}). We observe that the slope of regret in the asymptotic regime increases by increasing $\qu$ for fixed $N$. We also observe that increasing $N$ decreases the asymptotic slope but does not decrease the cold start time of the algorithm.

\begin{figure}[h]
\includegraphics[width=8cm]{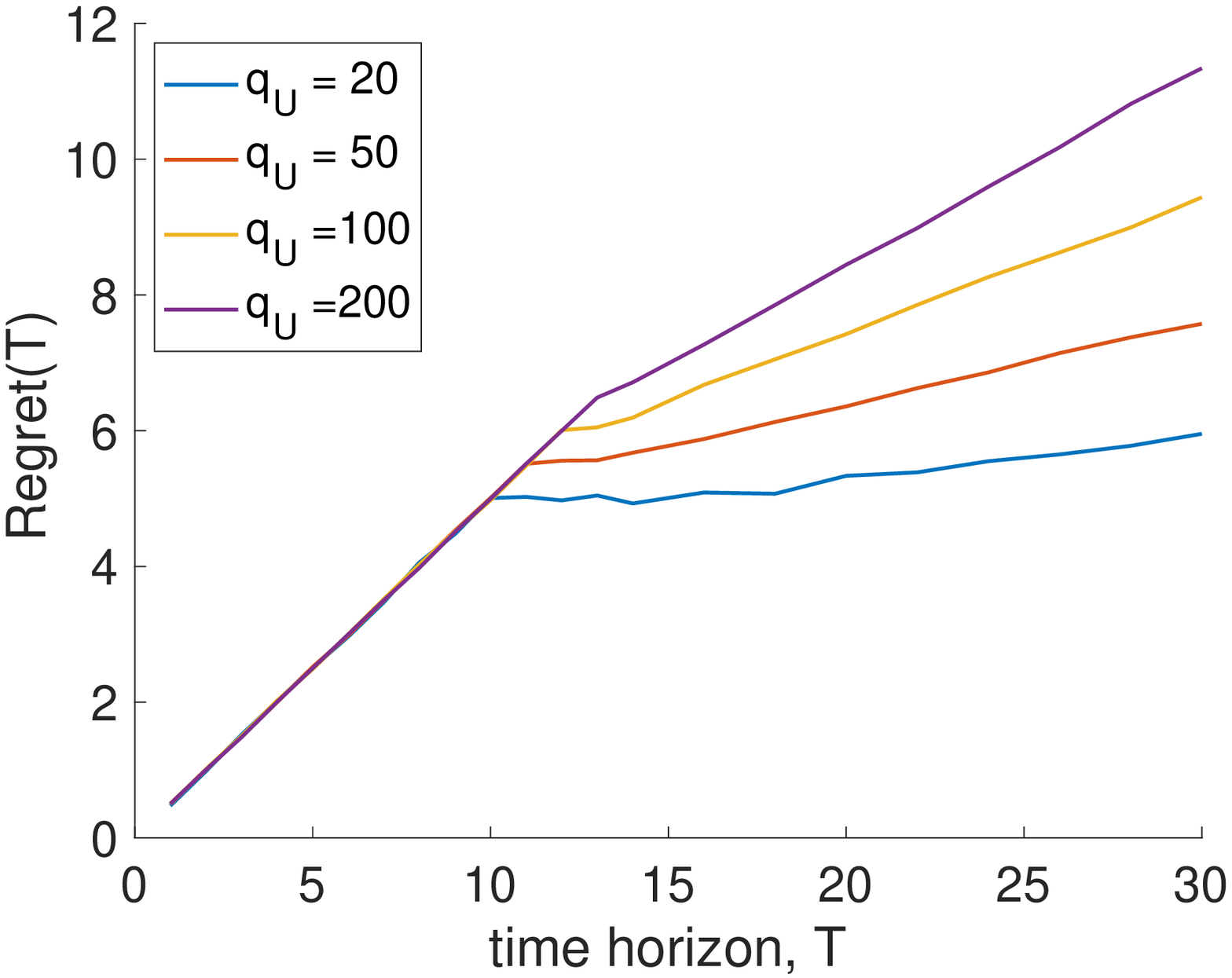} 
\includegraphics[width=8cm]{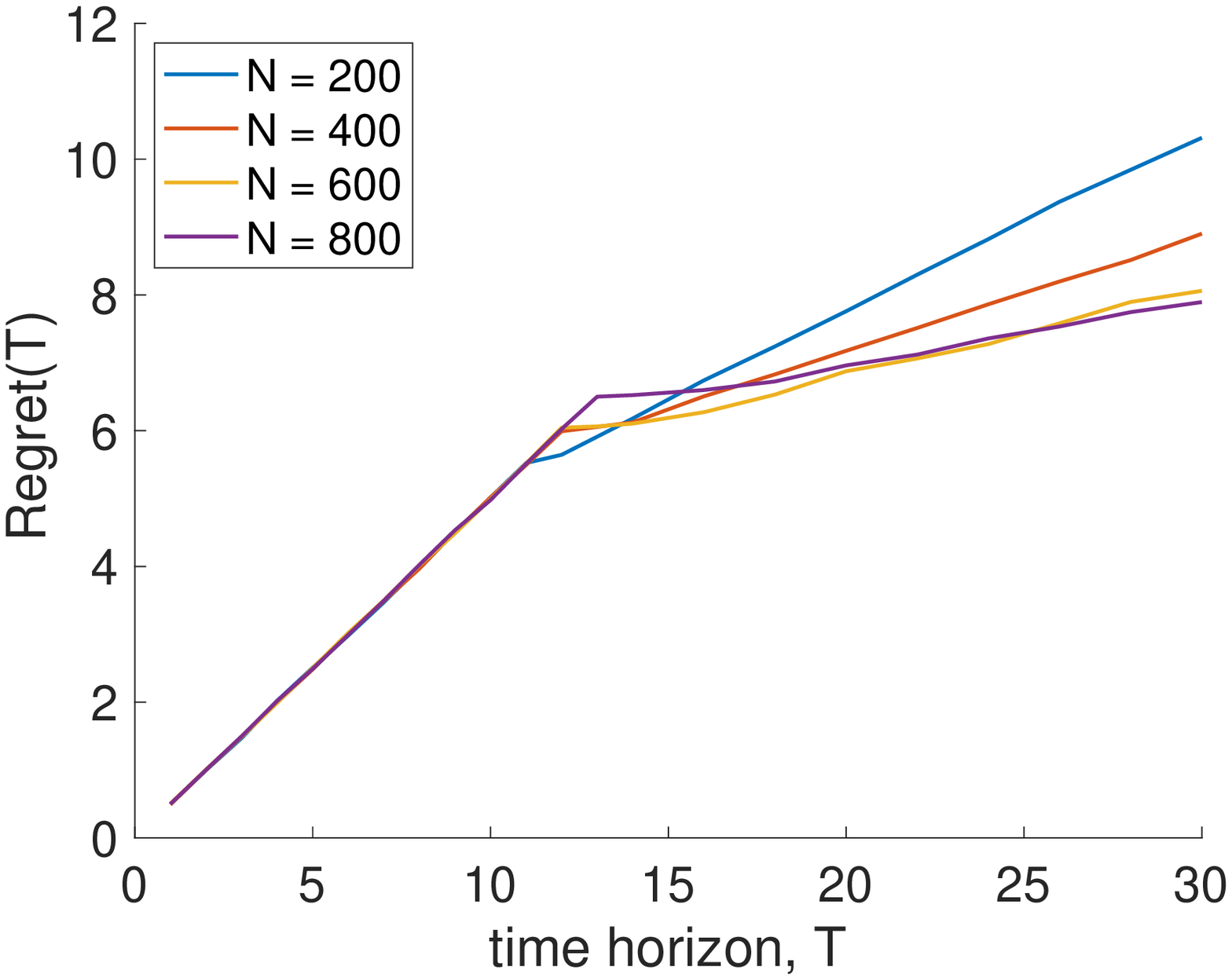} 
\caption{Simulated performance for Algorithm \textsc{User-User}. System parameters are (a) $N=400$ and $\qi = 100$ and (b) $\qu=80 $ and $\qi=100$.}
\label{fig:User}
\end{figure}

In Figure~\ref{fig:Item}, we plot the regret  as a function of time for the \textsc{Item-Item} Algorithm (Alg.~\ref{alg:item-fixed} in Section~\ref{s:itemitem}). We observe that with fixed $N$, increasing $\qi$ increases the cold-start time. But with fixed $\qi$, the cold-start time shrinks linearly in $N$.
We also observe that the slope of regret after the cold start time increases with increasing $\qi$ and decreasing $N$, consistent with the statement of Theorem~\ref{th:Item-upper}.

\begin{figure}[h]
\includegraphics[width=8cm]{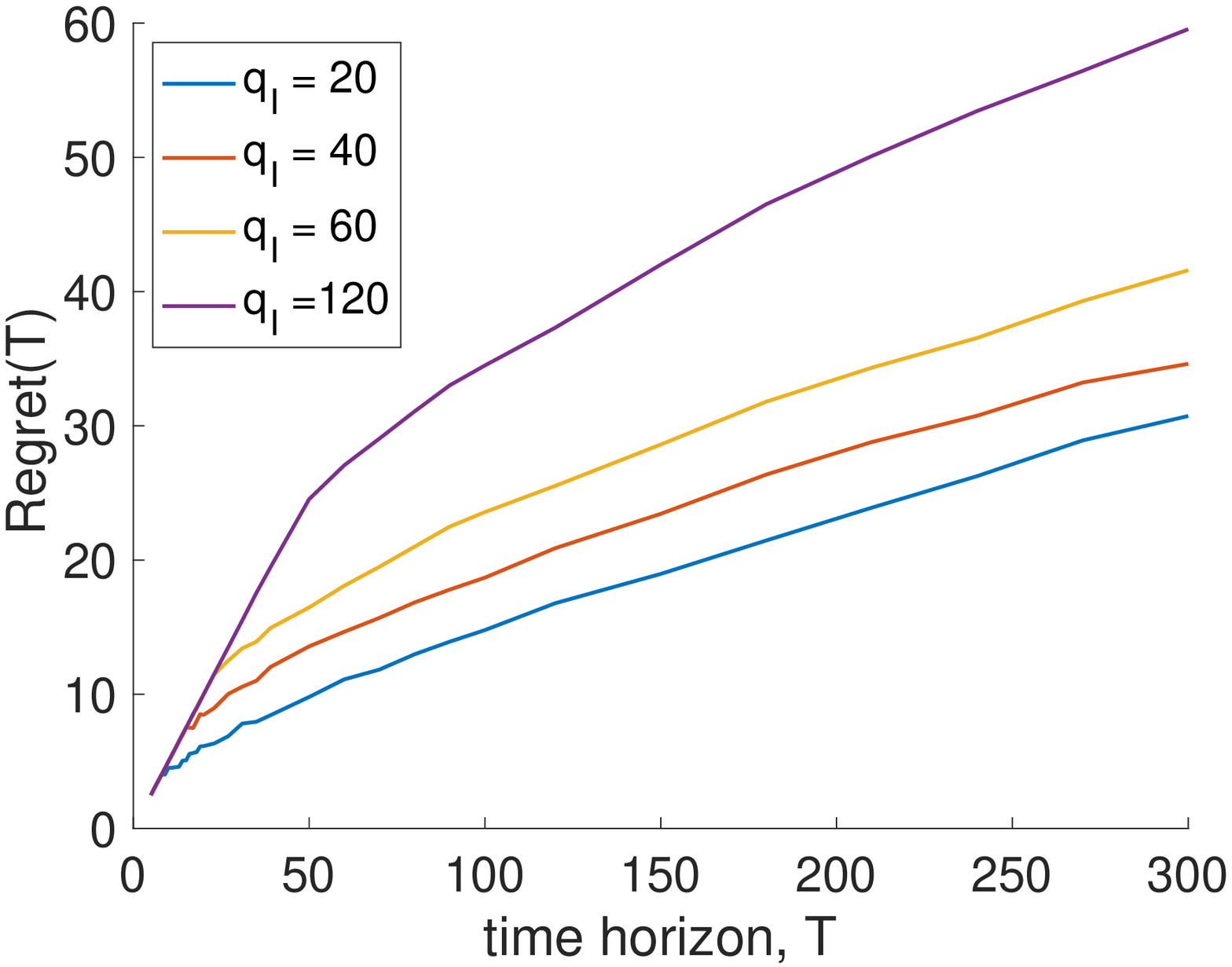}
\includegraphics[width=8cm]{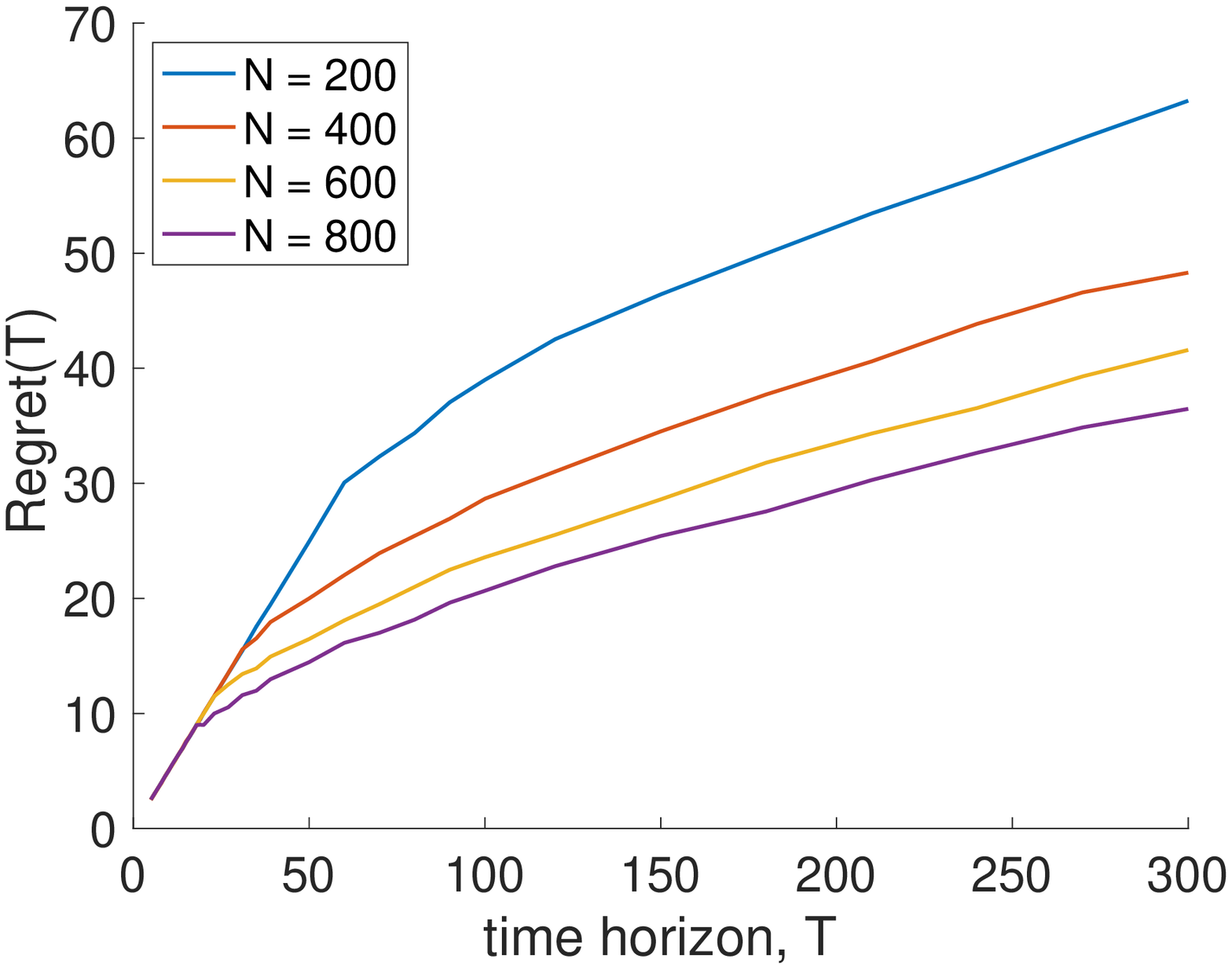}
\caption{Simulated performance for Algorithm \textsc{Item-Item}. System parameters are (a) $N=600$ and $\qu = 100$ and (b) $\qi = 60$ and $\qu = 100$.}
\label{fig:Item}
\end{figure}

\section{User-user algorithm and analysis}
\label{s:uu-upper}
In this section, we describe a version of user-user CF and then analyze it within the latent variable model introduced in Section~\ref{sec:model}.

\subsection{Algorithm}\label{ss:userAlg}
Pseudocode for algorithm \textsc{User-User} appears as Algorithm \ref{alg:user}. In Step 1, random items are recommended to all of the users. The ratings of these items are used to construct a partition $\{\Pc_k\}_k$ of users that recovers the user types correctly with high probability. In Step 2, users are recommended new random items (\emph{exploration}) until an item is liked. If the user is in group $\Pc_k$ of the partition, the item is added to a set $\mathcal{S}_k$ of items to be recommended to all other users in the same partition (\textit{exploitation}). Step 2 (find and recommend items) is repeated indefinitely. 

\begin{algorithm}[ht]
\caption{\textsc{User-User}($T, \qu, N$) }\label{alg:user}
\begin{algorithmic}[1]
\Statex \emph{Step 1: partition users}
\State $\epsilon \gets {1}/{N}$, $\thr\gets \lceil2\log({\qu^2}/{\epsilon})\rceil$
\For{$t = 1, \cdots, \thr$}
\State Pick random item $i$
\State $a_{u,t} \gets i,$ for all $u\in [N]$
\EndFor
\State Partition users into fewest possible groups such that each group agrees on all items.  Let $\tauh_U(u)\in [\qu]$ be the label of user $u$'s partition. \label{Line:UU-partition1}
\State $\Pc_k=\{u\in[N]:\tauh_U(u)=k\},$ for all $k\in[\qu]$ \label{Line:UU-partition2}
\Statex
\Statex \emph{Step 2: find and recommend items}
\State $\mathcal{S}_{\tu}\gets\varnothing,$ for all $\tu \in [\qu]$
\For{$t= \thr+1, \cdots, T$}
\For{$u\in [N]$}
\If{$\mathcal{S}_{\tauh_U(u)}\setminus\{a_{u,1},\dots,a_{u,t-1}\}\neq \varnothing $ (i.e., $u$ has not rated all items in $\mathcal{S}_{\tauh_U(u)}$) } 
\State $a_{u,t}\gets$ an unrated item in $\mathcal{S}_{\tauh_U(u)}$ (\textit{exploit})
\Else     
\State $a_{u,t}\gets$ random item not rated by any user (\textit{explore}) 
\If{$L_{u,a_{u,t}}=+1$} 
\State $ \mathcal{S}_{\tauh_U(u)} \gets \mathcal{S}_{\tauh_U(u)} \cup \{a_{u,t}\}$
\label{line:user-constructS}
\EndIf
\EndIf
\EndFor
\EndFor
\end{algorithmic}
\end{algorithm}
\begin{remark}
\label{rem:UU-error}
 Our model assumes that users of the same type have identical ratings. Hence, users of the same type are always in the same group after partitioning. However, due to random sampling of the items in exploration, users from different types can have identical ratings for the items recommended in Step~1, in which case they will end up in the same partition. 
It follows that the total number of groups in the user partition is at most $\qu$.
\end{remark}
We make a few additional remarks regarding the algorithm:
\begin{itemize}
\item The labeling of user groups in the partitioning step is arbitrary (and may be different from the similarly arbitrary labeling of user types).
\item In Step 2, the sets of items $\{\S_{\tu}\}$ at each time contain the items exploitable by users in the $\tu$-th group in the partition. The algorithm predicts that all users in the $\tu$-th group like items in $\S_{\tu}$.

\item The algorithm takes $T$, $\qu$, and $N$ as input. As mentioned in Section~\ref{sec:model}, a doubling trick described in Appendix~\ref{s:any-time-reg-alg} converts the algorithm to one oblivious to $T$. It is also fairly straightforward to modify the algorithm to be adaptive to $\qu$ The adaptive algorithm initializes with a trivial partition placing all users in one group. The algorithm subsequently refines the partition whenever a user's feedback indicates that they have been grouped incorrectly. We chose not to do so since it complicates the analysis.
\end{itemize}

\begin{theorem}\label{t:user-user}
Consider the model introduced in Section~\ref{sec:model} with $N$ users, $\qu$ user types and $\qi$ item types. Let $\thr=\lceil 2\log(N\qu^2)\rceil$. If $\qi>18\thr$, then \textsc{User-User} achieves regret
	\[\reg(T)\leq\begin{cases} \frac{1}{2}T\,, \quad &\text{if } T\leq \thr\\[8pt]
 \frac{1}{2} \thr + \frac{2\qu+2}{N} T + 2	\,, \quad &\text{if } T>\thr\,.\end{cases}\]
\end{theorem}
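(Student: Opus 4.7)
The plan is to split $\reg(T)$ into the regret accumulated in Step~1 ($t=1,\ldots,\thr$) and in Step~2 ($t=\thr+1,\ldots,T$) and handle them separately. Step~1 is trivial: at each time a single uniformly random item is recommended to every user, and since the preference entries $\xi_{\tu,\ti}$ are i.i.d.\ uniform $\pm 1$, each such recommendation is disliked with probability exactly $1/2$ by each user. So the Step~1 contribution to $\reg(T)$ equals $\thr/2$ exactly (or $T/2$ when $T\le\thr$, immediately yielding the first branch of the claimed bound).

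Next I would show that the partition produced in Line~\ref{Line:UU-partition1} correctly separates users by type with high probability. Let $\Ec$ be this event. Users of the same type agree on every item, so $\bar{\Ec}$ occurs only if some pair of users with distinct types $\tu\ne\tu'$ agrees on all $\thr$ Step-1 items. For such a pair, conditioning on the set $S$ of distinct item types appearing among the $\thr$ samples, the agreement probability equals $\Ex[2^{-|S|}]$. Under $\qi>18\thr$, each fresh sample is of a previously-unseen type with probability at least $17/18$, which gives
\[
\Ex[2^{-|S|}]\;\le\;(19/36)^{\thr}\;<\;2^{-\thr/2}.
\]
A union bound over the at most $\binom{\qu}{2}$ pairs of types, combined with $\thr=\lceil 2\log(N\qu^2)\rceil$, yields $\Pr[\bar{\Ec}]\le 1/(2N)$. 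On $\bar{\Ec}$ regret is trivially at most $T$, contributing $T/(2N)$ to $\reg(T)$, which is absorbable into the $(2\qu+2)T/N$ term.

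Condition on $\Ec$: each group $\Pc_{\tu}$ then consists of users of a single type, so every exploited item is guaranteed to be liked and Step-2 dislikes come only from explorations. Since each exploration is disliked with probability at most $1$, it suffices to control the total number of explorations. Let $E_u$ be the number of times user $u$ explores in Step~2, $E_{\tu}=\sum_{u\in\Pc_{\tu}} E_u$, $L_{\tu}$ the number of liked explorations in $\Pc_{\tu}$, and $D_u(t)=|\S_{\tu}(t)\setminus\{a_{u,1},\ldots,a_{u,t}\}|$ the ``debt'' of $u$. The algorithm explores exactly when $D_u=0$, and a direct accounting---``one action per user per time step'', ``each like adds $1$ to $|\S_{\tu}|$ and hence $n_{\tu}-1$ to $\sum_u D_u$'', ``each exploit consumes $1$ from some $D_u$''---yields the pathwise identity
\[
n_{\tu}(T-\thr)\;=\;E_{\tu}\;+\;(n_{\tu}-1)L_{\tu}\;-\;\sum_{u\in\Pc_{\tu}}D_u(T).
\]
Because each exploration is of a fresh item whose type is uniform on $[\qi]$, a marginal symmetry argument (the unconditional probability that the fresh $\xi$-entry equals $+1$ averages to $1/2$) gives $\Ex[L_{\tu}\mid\Ec]=\Ex[E_{\tu}\mid\Ec]/2$, so taking expectations and rearranging produces
\[
\Ex[E_{\tu}\mid\Ec]\;=\;\frac{2\,n_{\tu}(T-\thr)\;+\;2\,\Ex\!\bigl[\sum_{u\in\Pc_{\tu}} D_u(T)\bigm|\Ec\bigr]}{n_{\tu}+1}.
\]

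The crux is then to upper bound $\Ex[\sum_{u\in\Pc_{\tu}} D_u(T)\mid\Ec]=O(n_{\tu}^2)$ uniformly in $T$. My approach is a Lyapunov/queue-stability argument on $S_{\tu}(t)=\sum_u D_u(t)$: using the identity $\Ex[S_{\tu}(t)-S_{\tu}(t-1)\mid\mathcal{F}_{t-1}]=\tfrac{n_{\tu}+1}{2}E^{(t)}-n_{\tu}$, where $E^{(t)}=|\{u:D_u(t-1)=0\}|$, the drift is negative whenever $E^{(t)}$ is small, which happens precisely when $S_{\tu}$ is spread out across many users carrying positive debt. Standard queueing/Foster--Lyapunov arguments then bound the stationary mean by $O(n_{\tu}^2)$. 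Substituting into the displayed identity and using $\tfrac{2n_{\tu}}{n_{\tu}+1}\le 2$ gives $\Ex[E_{\tu}\mid\Ec]\le 2(T-\thr)+O(n_{\tu})$; summing over the at most $\qu$ groups (with $\sum_{\tu}n_{\tu}\le N$) yields $\sum_u \Ex[E_u\mid\Ec]\le 2\qu(T-\thr)+O(N)$. Bounding Step-2 dislikes per user by $E_u$ and dividing by $N$ produces a Step-2 contribution of at most $2\qu(T-\thr)/N+O(1)$, which together with the $\thr/2$ from Step~1 and the $T/(2N)$ slack from $\bar{\Ec}$ matches the stated bound $\thr/2+(2\qu+2)T/N+2$ after absorbing constants. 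The hard part is the debt bound itself: naive estimates like $\sum_u D_u\le (n_{\tu}-1)L_{\tu}$ feed $E_{\tu}$ back into itself and are circular, so a genuine stochastic-stability argument on the coupled process $S_{\tu}$ is required.
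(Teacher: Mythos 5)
Your skeleton (Step~1 costs $\thr/2$; the partition is correct with probability $1-O(1/N)$; on that event all Step-2 regret comes from explorations) matches the paper, and your partition argument is a valid variant of Lemma~\ref{l:user-partition-prob}. The conservation identity $n_{\tu}(T-\thr)=E_{\tu}+(n_{\tu}-1)L_{\tu}-\sum_u D_u(T)$ is also correct. But the load-bearing step of your Step-2 analysis --- the claim $\Ex[\sum_{u\in\Pc_\tu} D_u(T)\mid\Ec]=O(n_{\tu}^2)$ uniformly in $T$ --- is asserted, not proved, and the route you sketch does not obviously deliver it. The drift you compute, $\tfrac{n_{\tu}+1}{2}E^{(t)}-n_{\tu}$, is negative only when at most one user in the group is exploring, and $E^{(t)}$ (the number of zero-debt users) is not a function of the Lyapunov candidate $S_{\tu}$: a large total debt is compatible with many users sitting at zero debt, so ``standard Foster--Lyapunov'' does not apply to $S_\tu$ directly. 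One would need a genuinely new argument on the full debt vector (whose coordinates move in near-lockstep, since all non-explorers receive identical increments), and without it the proof is incomplete. A secondary gap: $\Ex[L_{\tu}\mid\Ec]=\Ex[E_{\tu}\mid\Ec]/2$ is not exact. Conditional on the history, a fresh exploratory item is liked with probability $\big(\#\{\text{liked types}\}+\tfrac12\#\{\text{unseen types}\}\big)/\qi$, which is history-dependent and correlated with the decision to explore (a user type that dislikes most item types explores more and likes less). The paper only gets the one-sided bound $p\ge 1/3$ by conditioning on the event $C$ that every user type likes at least $\qi/3$ item types, and you would need the same device, which changes the constants your accounting relies on.

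The paper avoids the residual-debt question entirely with a deterministic, pathwise observation (Claim~\ref{cl:ul-liked-explor}) that you should adopt: the number of \emph{liked} explorations in group $\tu$ equals $|\S_{\tu}^T|$, and whenever $|\S_{\tu}^{t-1}|\ge t$ every user in the group still has an unrated item in $\S_{\tu}$ (each user has rated at most $t-1$ items in total), so nobody in the group explores at time $t$ and $\S_{\tu}$ stops growing; hence $|\S_{\tu}^T|\le T+|\Pc_{\tu}|$ on every sample path. Combined with ``disliked explorations $\le 2\times$ liked explorations in expectation on $C$,'' this yields the $2(\qu T+N)$ bound on disliked explorations directly, with no need to control where the unconsumed debt sits at time $T$.
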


The assumption $\qi>18\thr$ ensures that with probability $1-o(\frac1N)$ for each user type, there is at least one item type that is liked. 
This assumption also  ensures that with probability $1-o(\frac1N)$, for any pair of user types, there is at least one item type which is rated differently by them. If there is no such item type, then  the two user types rate everything similarly and are indistinguishable.

The  theorem indicates that up until time $\thr$, the algorithm is making meaningless (randomly chosen independent of feedback) recommendations. Random recommendations have probability half of being liked, hence incur regret with slope $1/2$.  After that, the algorithm achieves the asymptotic slope indicating that on average $\qu$ recommendations out of $N$ are random. 
The simplified version of this theorem in Section \ref{s:results} is obtained using $2\log N<\thr <7\log N$ (since $\qu\leq N$). We also pick the constant $C$ large enough so that $\frac{\thr}{2} + \frac{2\qu+2}{N}T + 2\leq C(\log N + \frac{\qu}{N}T)$ for $T>\thr$.

\subsection{Proof of Theorem~\ref{t:user-user}}
We first bound the probability that the partition created by the algorithm is correct in Lemma~\ref{l:user-partition-prob}. Next, to prove the theorem we will show that conditioned on the partition being correct, the number of exploratory recommendations (and hence the regret) is upper bounded.

\begin{lemma} \label{l:user-partition-prob}
Let $B_{uv}= \{\ident_{\{\tauh_U(u)=\tauh_U(v)\}}= \ident_{\{\tau_U(u)=\tau_U(v)\}}\}$ be the event that users $u$ and $v$ are partitioned correctly with respect to each other in Step~1 of \textsc{User-User}. Let $\epsilon$ and $\thr$ be as defined there. If $\qi >4 \thr$, then $\Pr[B_{uv}^c]\leq \frac{2\epsilon}{\qu^2}$. It follows that if $B = \bigcap B_{uv}$ is the event that all users are partitioned correctly, then $\Pr[B]> 1-\epsilon$.
\end{lemma}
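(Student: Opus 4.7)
The plan is to condition on the two user types. If $\tau_U(u)=\tau_U(v)$ then $L_{u,i}=L_{v,i}$ for every item $i$, so the partition of Step~1 necessarily places $u$ and $v$ in the same group and $\Pr[B_{uv}^c]=0$. Otherwise, writing $\tu_1=\tau_U(u),\tu_2=\tau_U(v)$, the event $B_{uv}^c$ is exactly that the two user types agree on each of the $\thr$ i.i.d.\ uniformly chosen items. Setting
\[
A=\{\ti\in[\qi]:\xi_{\tu_1,\ti}=\xi_{\tu_2,\ti}\},
\]
the random item types are i.i.d.\ $\text{Uniform}[\qi]$ and independent of $\Xi$, so conditional on $\Xi$ the chance that a single random item has type in $A$ is $|A|/\qi$, and the chance that all $\thr$ items do is $(|A|/\qi)^{\thr}$. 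Averaging over $\Xi$ gives
\[
\Pr[B_{uv}^c \mid \tu_1\neq\tu_2]=\Ex\bigl[(|A|/\qi)^{\thr}\bigr],
\]
and for any fixed $\tu_1\neq\tu_2$, the independence of the $\xi$-entries gives $|A|\sim\mathrm{Bin}(\qi,1/2)$.

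Next I would control this expectation via a moment-generating-function bound. Writing $|A|/\qi=(1+Z)/2$ with $\qi Z=\sum_{j=1}^{\qi}\varepsilon_j$ for i.i.d.\ Rademacher $\varepsilon_j$, the pointwise inequality $(1+z)^{\thr}\le e^{\thr z}$ for $z\ge -1$ combined with $\Ex[e^{\thr Z}]=\cosh(\thr/\qi)^{\qi}\le e^{\thr^2/(2\qi)}$ yields
\[
\Ex\bigl[(|A|/\qi)^{\thr}\bigr]\le 2^{-\thr}\,e^{\thr^2/(2\qi)}.
\]
Substituting the hypothesis $\qi>4\thr$ replaces $\thr^2/(2\qi)$ by at most $\thr/8$, and then $\thr\ge 2\log(\qu^2/\epsilon)$ gives, after a short logarithmic calculation, $2^{-\thr}e^{\thr/8}\le 2\epsilon/\qu^2$, which proves the per-pair bound $\Pr[B_{uv}^c]\le 2\epsilon/\qu^2$.

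For the global conclusion, I would observe that whether the event $B_{uv}^c$ occurs depends on $u,v$ only through the pair $(\tau_U(u),\tau_U(v))$ together with the random items and $\Xi$. Consequently $B^c$ equals the union, over the at most $\binom{\qu}{2}\le\qu^2/2$ unordered pairs of user types $\tu_1\neq \tu_2$, of the event that types $\tu_1,\tu_2$ agree on all $\thr$ items. A union bound then gives $\Pr[B^c]\le \binom{\qu}{2}\cdot 2\epsilon/\qu^2\le\epsilon$ as claimed.

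The main obstacle is the concentration step. A naive approach that conditions on the $\thr$ sampled items having pairwise distinct types only succeeds when $\qi\gg\thr^2$, far more than the hypothesis $\qi>4\thr$, because otherwise $\Pr[\text{some pair of items shares a type}]$ is not negligible and the conditioning introduces a term like $(1/2)^{K}$ with $K$ possibly as small as $1$. The MGF/Chernoff argument sidesteps this by directly bounding $\Ex[(|A|/\qi)^{\thr}]$ and absorbs the loss into the factor $2$ that separates $\epsilon/\qu^2$ from the stated bound $2\epsilon/\qu^2$. Care is also needed in the union bound: bounding over all $\binom{N}{2}$ user pairs would not suffice (it would require $\qu\gtrsim N$), so the reduction to $\binom{\qu}{2}$ type pairs is essential.
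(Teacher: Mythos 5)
Your proof is correct, and it reaches the per-pair bound by a genuinely different route than the paper. The paper handles the concentration step combinatorially: by a balls-and-bins lemma, with probability at least $1-\exp(-\thr/2)\le \epsilon/\qu^2$ at least $\thr/2$ of the $\thr$ sampled items have pairwise distinct types, and conditional on that, two distinct user types agree on those items with probability at most $2^{-\thr/2}\le\epsilon/\qu^2$; adding the two failure probabilities gives $2\epsilon/\qu^2$. You instead compute $\Pr[B_{uv}^c\mid \tau_U(u)\neq\tau_U(v)]=\Ex[(|A|/\qi)^{\thr}]$ exactly, with $|A|\sim\mathrm{Bin}(\qi,1/2)$, and control it by the MGF bound $2^{-\thr}e^{\thr^2/(2\qi)}$; under $\qi>4\thr$ and $\thr\ge 2\log(\qu^2/\epsilon)$ this is at most $(\epsilon/\qu^2)^{2-(\log_2 e)/4}\le 2\epsilon/\qu^2$ since $\epsilon/\qu^2\le 1$, so the numerics check out. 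One small note: your closing remark criticizes conditioning on \emph{all} $\thr$ items having distinct types, which would indeed need $\qi\gg\thr^2$; the paper avoids this by conditioning only on \emph{at least half} of the items having distinct types, which already works under $\qi>4\thr$, so both approaches are viable under the stated hypothesis. Your MGF argument is arguably cleaner in that it avoids the auxiliary balls-and-bins lemma and yields a slightly sharper exponent, while the paper's argument is more elementary and reuses a lemma needed elsewhere (e.g., in Claim~\ref{cl:Ftilde}). The reduction of the global union bound to the $\binom{\qu}{2}$ type pairs is handled identically in both arguments and, as you note, is essential.
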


\begin{proof}
	As observed in Remark~\ref{rem:UU-error}, users from the same partition rate items identically. Therefore the only way an error in partitioning occurs is if users of different types are grouped together.  This happens when two users rate all exploratory items identically in Step~1. 
In Step~1, the first $\thr$ items recommended to all users are chosen uniformly at random independent of feedback, so the types of these items are uniformly distributed on $[\qi]$. Let $\mathrm{s}$ be the number of items with distinct item types among the $\thr$ exploratory items from Step~1. 
This is a balls and bins scenario with $\thr$ balls into $\qi$ bins, and Lemma~\ref{l:ballsbins} states that
 if $\qi> 4\thr$, then $\Pr[\mathrm{s}< \thr/2]\leq \exp(-\thr/2)\leq \epsilon/\qu^2$. By symmetry, each of the types of the $\mathrm{s}$ items with distinct types is uniformly distributed on $[\qi]$. 
 
Since all users rate the same items and users of the same type have identical preferences, as far as the lemma is concerned we only consider how the user types themselves rate items in Step~1.
Two user types $\tu\neq\tu'$ rate 
 $\mathrm{s}$ independently chosen
 items of distinct types in the same way with probability $2^{-\mathrm{s}}$. On the event
$\mathrm{s}\geq \thr/2$,  we have $2^{-\mathrm{s}}<2^{-\thr/2}\leq\epsilon/\qu^2$.  

The above two statements show that for users $u$ and $v$ with $\tau_U(u)=\tu$ and $\tau_U(v)=\tu'$,
 $$\Pr[(B_{uv})^c]
 \leq 
 \Pr\big[\mathrm{s}< \thr/2\big] + \Pr\big[(B_{uv})^c\,\big|\, \mathrm{s}\geq \thr/2\big]
 \leq  
 2\epsilon/\qu^2\,.$$
The second statement in the lemma follows by union bounding over ${\qu \choose 2}\leq \qu^2/2$ pairs of user types.
\end{proof}

\begin{proof}[Proof of Theorem~\ref{t:user-user}]
For $t\leq \thr$, the algorithm recommends random items chosen independently of feedback to all users. So at these times $\Pr[L_{u,a_{u,t}}=-1]=1/2$ for all users $u\in[N]$. It follows that, for $T\leq \thr$,
\begin{equation}\label{e:userPartitionRegret}
\Exp{\sum_{t=1}^T\frac{1}{N} \sum_{u=1}^N \ident[L_{u,a_{u,t}}=-1]}
   = \sum_{t=1}^T \frac{1}{N} \sum_{u=1}^N  \Pr[L_{u,a_{u,t}}=-1]=   \frac{T}{2}\,.
\end{equation}
Now consider the case $T>\thr$. At $t=\thr,$ by Lemma \ref{l:user-partition-prob}, the partitioning step recovers the user types correctly with probability at least $1-\epsilon$, \textit{i.e.,} $\Pr[B]> 1-\epsilon$.  On event $B$ all users in a partition have the same type, so by construction of the sets $\mathcal{S}_{\widehat{\tau}_U(u)}$ in Line~\ref{line:user-constructS} of \textsc{User-User}, items in $\mathcal{S}_{\widehat{\tau}_U(u)}$ are liked by at least one user of the same type as $u$ and therefore also by $u$, and
\begin{equation}\label{eq:UU-upper-noiseless}
\Exp{\sum_{t=r+1}^T\sum_{u\in[N]}  \ident\big[\,L_{u,a_{u,t}}=-1, a_{u,t} \in \mathcal{S}_{\widehat{\tau}_U(u)}\,\big]\Bigg| B}  = 0\,. 
\end{equation}
Because there are $TN$ terms in the sum and $\Pr[B^c]\leq \eps$, it follows that
\begin{equation}\label{e:uuEq1}
\Exp{\sum_{t=r+1}^T\sum_{u\in[N]}  \ident\big[\,L_{u,a_{u,t}}=-1, a_{u,t} \in \mathcal{S}_{\widehat{\tau}_U(u)}\,\big]} \leq T N\epsilon=T\,.
\end{equation}

Now, we need to find an upper bound for the expected number of disliked exploration recommendations in Step~2 of the algorithm,  $\Exp{\sum_{t=r+1}^T\sum_{u\in[N]}  \ident\big[\,L_{u,a_{u,t}}=-1, a_{u,t} \notin \mathcal{S}_{\widehat{\tau}_U(u)}\,\big]}$.

 It will be useful to relate the expected number of liked and disliked explorations. To this end, we consider the event that
every user type likes at least $1/3$ of the item types: define the event 
$$
C = \Big\{\sum_{\ti\in[\qi]} \ident[\xi_{\tu,\ti}=+1]> \qi/3\text{ for all }\tu\in[\qu]
\Big\}\,.$$
A Chernoff bound (Lemma~\ref{l:Chernoff}) applied to the i.i.d. $\xi$ variables gives $\Pr[C^c]\leq \qu\exp(-\qi/36)\leq 1/N$, where the last inequality due to $\qi>18\thr$. Conditioning on event $C$, we get
\begin{align}
\Ex\bigg[\sum_{t=r+1}^T\sum_{u\in[N]}& \ident\big[\,L_{u,a_{u,t}}=-1, a_{u,t} \notin \mathcal{S}_{\widehat{\tau}_U(u)}\,\big]\bigg] \notag\\
&\leq \Exp{\sum_{t=r+1}^T\sum_{u\in[N]}  \ident\big[\,L_{u,a_{u,t}}=-1, a_{u,t} \notin \mathcal{S}_{\widehat{\tau}_U(u)}, C\,\big]} + 
NT \Pr(C^c)
\,.\label{eq:ul-disliked-exploration}
\end{align}

To obtain an upper bound for the first term, in Claim~\ref{cl:ul-liked-explor} below we will upper bound the expected number of exploration recommendations that were \emph{liked}, and on event $C$ this will provide also an upper bound for the expected number of exploration recommendations that were disliked. The number of liked explorations is easier to deal with, because of a self-limiting effect: these result in items added to sets $\{\mathcal{S}_k\}$ for exploitation, and exploration only happens when there are not enough items to be exploited. 

We now relate the expected number of liked and disliked explorations. At $t>\thr$ if $a_{u,t} \notin \mathcal{S}_{\widehat{\tau}_U(u)}$, then it means the item is an exploratory recommendation and thus $a_{u,t}$ is an independent new random item with uniformly random type $\tau_I(a_{u,t}) \in [\qi]$. 
 Hence, using the definition of event $C$,  
\begin{equation} \label{e:selfBounding1}
 p:=\Pr\big[\,L_{u,a_{u,t}}=+1| a_{u,t} \notin \mathcal{S}_{\widehat{\tau}_U(u)}, C\,\big]\geq 1/3
\end{equation}
and $1-p\leq 2p$. It follows that
\begin{align} \label{e:selfBounding}
 \Pr\big[\,L_{u,a_{u,t}}=-1,a_{u,t} \notin \mathcal{S}_{\widehat{\tau}_U(u)}, C\,\big]\leq 2 \Pr\big[\,L_{u,a_{u,t}}=+1, a_{u,t} \notin \mathcal{S}_{\widehat{\tau}_U(u)}, C\,\big]. 
\end{align}
This means that to bound the first term in 
\eqref{eq:ul-disliked-exploration}
it suffices to bound the contribution from the sum with $L_{u,a_{u,t}} = +1$, as derived in the following claim. 

\begin{claim}\label{cl:ul-liked-explor} On event $C$, the number of liked `explore' recommendations (line~13 of Algorithm~\ref{alg:user}) by time $T$ can be bounded as
	\[\sum_{t=\thr+1}^T\sum_{u\in[N]} \ident[L_{u,a_{u,t}}=+1, a_{u,t} \notin \mathcal{S}_{\widehat{\tau}_U(u)}, C] \leq T\qu + N.\]
\end{claim}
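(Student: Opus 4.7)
The plan is to prove the bound deterministically (so that it holds on event $C$ in particular) by establishing a simple structural invariant of Algorithm~\ref{alg:user}: in every time step $t \in \{\thr+1, \ldots, T\}$, each set $\mathcal{S}_k$ grows by at most one item. Granting this invariant, the total number of liked exploratory recommendations made by users in a fixed partition $\Pc_k$ during $(\thr, T]$ is at most $T - \thr \leq T$, and summing over the at most $\qu$ non-empty partitions (Remark~\ref{rem:UU-error}) immediately yields $\qu T \leq T\qu + N$.

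To verify the invariant, I would use the sequential inner loop over $u \in [N]$ in Algorithm~\ref{alg:user}, combined with the requirement in line~13 that the exploratory item be random among items not yet rated by any user. Fix $t$ and $\Pc_k$, and let $u \in \Pc_k$ be the first user in $\Pc_k$ (in the processing order) with a liked exploratory recommendation at time $t$, so that item $i := a_{u,t}$ is appended to $\mathcal{S}_k$ in line~14. For any subsequent $v \in \Pc_k$ at the same $t$, I would argue $i \in \mathcal{S}_k \setminus \{a_{v,1}, \ldots, a_{v,t-1}\}$: since $i$ was chosen by $u$ to be unrated by any user, and $v$ has not yet acted at time $t$, $v$ has not rated $i$. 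Thus the conditional in line~10 is satisfied for $v$, forcing $v$ to exploit rather than explore, which rules out any further liked exploration in $\Pc_k$ at time $t$.

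I expect no substantive obstacle: the argument is a short combinatorial observation. The only subtle point is the interpretation of line~13 within the sequential inner loop---that the pool of ``items not rated by any user'' is updated as each user is processed, so that the newly explored item $i$ is fresh to every later user in $\Pc_k$. Notably, event $C$ plays no role in establishing this bound; its presence in the claim merely aligns the statement with how the bound is used downstream, where the self-bounding inequality from \eqref{e:selfBounding} converts this upper bound on liked explorations into a matching bound on disliked explorations.
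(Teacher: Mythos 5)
Your proposed invariant---that each set $\mathcal{S}_k$ grows by at most one item per time step---is false under the model's semantics, and this is a genuine gap rather than a cosmetic one. The model in Section~\ref{sec:model} requires $a_{u,t}=f_{u,t}(\H_{t-1},\zeta_{u,t})$: every user's recommendation at time $t$ is a function of the history through the end of time $t-1$ (plus independent randomness), so user $v$'s decision to explore or exploit at time $t$ cannot depend on whether user $u$ liked an item at the same time step $t$. The nested for-loop in the pseudocode is just a way of listing the $N$ simultaneous recommendations, not a license for within-time-step information flow. Consequently, whenever $\mathcal{S}_k^{t-1}$ is exhausted for all users in $\Pc_k$ --- which happens, for instance, at $t=\thr+1$ when $\mathcal{S}_k=\varnothing$ --- \emph{every} user in $\Pc_k$ explores at time $t$, and on event $C$ each liked explored item (probability at least $1/3$ each) is added to $\mathcal{S}_k$, so the set can grow by up to $|\Pc_k|$ in a single step. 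Your sequential argument (``$v$ has not yet acted at time $t$, so $v$ sees $i\in\mathcal{S}_k$ and must exploit'') is exactly the step that the model forbids.

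The additive $N$ in the claimed bound $T\qu+N$ exists precisely to absorb this effect, which is a signal that the one-per-step invariant cannot be what is intended. The paper's actual argument is: the increment $|\S_{\tu}^t|-|\S_{\tu}^{t-1}|$ is $0$ whenever $|\S_{\tu}^{t-1}|\geq t$ (since then every user in $\Pc_\tu$ still has an unrated item in $\S_\tu^{t-1}$ and exploits), and is at most $|\Pc_{\tu}|$ otherwise; letting $t^*$ be the last time with $|\S_{\tu}^{t-1}|<t$, one gets $|\S_{\tu}^T|\leq |\S_\tu^{t^*-1}|+|\Pc_\tu|\leq T+|\Pc_{\tu}|$, and summing over the at most $\qu$ partitions yields $\qu T+\sum_\tu|\Pc_\tu|=\qu T+N$. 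You would need to replace your invariant with an argument of this self-limiting type (the set only grows at times when it is smaller than $t$) to close the gap.
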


\begin{proof}
		For user partition $\tu$ and time $t$, define $\S_{\tu}^t$ to be the set of items denoted by $\S_{\tu}$ in the algorithm at time $t$, after making the time-step $t$ recommendations.
Item $a_{u,t}$ is added to $\S_{\tu}$ precisely on the event 
$\{t>\thr, \tauh_U(u)=\tu, a_{u,t} \notin \mathcal{S}^{t-1}_{k}, L_{u,a_{u,t}}=+1\}$. 
Therefore, dropping $C$ from the indicator,
\begin{equation}
\label{e:userTemp1}
\sum_{t=\thr+1}^T\sum_{{u: \tauh_U(u)=\tu }} \ident[L_{u,a_{u,t}}=+1, a_{u,t} \notin \mathcal{S}^{t-1}_k,C]\leq\sum_{t=\thr+1}^T\sum_{{u: \tauh_U(u)=\tu }} \ident[L_{u,a_{u,t}}=+1, a_{u,t} \notin \mathcal{S}^{t-1}_k] = |\S_{\tu}^T|.
\end{equation}
	The rest of the proof entails bounding $|\S_{\tu}^T|$. The number of items added to $\S_{\tu}$ at time $t$ is 
	\begin{align}\label{e:userTemp3}
|\S_{\tu}^t|-|\S_{\tu}^{t-1}| &= \sum_{u:\tauh_U(u)=\tu } \ident[L_{u,a_{u,t}}=+1, a_{u,t} \notin \mathcal{S}^{t-1}_k] 
\leq \sum_{u:\tauh_U(u)=\tu } \ident[a_{u,t} \notin \mathcal{S}^{t-1}_k]\,.
\end{align}
	If $|\S_{\tu}^{t-1}|\geq t$, then 
	$\S_{\tu}^{t-1}\setminus \{a_{u,1},\cdots,\cdots,a_{u,t-1}\}\neq \varnothing$.
	 Meanwhile, at time $t$, the exploration event (recommending
	 $a_{u,t} \notin \mathcal{S}^{t-1}_{\widehat{\tau}_U(u)}$ 
	 in line~13) happens only if there are no items left in $\S^{t-1}_{\tau_U(u)}$ for user $u$ to exploit, i.e., 
	$\S_{\tau_U(u)}^{t-1}\setminus \{a_{u,1},\cdots,a_{u,t-1}\}=\varnothing$. In this way, $|\S_{\tu}^{t-1}|\geq t$ guarantees that there is an exploitable item at time $t$ for each user in $\Pc_k$. Consequently, 
	\begin{equation}\label{e:userTemp2}
	\sum_{\substack{u: \tauh_U(u)=\tu }} 
	\ident[a_{u,t} \notin \mathcal{S}^{t-1}_{\widehat{\tau}_U(u)}]\quad 
	\begin{cases} = 0, \quad &\text{if } |\S_{\tu}^{t-1}|\geq t
	 \\
	\leq 
	|\Pc_{\tu}|, \quad &\text{otherwise.}
	\end{cases}  
	\end{equation}
	The bound $|\Pc_{\tu}|$ is due to the sum having $|\Pc_{\tu}|$ terms, each upper bounded by 1.

	Let $t^* = \max\{t: \thr\leq t< T,\, |\calS_{k}^{t-1}|<t\}$ be the last time for which we are \emph{not} guaranteed (based on the reasoning before the last displayed eqn.) to have an exploitable item. Note that the set over which we take the maximum is nonempty if $T>\thr$ since $|\calS_{k}^{\thr}|=0$. 
It follows that 
	 \begin{align*}|\S_{\tu}^T| 
	 & = |\S_{\tu}^{t^* -1}| +\sum_{t=t^*-1}^{T-1} (|\S_{\tu}^{t+1}|-|\S_{\tu}^{t}| )
	 \stackrel{(a)}{\leq} |\S_{\tu}^{t^*-1}| + (|\S_{\tu}^{t^*}| - |\S_{\tu}^{t^*-1}| )
	 \stackrel{(b)}{\leq} T + |\Pc_{\tu}|\,.
	 \end{align*}
Since for $t^*<t<T$ we have $|\calS_{k}^{t-1}|\geq t$, by~\eqref{e:userTemp3} and~\eqref{e:userTemp2} for these times we have $|\S_{\tu}^{t+1}|-|\S_{\tu}^{t}| =0$ . This gives (a) in the above display.  
	 By definition, $|\S_{\tu}^{t^*-1}|<t^*< T$. Inequality (b) uses~\eqref{e:userTemp3} and~\eqref{e:userTemp2} to bound $|\S_{\tu}^{t^*}| - |\S_{\tu}^{t^*-1}|$.

	Note that  $\sum_{\tu\in[\qu]} |\Pc_{\tu}|= N$. Using~\eqref{e:userTemp1} and summing the last displayed inequality over the (at most) $\qu$ partition indices proves the claim. 
\end{proof}

We can now complete the proof of Theorem~\ref{t:user-user}.
By the preceding claim and Equations~\eqref{e:uuEq1},~\eqref{eq:ul-disliked-exploration}, and~\eqref{e:selfBounding} we get
\begin{align*}
\Exp{\sum_{t=r+1}^T\sum_{u\in[N]}  \ident[L_{u,a_{u,t}}=-1]}& 
 = \Exp{\sum_{t=r+1}^T\sum_{u\in[N]}  \ident[L_{u,a_{u,t}}=-1, a_{u,t} \in \mathcal{S}_{\widehat{\tau}_U(u)}]} \\
&\quad + \Exp{\sum_{t=r+1}^T\sum_{u\in[N]}  \ident[L_{u,a_{u,t}}=-1, a_{u,t} \notin \mathcal{S}_{\widehat{\tau}_U(u)}]} \\
&\leq 2(\qu T+ N) + 2T\,.
\end{align*}
For $T>\thr\,,$
we can now bound the regret by combining Equation~\eqref{e:userPartitionRegret} with the previous display:
\begin{align*}
\reg(T) 
 & = \Exp{\sum_{t=1}^r \frac{1}{N} \sum_{u=1}^N \ident[L_{u,t}=-1]}
  +  \Exp{\sum_{t=r+1}^T \frac{1}{N} \sum_{u=1}^N \ident[L_{u,a_{u,t}}=-1]}\\
  &\leq  \frac{1}{2} \thr + \frac{2(\qu+1)}{N} T + 2. & \hfill\qedhere
\end{align*}
\end{proof}

\subsection{User-User Algorithm with Noisy Preferences}
We generalize the result to the scenario in which the feedback to the recommendation system is noisy. In this case, the preference of user $u$ for item $i$ is
\begin{equation}\label{eq:noisemodel}
L_{u,i}=\xi_{\tau_U(u),\tau_I(i)}\cdot z_{u,i}\,,
\end{equation}
where $z_{u,i}$ are i.i.d. random variables with $\Pr[z_{u,i}=+1]=1-\gamma$ and  $\Pr[z_{u,i}=-1]=\gamma$  (we assume $0<\gamma<1/2$). With probability $\gamma$, the preference of user $u$ for item $i$ is flipped relative to the preference of user type $\tau_U(u)$ for item type $\tau_I(i)$ in the preference matrix $\Xi$. 

To accommodate the noisy feedback, we modify the partitioning subroutine in Step~1 of \textsc{User-User} algorithm with \textsc{NoisyUserPartition} given in Algorithm~\ref{alg:partitioning-noisy-user}.
The main modification is that in Lines~\ref{Line:UU-partition1} and~\ref{Line:UU-partition2} of \textsc{User-User} algorithm, users are placed in the same partition if they rate all of the first $\thr$  items similarly. Instead, users are now placed in the same partition if they rate the majority of the first $\thr$ items similarly. The parameters $\lambda$  and $\thr$ are chosen to guarantee that the partitioning over users is consistent with their type with probability greater than $1-\epsilon$.

\begin{algorithm}[ht]
\caption{\textsc{NoisyUserPartition}($T, \qu, N, \gamma$) }\label{alg:partitioning-noisy-user}
\begin{algorithmic}[1]
\State $\epsilon \gets {1}/{N}$, $\lambda \gets \frac{2}{3}(1-2\gamma)^2$, $\thr\gets \lceil \frac{4}{(1-2\gamma)^2} \log(N^2/2\eps)\rceil$, 
\For{$t = 1, \cdots, \thr$}
\State Pick random item $i$
\State $a_{u,t} \gets i,$ for all $u\in [N]$
\EndFor
\For{$u = 1,\cdots, N $}
\For{$v= u+1,\cdots,N$}
\State $g_{u,v}\gets\ident{ \{\sum_{s=1}^{\thr} {L_{u a_{us}} L_{va_{vs}}} \geq \lambda\thr \} }$
\EndFor
\EndFor
\If{There exists a partitioning over users consistent with the variables $g_{u,v}$}\label{Line:UU-Noisy-consistency}
\State Let $\tauh_U(u)\in [\qu]$ be the label of user $u$'s partition. 
\Else
\State
$\tauh_U(u)=1, $ for all $u\in[N]$ (all users are in one partition)
\EndIf
\State $\Pc_k=\{u\in[N]:\tauh_U(u)=k\},$ for all $k\in[\qu]$
\Statex
\end{algorithmic}
\end{algorithm}
\begin{remark}
In Line~\ref{Line:UU-Noisy-consistency}, the algorithm checks whether there is a  partitioning over the users consistent with variables $g_{u,v}$. This is true precisely when the graph with edge set $g_{u,v}$ is a disjoint union of cliques. 
\end{remark}
\begin{remark}
The noisy feedback decreases performance in two ways: partitioning users correctly requires more exploration recommendations, resulting in a larger cold-start time. Additionally, in Step~2, even good exploitation recommendations can be disliked due to noise. The next theorem quantifies these observations.
\end{remark}	
\begin{theorem}\label{t:noisy-user-user}
Consider the model introduced in Section~\ref{sec:model} with $N$ users, $\qu$ user types and $\qi$ item types. Let $\thr=\big\lceil \frac{12}{(1-2\gamma)^2}\log N\big\rceil$. If $\qi>432 \log N$, then \textsc{User-User} achieves regret
	\[\reg(T)\leq\begin{cases} \frac{1}{2}T\,, \quad &\text{if } T\leq \thr\\[8pt]
 \frac{1}{2} \thr + \big[\frac{5\qu+2}{N}+\gamma\big] T + 5	\,, \quad &\text{if } T>\thr\,.\end{cases}\]
\end{theorem}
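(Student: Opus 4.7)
The strategy is to follow the proof of Theorem~\ref{t:user-user} while incorporating two noise-related modifications: (i) a noise-aware analysis of \textsc{NoisyUserPartition}, replacing Lemma~\ref{l:user-partition-prob}, and (ii) an additional regret contribution from sign-flipped exploitations, on top of the exploration contribution which is handled essentially as before. The bound $\reg(T)\leq T/2$ for $T\leq\thr$ is immediate: Step~1 picks items independently of feedback and, after averaging over the i.i.d.\ entries of $\Xi$, each rating $L_{u,a_{u,t}}$ is symmetric in $\{-1,+1\}$, so each step is disliked with probability exactly $1/2$.

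The heart of the argument is the partitioning step. For users $u,v$ and $s\in[\thr]$ define $X_s := L_{u,a_{u,s}}L_{v,a_{v,s}}$. Using the noise model~\eqref{eq:noisemodel} and the mutual independence of the $z$'s, conditional on $\Xi$ the $X_s$ are i.i.d.\ $\{-1,+1\}$-valued with
\[
\Ex[X_s\mid\Xi]=(2\pi_{\tau_U(u),\tau_U(v)}-1)(1-2\gamma)^2,
\qquad
\pi_{\tu,\tu'}:=\tfrac{1}{\qi}\sum_{\ti}\ident[\xi_{\tu,\ti}=\xi_{\tu',\ti}].
\]
For same-type pairs $\pi=1$ and the mean equals $(1-2\gamma)^2$; for distinct types, a Chernoff bound combined with $\qi>432\log N$ places $\pi_{\tu,\tu'}\in[1/3,2/3]$ simultaneously for all type pairs with probability at least $1-1/N$, bounding the conditional mean by $(1-2\gamma)^2/3$ in absolute value. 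Since $\lambda=\tfrac{2}{3}(1-2\gamma)^2$ lies strictly between these two ranges with margin $(1-2\gamma)^2/3$, applying a concentration inequality on the $\thr$ i.i.d.\ $X_s$ correctly classifies each pair with high probability; union bounding over $\binom{N}{2}$ pairs produces the event $B$ that the $g_{u,v}$ graph is exactly a disjoint union of type-cliques, giving $\Pr[B^c]\leq 2/N$.

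On the event $B$ the post-partition regret decomposes into exploitation and exploration pieces. Every item in $\S_{\tauh_U(u)}$ was added only because \emph{some} user of the same true type rated it $+1$, so Bayes' rule gives that its underlying preference is $+1$ with probability $1-\gamma$; the probability that $u$ then dislikes such an item is therefore $O(\gamma)$, contributing the $\gamma T$ term. For explorations, the self-bounding argument of Claim~\ref{cl:ul-liked-explor} adapts unchanged: each liked exploration appends an item to some $\S_{\tu}$, bounding the total number of liked explorations by $\qu T+N$. Conditioning on the high-probability event $C=\{\text{each user type likes at least }\qi/3\text{ item types}\}$ (whose complement has probability at most $1/N$ under $\qi>432\log N$), the probability of a $+1$ feedback on an exploration is bounded below by a universal constant, so disliked explorations are at most a constant multiple of liked ones, producing the $(5\qu+2)T/N$ contribution after absorbing $NT\Pr[B^c]\leq 2T$ from the partitioning failure.

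The main obstacle is calibrating the partitioning correctness: unlike the noiseless case, where distinct types almost surely produce differing ratings at some point among $\thr$ samples, here the gap between the same-type and different-type cross-product means shrinks with $(1-2\gamma)^2$, so the threshold $\lambda$ and the sample size $\thr$ must be chosen jointly to leave a margin that survives both the concentration of $X_s$ and the randomness of $\Xi$ (captured by uniform control of $\pi_{\tu,\tu'}$). Once that partitioning lemma is established, combining it with the Step~1 contribution $\tfrac{1}{2}\thr$, the exploitation contribution $\gamma T$, and the exploration contribution $(5\qu+2)T/N+5$ yields the piecewise bound claimed in the theorem.
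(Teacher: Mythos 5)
Your proposal follows essentially the same route as the paper's proof: the same decomposition into Step-1, exploitation, and exploration regret, the same cross-correlation statistic $\sum_s L_{u,a_{u,s}}L_{v,a_{v,s}}$ thresholded at $\lambda$ for the noisy partitioning lemma (the paper controls the disagreement count $|\mathcal{L}_{u,v}|$ where you control the agreement fraction $\pi_{\tu,\tu'}$, which is the same Chernoff argument), and the same self-bounding step relating disliked to liked explorations via the event $C$. The only quibble is that your Bayes computation for the exploitation term actually gives dislike probability $2\gamma(1-\gamma)$ rather than $\gamma$ --- a constant-factor looseness that the paper's own one-line assertion of $TN\gamma$ shares --- but this does not affect the structure or validity of the argument.
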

\begin{proof}
The proof of this theorem is very similar to the proof of Theorem~\ref{t:user-user}.  Lemma~\ref{l:user-partition-prob-noisy} replaces Lemma~\ref{l:user-partition-prob} to show that with the given choice of parameters in Algorithm~\ref{alg:partitioning-noisy-user}, the partitioning $\Pc_k$ is the same as partitioning over the users by their types with probability greater than $1-1/N$. Additionally,
Equation~\eqref{eq:UU-upper-noiseless} in the proof of Theorem~\ref{t:user-user} changes as follows to be consistent as a result  of noisy feedback modeled in~\eqref{eq:noisemodel}:
\[\Exp{\sum_{t=r+1}^T\sum_{u\in[N]}  \ident\big[\,L_{u,a_{u,t}}=-1, a_{u,t} \in \mathcal{S}_{\widehat{\tau}_U(u)}\,\big]\Bigg| B}  \leq TN\gamma\,. \]
Equation~\eqref{e:selfBounding1} is replaced with
\begin{equation*} 
 p:=\Pr\big[\,L_{u,a_{u,t}}=+1| a_{u,t} \notin \mathcal{S}_{\widehat{\tau}_U(u)}, C\,\big]\geq \frac{1-\gamma}{3}
\end{equation*}
and since $\gamma<1/2$, then $1-p\leq 5p$. It follows that
\begin{align} 
 \Pr\big[\,L_{u,a_{u,t}}=-1,a_{u,t} \notin \mathcal{S}_{\widehat{\tau}_U(u)}, C\,\big]\leq 5 \Pr\big[\,L_{u,a_{u,t}}=+1, a_{u,t} \notin \mathcal{S}_{\widehat{\tau}_U(u)}, C\,\big]. 
\end{align}
Claim~\ref{cl:ul-liked-explor}  bounds the right-hand side. 
Plugging in these equations in the subsequent part of the proof of Theorem~\ref{t:user-user} gives the statement of the theorem.
\end{proof}

\begin{lemma} \label{l:user-partition-prob-noisy}
Consider the user similarities computed in Step~7 of \textsc{NoisyUserPartition} (Algorithm~\ref{alg:partitioning-noisy-user}). 
Define the event 
$B_{uv}=
 \{
 g_{u,v}=\ident_{\{\tau_U(u)=\tau_U(v)\}}
\}$ that these similarities coincide with the underlying user types.
 If $\qi >144 \log(N^2/\epsilon)$, then $\Pr[B_{uv}^c]\leq 2\epsilon/N^2$. It follows that if $B = \bigcap B_{uv}$ is the event that all users are partitioned correctly, then $\Pr[B^c] >1- \epsilon$.
\end{lemma}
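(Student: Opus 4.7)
The proof follows the same template as Lemma~\ref{l:user-partition-prob}: analyze the statistic $S_{uv}:=\sum_{s=1}^{\thr} L_{u,a_{us}}L_{v,a_{vs}}$ on a case split by whether $\tau_U(u)=\tau_U(v)$, show concentration around its (case-dependent) mean, and union bound over the $\binom{N}{2}\leq N^2/2$ pairs. Just as before, the $\thr$ items recommended in lines~2--5 of \textsc{NoisyUserPartition} are shared across all users, and their types are i.i.d.\ uniform on $[\qi]$. The condition $\qi>144\log(N^2/\epsilon)$ is chosen so that $\qi\geq 4\thr$ (using $\thr=\lceil \frac{4}{(1-2\gamma)^2}\log(N^2/2\epsilon)\rceil$ together with $\gamma<1/2$); hence by the balls-and-bins bound of Lemma~\ref{l:ballsbins}, the number $m$ of distinct item types among these $\thr$ items satisfies $m\geq \thr/2$ except on an event of probability at most $\exp(-\thr/2)\leq \epsilon/(2N^2)$, and likewise the type-multiplicity count $M:=\sum_\ti n_\ti^2$ (where $n_\ti$ is the number of items of type $\ti$) is $O(\thr)$ on the same high-probability event.

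In the same-type case $\tau_U(u)=\tau_U(v)$, the preference factors cancel and $L_{u,a_{us}}L_{v,a_{vs}}=z_{u,a_{us}}z_{v,a_{vs}}\in\{-1,+1\}$ are i.i.d.\ with mean $(1-2\gamma)^2$; the event $B_{uv}^c$ requires the sample mean $S_{uv}/\thr$ to fall below $\lambda=\frac{2}{3}(1-2\gamma)^2$, a deviation of $\frac{1}{3}(1-2\gamma)^2$ below its mean, so Hoeffding yields a bound of $\exp(-c\,\thr(1-2\gamma)^4)\leq \epsilon/N^2$ for the choice of $\thr$ in the algorithm. In the different-type case $\tau_U(u)\neq\tau_U(v)$, I would write
\[
L_{u,a_{us}}L_{v,a_{vs}}=\eta_{\tau_I(i_s)}\cdot z_{u,a_{us}}z_{v,a_{vs}},\qquad \eta_\ti:=\xi_{\tau_U(u),\ti}\,\xi_{\tau_U(v),\ti},
\]
and note that $\{\eta_\ti\}_\ti$ are i.i.d.\ uniform on $\{-1,+1\}$ because the entries of $\Xi$ are i.i.d.\ Rademacher and $\tau_U(u)\neq\tau_U(v)$. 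Grouping the sum by distinct item type,
\[
S_{uv}=\sum_{k=1}^{m} \eta_{\ti_k}\,Z_k,\qquad Z_k=\sum_{s:\tau_I(i_s)=\ti_k}z_{u,a_{us}}z_{v,a_{vs}},\quad |Z_k|\leq n_{\ti_k},
\]
with the $\eta_{\ti_k}$'s independent of each other and of $(Z_k)_k$. Conditioning on $\{\tau_I(i_s)\}$ and on $(Z_k)_k$, a Hoeffding bound on the $\eta$-weighted sum gives
\[
\Pr[S_{uv}\geq \lambda\thr\mid \tau_I,Z]\leq 2\exp\!\Big(-\tfrac{(\lambda\thr)^2}{2\sum_k Z_k^2}\Big)\leq 2\exp\!\Big(-\tfrac{(\lambda\thr)^2}{2M}\Big),
\]
and on the high-probability event $M=O(\thr)$ this is again at most $\epsilon/N^2$ for the chosen $\thr$.

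Combining the two cases with the low-probability event on which $M$ is too large and union bounding over pairs gives $\Pr[B_{uv}^c]\leq 2\epsilon/N^2$ and $\Pr[B^c]\leq \epsilon$. The main obstacle, as in standard analyses of this type, is the different-type case: repetitions of item types among $\tau_I(i_1),\dots,\tau_I(i_\thr)$ destroy full independence of the $\eta_{\tau_I(i_s)}$'s across $s$. The fix is to condition on the random item-type assignment, exploit the independence of $\eta_\ti$ across \emph{distinct} types, and absorb the repetition cost into the quadratic statistic $\sum_k n_k^2$; the hypothesis $\qi>144\log(N^2/\epsilon)$ is precisely what makes $\sum_k n_k^2 = O(\thr)$ hold with the requisite probability to close the argument.
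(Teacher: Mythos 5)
Your same-type case is fine and matches the paper. The different-type case is where the gap is, and it comes from conditioning in the opposite order from the paper. The paper conditions on the preference matrix $\Xi$ through the disagreement set $\mathcal{L}_{u,v}=\{j:\xi_{\tau_U(u),j}\neq\xi_{\tau_U(v),j}\}$; since the item types $\tau_I(a_s)$ are i.i.d.\ uniform on $[\qi]$ and independent of $\Xi$, the summands $\xi_{\tau_U(u),\tau_I(a_s)}\xi_{\tau_U(v),\tau_I(a_s)}z_{u,a_s}z_{v,a_s}$ are then i.i.d.\ across $s$, each equal to $-1$ with probability at least $5/12$ once $|\mathcal{L}_{u,v}|\geq 5\qi/12$ (and $\Pr[|\mathcal{L}_{u,v}|<5\qi/12]\leq\exp(-\qi/144)$ is exactly where the hypothesis $\qi>144\log(N^2/\epsilon)$ is used). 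Repeated item types cause no dependence in this order of conditioning, so the obstacle you set out to fix does not arise in the paper's argument.

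The concrete problems with your fix: (i) Lemma~\ref{l:ballsbins} bounds the number of nonempty bins, not $M=\sum_\ti n_\ti^2$; having $\thr/2$ distinct types among the $\thr$ items is consistent with $M=\Theta(\thr^2)$. Moreover $M=O(\thr)$ simply does not hold with probability $1-\epsilon/(2N^2)=1-1/(2N^3)$ when $\qi=\Theta(\thr)=\Theta(\log N)$: the maximum type multiplicity is $\Omega(\log N/\log\log N)$ with probability $N^{-1+o(1)}\gg N^{-3}$, so $M/\thr\to\infty$ at the confidence level you need. (ii) $\qi\geq 4\thr$ is not implied by $\qi>144\log(N^2/\epsilon)$: here $\thr=\lceil\tfrac{4}{(1-2\gamma)^2}\log(N^2/2\epsilon)\rceil$ diverges as $\gamma\to 1/2$ while the hypothesis on $\qi$ is $\gamma$-free. (iii) Even granting no repeated types at all, so that $\sum_k Z_k^2\leq\thr$, your Hoeffding bound gives $\exp(-(\lambda\thr)^2/(2\thr))=\exp(-\tfrac{2}{9}(1-2\gamma)^4\thr)=\exp(-\tfrac{8}{9}(1-2\gamma)^2\log(N^2/2\epsilon))$, whose exponent is strictly smaller than $\log(N^2/\epsilon)$ for every $\gamma\in(0,1/2)$, so it never reaches $\epsilon/N^2$; treating the $\eta_\ti$'s as symmetric Rademacher weights discards the constant-probability disagreement (at least $5/12$ of the types) that drives the paper's Chernoff estimate. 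A version of the same exponent issue appears in your same-type case, where you end with $\exp(-c(1-2\gamma)^4\thr)$ although the algorithm's $\thr$ scales only as $(1-2\gamma)^{-2}$.
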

The proof of this lemma is similar to the proof of Lemma~\ref{l:user-partition-prob} and is deferred to Appendix~\ref{sec:user-partition-proof-noisy}.

\section{User structure only: lower bound}
\label{sec:UL}
In this section we prove a lower bound on the regret of any online recommendation system in the regime with user structure only where $\qi = 2^{\qu}$ as described in Definition~\ref{d:user}.  
\begin{theorem}\label{t:user-userL}
Let $\delta>0$ and $\thr=\lfloor\log{\qu} - \log\big({16\, (\log\qu)\, \log\frac{N}{\delta}}\big) \rfloor$. In the user structure model with $N$ users and $\qu$ user types, any recommendation algorithm must incur regret 
	\[\reg(T)\geq \begin{cases} \big(\frac{1}{2}-\delta\big) T - 4\,, \quad &\text{ if } T\leq \thr\\[10pt]
	\big[{1-\exp(-N/\qu)}\big]\frac{\qu}{2N}T\,,
	\quad &\text{ for all }\,  T\,.
	\end{cases}\]

\end{theorem}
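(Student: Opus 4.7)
The plan is to prove the two cases separately. For the cold-start bound ($T\leq\thr$), the idea is that $\log\qu$ bits are needed to distinguish the $\qu$ user types, and in the user-structure-only model each rating yields at most one such bit: since $\Xi$ contains all $2^{\qu}$ binary columns, two distinct user types agree on a uniformly random item type with probability exactly $1/2$. I would formalize this via a coupling argument. For each user $u$ and each pair $\tu_1\neq\tu_2$ of user types, couple two executions of the algorithm in which $\tau_U(u)$ takes those two values, keeping all other user types, all item types, and algorithmic randomness fixed. The two executions have identical histories through time $t-1$ exactly when the two user types agree on every item user $u$ has seen so far, which happens with probability at least $2^{-(t-1)}$. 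A careful union bound over users, type pairs, and time steps, balanced against the choice of $\thr$, shows that with probability at least $1-\delta$, for every user $u$ there is some type pair still consistent with the observed history. Conditioning on this ``confusion'' event, any item the algorithm picks gives responses that disagree between the two candidate types with probability $1/2$, hence $\Pr[L_{u,a_{u,t}}=-1\mid \H_{t-1}]\geq 1/2-\delta$. Summing over $t\leq T\leq\thr$ gives the bound $(1/2-\delta)T-4$, where the $-4$ absorbs lower-order terms from handling the union bound slack.

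For the asymptotic bound valid at all $T$, I would classify each rating $(u,t)$ as \emph{type-first} if no earlier rating (in any fixed lexicographic order on $(t',u')$) by a user of the same type $\tau_U(u)$ involved an item of the same type $\tau_I(a_{u,t})$. The key claim is that a type-first rating has response uniform on $\{-1,+1\}$ given the history: all history constraints on $\tau_I(a_{u,t})$ come from ratings by users of types $\tu'\neq\tau_U(u)$, which determine $\xi_{\tu',\tau_I(a_{u,t})}$ but do not determine $\xi_{\tau_U(u),\tau_I(a_{u,t})}$; the Hadamard-like structure of $\Xi$ ensures that, among item types consistent with any constraint involving only rows $\tu'\neq\tau_U(u)$, exactly half have $\xi_{\tau_U(u),\cdot}=+1$. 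For each user type $\tu$ with $n_\tu\geq 1$, the lexicographically-first user of type $\tu$ rates $T$ distinct items; since $\qi=2^{\qu}$ is enormous in the regimes of interest, with overwhelming probability these $T$ items have $T$ distinct types, each yielding a type-first rating. Thus the expected number of type-first ratings is at least $T\cdot\Ex|\{\tu:n_\tu\geq 1\}|=T\qu(1-(1-1/\qu)^N)\geq T\qu(1-e^{-N/\qu})$. Each contributes $1/2$ in expectation to dislikes, and dividing by $N$ yields the stated asymptotic bound.

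The main obstacle is handling the algorithm's adaptivity rigorously. In Part (i), one needs to ensure the ``$2^{-(t-1)}$ coupling probability'' survives the union bound over pairs, users, and times with the specific form of $\thr$; this is where the factor $16\log\qu\log(N/\delta)$ in the definition of $\thr$ comes from. In Part (ii), the subtle point is that an adaptive algorithm might attempt to reduce the type-first count by re-recommending items of already identified types; however, identifying an item's type in the user-structure-only regime requires ratings from $\Theta(\qu)$ distinct user types, which is too expensive to meaningfully reduce the counting argument at the relevant time scales.
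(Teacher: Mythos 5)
Your second part is essentially the paper's argument. The paper conditions on the event that no user of type $\tau_U(u)$ has yet rated the \emph{item} $i$ itself (rather than an item of the same latent \emph{item type}), shows the outcome is then uniform because the posterior of $\tau_I(i)$ is supported on item types whose consistency constraints involve only rows of $\Xi$ other than $\tau_U(u)$, and counts at least $T$ such first-time recommendations per nonempty user type, with the expected number of nonempty types at least $\qu(1-e^{-N/\qu})$. Working with items rather than item types makes the count purely deterministic (a single user already contributes $T$ distinct items to its type), so no birthday-paradox step is needed; your type-first count relies on the $T$ items rated by a user having $T$ distinct types, which fails once $T\gtrsim 2^{\qu/2}$, whereas the theorem is claimed for all $T$. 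That is a fixable blemish, and your uniformity claim for type-first ratings is correct.

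The first part has a genuine gap. The existence of a single consistent (``confusable'') pair of user types does not yield $\Pr[L_{u,a_{u,t}}=-1\mid\H_{t-1}]\geq \tfrac12-\delta$. Because the prior on $\tau_U(u)$ is uniform, the posterior given the history is uniform on the \emph{whole} set $K$ of consistent types, and the dislike probability of a candidate item equals the fraction of $K$ that dislikes it (averaged over the item's type posterior). Knowing only that some pair in $K$ disagrees on the item gives a dislike probability no better than $1/|K|$: for instance, if the algorithm recommends an item already rated $+1$ by users whose types cover most of $K$, the conditional dislike probability can be driven far below $1/2$ even though a confusable pair remains. What is actually needed is that for \emph{every} item the algorithm could choose --- equivalently, for every column of the $\qu\times n$ matrix $A$ of type-versus-recommended-item preferences and every sign pattern on the columns $u$ has already rated --- roughly half of the consistent types dislike it. The paper proves exactly this balance property via $(t,\epsilon)$-column regularity (Definition~\ref{def:ul-reg}, Lemma~\ref{l:ul-regularity}), which holds with high probability precisely while $\qu/2^{t}$ is large; this is where the threshold $\thr$ with its $16(\log\qu)\log(N/\delta)$ term comes from, and the additive $-4$ is the sum of the geometric series of regularity errors $\epsilon_t$, not union-bound slack. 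Your pairwise coupling cannot be pushed to $\tfrac12-\delta$ without establishing this regularity, so the key lemma of the cold-start bound is missing.
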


\begin{remark}
The lower bound depends on a parameter $\delta$ that has two effects: (1) the slope of the regret curve during the cold start grows (approaching $1/2$) as the chosen parameter $\delta$ shrinks to $0$; (2) the cold start time $\thr$ is upper bounded as $\thr<\log\qu - \log\log(1/\delta)$. 

Additionally, if the number of user types satisfies $\qu/ N\to 0$, the slope of regret after the cold start time (the asymptotic rate of regret) approaches $\frac{\qu}{2N}$. This is expected since each item can be recommended at most once to each user. Hence, even if the structure in the user space is known, the algorithm should explore new items. On average, $\qu$ explorations (about half of which are disliked) are necessary for every $N$ recommendations.

The simplified version of this theorem in Section \ref{s:results} is obtained using $N\geq\qu\geq (\log N)^{1.1}$, $\delta=1/100$ and $N>N_0$ for a constant $N_0$.
\end{remark}

\subsection{Proof strategy}
At a high level, the lower bound is based on two observations:
\begin{itemize}
\item \textit{A good estimate of user types is necessary to make meaningful recommendations. Notably, estimating similarity between users requires approximately $\log \qu$ items rated in common.} 

Suppose that the preference matrix $\Xi$ (with elements $\xi_{k,j}$, the preference of user types for item types) is known (which is the case in user-structure only model). Also, suppose that we have obtained feedback from some user $u$ for $t$ items. Relative to the total number of types, user $u$ must belong to a restricted set of user types consistent with this feedback. If $t$ is small, the set of consistent types is large (for instance, if a user has rated only one item, there are roughly ${\qu}/{2}$ candidate user types for this user). At this point, user $u$ likes some item $i$ with probability proportional to the number of consistent types liking the item. Control of this count amounts to a property of the matrix we call $(t,\epsilon)$-column regularity in Definition~\ref{def:ul-reg}, which holds with high probability.

\item \textit{Even if we know the user types (i.e., clustering of users), the first time a given item is recommended to a user from a given type, the outcome is uniformly random.}

Since there is no structure in item space ($\qi=2^{\qu}$), learning the preference of a user type for an item is only achieved by recommending the item to one user from the user type. This is for the reason that the random variable $\xi_{\tau_U(u),\tau_I(i)}$ in the preference matrix is independent of all previous history in the situation described. 
\end{itemize}

\subsection{Proof of Theorem~\ref{t:user-userL}}
We separately prove the two lower bounds in the statement of the theorem, starting with the first. 
The following regularity property in submatrices of the preference matrix allows us to control the posterior probability for an item being liked.
\begin{definition}[$(r,\epsilon)$-column regularity]\label{def:ul-reg}
	Let $A\in \{-1,+1\}^{m \times n}$. 
	For ordered tuple of distinct (column) indices $w=(i_1,\dots,i_r)\in [n]^r$, let $M=(A_{\cdot i})_{i\in w}\in  \{-1,+1\}^{m \times r}$ be the matrix formed from the columns of $A$ indexed by $w$.
	For given row vector $b\in\{-1,+1\}^r$, let $K_{b,w}(A)\subseteq [m]$ be the set of rows in $M$ that are identical to $b$ and denote its cardinality by $k_{b,w}(A)$.
The matrix	$A$ is said to be \emph{$(r,\epsilon)$-column regular} if 
	\[\max_{b,w}\Big|k_{b,w}(A)-\frac{m}{2^{r}}\Big| \leq \epsilon \frac{m}{2^{r}}\,,\]
	where the maximum is over tuples $w$ of $r$ columns and $\pm 1$ vectors $b$ of size $r$.
	
	We define $\rgl_{r,\epsilon}$ to be the set of $(r,\epsilon)$-column regular matrices. 
\end{definition}

\begin{claim}\label{cl:ul-smalltregular}If  matrix $A\in\{-1,+1\}^{m\times n}$ is \emph{$(r,\epsilon)$-column regular}, then it is also \emph{$(s,\epsilon)$-column regular} for all $s<r.$\end{claim}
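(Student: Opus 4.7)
My plan is to prove this by a straightforward extension-and-summation argument that reduces the $(s,\epsilon)$ condition to the $(r,\epsilon)$ condition by completing an $s$-tuple of columns to an $r$-tuple and summing the counts over all compatible extensions.

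More concretely: let $w' = (i_1, \dots, i_s)$ be an arbitrary ordered tuple of $s$ distinct column indices and let $b' \in \{-1,+1\}^s$. Since $s < r$ and the statement is applied in settings where $n \geq r$ (otherwise the $(r,\epsilon)$-regularity hypothesis is vacuous and the claim as used is trivially consistent), we can extend $w'$ to an ordered tuple $w = (i_1, \dots, i_s, i_{s+1}, \dots, i_r)$ of $r$ distinct column indices. The key combinatorial observation is that a row's restriction to the columns in $w'$ equals $b'$ if and only if its restriction to the columns in $w$ equals some $b \in \{-1,+1\}^r$ whose first $s$ coordinates agree with $b'$, giving the disjoint decomposition
\[
k_{b',w'}(A) \;=\; \sum_{\substack{b \in \{-1,+1\}^r \\ b_{1:s} = b'}} k_{b,w}(A).
\]

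Next, I would apply the $(r,\epsilon)$-column regularity hypothesis to each summand to obtain
\[
(1-\epsilon)\,\frac{m}{2^{r}} \;\leq\; k_{b,w}(A) \;\leq\; (1+\epsilon)\,\frac{m}{2^{r}},
\]
and then sum over the $2^{r-s}$ extensions $b$ of $b'$. This yields
\[
(1-\epsilon)\,\frac{m}{2^{s}} \;\leq\; k_{b',w'}(A) \;\leq\; (1+\epsilon)\,\frac{m}{2^{s}},
\]
which is exactly $|k_{b',w'}(A) - m/2^{s}| \leq \epsilon\,m/2^{s}$. Since $w'$ and $b'$ were arbitrary, $A$ is $(s,\epsilon)$-column regular.

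There is no genuine obstacle: the only mild subtlety is confirming that an $r$-tuple extension exists, which is handled by the (implicit) assumption $n\geq r$ that makes the hypothesis non-vacuous. Everything else is an identity plus a termwise application of the hypothesis.
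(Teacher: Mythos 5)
Your proof is correct and is essentially the same argument as the paper's: the paper proves the claim by induction, extending an $(r-1)$-tuple by a single column and writing $k_{b,w}=k_{b^+,w'}+k_{b^-,w'}$, whereas you perform all $r-s$ extensions at once and sum over the $2^{r-s}$ compatible completions of $b'$. The underlying decomposition and termwise application of the hypothesis are identical, including the (shared, implicit) need for $n\geq r$ so that an extension exists.
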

\begin{proof}
Suppose that $A$ is $(r,\epsilon)$-column regular. By induction it suffices to show that $A$ is $(r-1,\eps)$-column regular. We will check that 
$(1-\epsilon)\frac{m}{2^{r-1}}\leq k_{b,w}(A) \leq (1+\epsilon)\frac{m}{2^{r-1}}$ for all size $r-1$ tuples $w$ and vectors $b$. 
For any given $w\in[n]^{r-1}$ and $b\in\{-1,+1\}^{r-1}$, let  $b^+ = [b \, \, 1]\in \{-1,+1\}^{r}$ be obtained from $b$ by appending $+1$. Similarly $b^-$ is obtained from $b$ by appending $-1$. If $w'=(w,i)\in[n]^r$ for any $i\notin w$, then $K_{b,w}=K_{b^+,w'}\cup K_{b^-,w'}$ and $K_{b^+,w'}\cap K_{b^-,w'}=\varnothing$, so
 $k_{b,w}=k_{b^+,w'}+k_{b^-,w}$. Since $A$ is $(r,\epsilon)$-column regular, $(1-\epsilon)\frac{m}{2^r}\leq k_{b^+,w'},k_{b^-,w'} \leq (1+\epsilon)\frac{m}{2^r}$, hence $(1-\epsilon)\frac{m}{2^{r-1}}\leq k_{b,w} \leq (1+\epsilon)\frac{m}{2^{r-1}}$. 
\end{proof}

\begin{lemma}\label{l:ul-regularity}
	Let matrix $A\in\{-1,+1\}^{m \times n}$ have i.i.d. $\mathrm{Bern}(1/2)$ entries. 
	If $\epsilon<1$, then $A$ is $(r,\epsilon)$-column regular (\textit{i.e.}, $A\in\Omega_{r,\epsilon}$) with probability at least $$1-2(2n)^r \exp\left( - \frac{\epsilon^2}{3}\frac{m}{2^{r}}\right)\,.$$
\end{lemma}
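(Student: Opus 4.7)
The plan is a textbook Chernoff-plus-union-bound argument, so the real work is bookkeeping rather than any new idea.

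First I would fix a single pair $(b,w)$, where $w=(i_1,\dots,i_r)\in[n]^r$ consists of distinct indices and $b\in\{-1,+1\}^r$. For each row index $j\in[m]$, the event that row $j$ of the submatrix $(A_{\cdot i})_{i\in w}$ equals $b$ depends only on the $r$ entries $A_{j,i_1},\dots,A_{j,i_r}$, which are independent $\mathrm{Bern}(1/2)\in\{-1,+1\}$ variables, so the probability of agreement is exactly $2^{-r}$. Across rows these events are independent (rows of $A$ are independent), hence $k_{b,w}(A)$ is $\mathrm{Binomial}(m,2^{-r})$ with mean $\mu = m/2^{r}$.

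Next, I would apply the standard multiplicative Chernoff bound (Lemma~\ref{l:Chernoff}): for $\eps\in(0,1)$,
\[
\Pr\!\left[\,\bigl|k_{b,w}(A)-\mu\bigr|>\eps\,\mu\,\right]
\;\le\; 2\exp\!\left(-\frac{\eps^{2}}{3}\,\frac{m}{2^{r}}\right).
\]
Finally I would union-bound over all choices of $(b,w)$. The number of ordered tuples of $r$ distinct column indices is $n(n-1)\cdots(n-r+1)\le n^{r}$, and there are $2^{r}$ sign vectors $b$, giving at most $(2n)^{r}$ pairs. Thus
\[
\Pr\bigl[A\notin\Omega_{r,\eps}\bigr]
\;\le\; 2(2n)^{r}\exp\!\left(-\frac{\eps^{2}}{3}\,\frac{m}{2^{r}}\right),
\]
which is exactly the claimed bound.

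The only mild subtlety is making sure the counting of tuples matches the definition: the definition quantifies over ordered tuples of distinct indices, and I would note (briefly) that ordered vs.\ unordered does not matter for the event being controlled, since $k_{b,w}$ is invariant under simultaneously permuting the entries of $w$ and $b$; thus bounding by $n^r$ (instead of $\binom{n}{r}r!$) is safe and loses nothing relevant. No step here looks genuinely hard; the only place a careless write-up could go wrong is sliding between ordered/unordered tuples or losing a factor of $2$ in the Chernoff constant, so I would state the Chernoff inequality in the exact form used and carry $(2n)^r$ throughout.
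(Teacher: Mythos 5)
Your proposal is correct and follows essentially the same route as the paper's proof: identify $k_{b,w}(A)$ as a $\mathrm{Binomial}(m,2^{-r})$ sum with mean $m/2^r$, apply the two-sided multiplicative Chernoff bound (using $\eps<1$ so the $\eps^2/3$ exponent dominates), and union-bound over at most $n^r$ column tuples and $2^r$ sign vectors to obtain the factor $2(2n)^r$. The remark about ordered versus unordered tuples is a harmless extra observation; nothing is missing.
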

\begin{proof}
For given column tuple $w$ and row vector $b$, the expected number of times the row vector  $b$ appears is $\frac{m}{2^r}$. A Chernoff bound (Lemma \ref{l:Chernoff}) with $\epsilon<1$ gives
\[\Pr\Big[\big|k_{b,w}(A)-\frac{m}{2^{r}}\big| \geq \epsilon\frac{m}{2^{r}}\Big] \leq 2 \exp\left( - \frac{\epsilon^2}{3}\frac{m}{2^{r}}\right)\,.\]
There are no more than $n^r$ possible choices of column tuple $w$, and $2^r$ possible choices of row vector $b$; 
the union bound yields the proof.
\end{proof}

\begin{prop}
Consider the user structure only model in Definition~\ref{d:user}.
Let $\delta>0$ and $\thr=\lfloor\log{\qu} - \log\big({16\, (\log\qu)\, \log\frac{N}{\delta}}\big) \rfloor$.
For any $T\leq \thr$, the regret is lower bounded by \[\reg(T)\geq (\tfrac{1}{2}-\delta)T - 4 \,. \]
\end{prop}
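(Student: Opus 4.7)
The plan is to show that for every user $u$ and time $t \leq \thr$, the conditional dislike probability $\Pr[L_{u, a_{u,t}} = -1 \mid \H_{t-1}]$ is at least $\tfrac{1}{2} - \delta$ modulo a negligible bad-event term, and then sum over $u$ and $t$. The main tool will be column regularity of the effective rating matrix.

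First, I would reduce to a random matrix. Define $M \in \{-1,+1\}^{\qu \times NT}$ by $M_{k, i} = \xi_{k, \tau_I(i)}$ (the rating user type $k$ would give to item $i$). Since the $\tau_I(i)$ are i.i.d.\ uniform on $[2^{\qu}]$ and the columns of $\Xi$ enumerate $\{-1,+1\}^{\qu}$, the columns of $M$ are i.i.d.\ uniform on $\{-1,+1\}^{\qu}$, so $M$ is an i.i.d.\ $\mathrm{Bern}(1/2)$ matrix (relabeling items if necessary, we may restrict attention to the at most $NT$ items the algorithm touches). Lemma~\ref{l:ul-regularity} with $m = \qu$, $n = NT$, $r = \thr$, and a small constant $\eps$ then gives column regularity with high probability: the inequality $\qu/2^{\thr} \geq 16 (\log \qu) \log(N/\delta)$ built into $\thr$ makes the Chernoff exponent dominate the union bound factor $(2NT)^{\thr}$, yielding $\Pr[M \notin \rgl_{\thr, \eps}] \leq 4/T$.

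Next, I would bound the per-step dislike probability via a genie argument. Fix $u$ and $t \leq \thr$ and reveal, as genie information $G$, all item types $\{\tau_I(i)\}_i$ and all other users' types $\{\tau_U(u') : u' \neq u\}$. Conditional on $\H_{t-1}$ and $G$, every other user's ratings are fully explained, so the posterior on $\tau_U(u)$ is uniform on the set of $k \in [\qu]$ consistent with $u$'s own history, namely $\{k : M_{k, i_s} = b_s \text{ for all } s \leq t-1\}$, where $(i_s, b_s)$ are user $u$'s past recommendations and ratings. The tuple $w = (i_1, \ldots, i_{t-1}, a_{u,t})$ has distinct entries because each item is recommended at most once per user. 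On the event $\{M \in \rgl_{\thr, \eps}\}$, applying Definition~\ref{def:ul-reg} with $r = t$ to $w$ and Claim~\ref{cl:ul-smalltregular} with $r = t-1$ to $(i_1, \ldots, i_{t-1})$ gives
$$\Pr[L_{u, a_{u,t}} = -1 \mid \H_{t-1}, G] = \frac{|\{k \text{ consistent} : M_{k, a_{u,t}} = -1\}|}{|\{k \text{ consistent}\}|} \geq \frac{(1-\eps)\qu/2^t}{(1+\eps)\qu/2^{t-1}} = \frac{1-\eps}{2(1+\eps)},$$
which is at least $\tfrac{1}{2} - \delta$ for $\eps \leq \delta/2$. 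Taking expectation over $G$ and paying the bad-event probability gives $\Pr[L_{u, a_{u,t}} = -1] \geq (\tfrac{1}{2} - \delta) - \Pr[M \notin \rgl_{\thr, \eps}]$.

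Summing by linearity of expectation and substituting $\Pr[M \notin \rgl_{\thr, \eps}] \leq 4/T$,
$$\reg(T) = \sum_{t=1}^T \frac{1}{N} \sum_{u=1}^N \Pr[L_{u, a_{u,t}} = -1] \geq T(\tfrac{1}{2} - \delta) - 4.$$
The main obstacle is the information sharing between users who may rate common items; this is resolved cleanly by the genie argument, which reveals all types except $\tau_U(u)$ and thereby reduces the posterior computation to counting patterns in submatrices of $M$. A secondary subtlety is verifying that the entries of $w$ index distinct columns of $M$, which follows directly from the no-repeat-recommendation rule.
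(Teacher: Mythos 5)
Your overall skeleton matches the paper's: reduce to an i.i.d.\ $\pm1$ matrix of user-type preferences for the recommended items, argue that the posterior of $\tau_U(u)$ is uniform on the set of consistent types (your genie step is a clean way to justify this), and convert column regularity into a per-step bound on $\Pr[L_{u,a_{u,t}}=-1]$. The distinct-columns observation and the final summation are also fine. However, there is a genuine quantitative gap in the regularity step, and it is exactly the point the definition of $\thr$ is calibrated around.

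You ask for $(\thr,\eps)$-column regularity of $M\in\{-1,+1\}^{\qu\times NT}$ with a single tolerance $\eps\le\delta/2$, failing with probability at most $4/T$. Lemma~\ref{l:ul-regularity} gives failure probability $2(2NT)^{\thr}\exp\bigl(-\tfrac{\eps^2}{3}\tfrac{\qu}{2^{\thr}}\bigr)$, and by the choice of $\thr$ we only have $\qu/2^{\thr}\approx 16(\log\qu)\log(N/\delta)$ --- logarithmic in $N$. Beating the union-bound factor $(2NT)^{\thr}$ therefore forces $\tfrac{\eps^2}{3}\cdot 16(\log\qu)\log(N/\delta)\gtrsim \thr\log(2NT)\approx(\log\qu)\log(2NT)$, i.e.\ $\eps^2\gtrsim \tfrac{3\log(2NT)}{16\log(N/\delta)}$, which for any fixed $\delta$ is bounded below by an absolute constant (on the order of $\eps\gtrsim 1/4$). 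So regularity at scale $r=\thr$ simply cannot be achieved with $\eps=O(\delta)$: at the last time steps the set of consistent user types has only logarithmic size and its relative fluctuations are $\Theta(1)$. Consequently your claimed per-step bound $\Pr[L_{u,a_{u,t}}=-1\mid\cdot]\ge\tfrac12-\delta$ fails for $t$ near $\thr$, and the additive $-4$ cannot be charged entirely to the bad event. The paper's fix is to use a time-dependent tolerance $\eps_t=\sqrt{3\tfrac{2^t}{\qu}\log\tfrac{(2N\log\qu)^t}{\delta}}$, which is tiny for small $t$ but grows to a constant at $t=\thr$; the per-step dislike probability is then only $\ge\tfrac12-\delta-2\eps_t$, and the accumulated losses $\sum_{t\le T}2\eps_t$ form a geometric series bounded by $4$ --- that is where the $-4$ in the statement actually comes from. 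Your argument would be repaired by adopting this per-$t$ tolerance and carrying the $O(\eps_t)$ error terms through the sum rather than insisting on a uniform $\eps\le\delta/2$.
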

\begin{proof}
We will show that for preference matrices $\Xi$ satisfying column regularity, at any time $t\leq \thr$, most users have probability roughly half of liking any particular item given the feedback obtained thus far, even if the preference matrix is known. (Recall that the preference matrix contains the preference of each user type for each item type; there is still uncertainty in the actual type of each user or item). 
	
At time $t$, suppose that $n$ items in total have been recommended by the algorithm  ($n\leq Nt$ since each of the $N$ users rates one item per time-step). We label the set of items by $[n]=\{1,\dots,n\}$. Let $A$ be the $\qu \times n$ matrix indicating the preference of each user type for these $n$ items. 
Each item $i$ has type $\tau_I(i)\sim\mathrm{Unif}([2^{\qu}])$ and because the set of columns of the preference matrix $\Xi$ is precisely $\{-1,+1\}^{\qu}$, the columns of $A$ 
are independent and uniformly distributed in $\{-1,+1\}^{\qu}$ according to this model.   

We now focus on a particular user $u$. 
Let $w=\{a_{u,s}\}_{s\in[t-1]}$ be the items 
recommended to user $u$ up to time $t-1$, and let
 $b=(L_{u,a_{u,s}})_{s\in [t-1]}\in \{-1,+1\}^{t-1}$ be the vector of feedback for these items. 
We claim that conditional on the matrix $A$, vectors $b$ and $w$, the type $\tau_U(u)$ of user $u$ at the end of time instant $t-1$ is uniformly distributed over the set of user types $K_{b,w}(A)$ consistent with this data ($K_{b,w}(A)$ is defined in Definition~\ref{def:ul-reg}). 

Let $b^+ = [b \, \, 1]\in \{-1,+1\}^{t}$ be obtained from $b$ by appending $+1$.
Then $L_{u,a_{u,t}}= 
+1$ precisely when $\tau_U(u)\in K_{b^+, \{w,a_{u,t}\}}(A)$, which in words reads ``user $u$ is among those types that are consistent with the first $t-1$ ratings of $u$ and have preference vector with '$+1$' for the item recommended to $u$ at time $t$''. 
It follows that for any matrix $A$ corresponding to items $[n]$, 
\begin{align*}
\Pr[L_{u,a_{u,t}}= 
+1|\H_{t-1},A] &= \Pr[\tau_U(u)\in  K_{b^+, \{w,a_{u,t}\}}(A)|\H_{t-1},A] 
= \frac {k_{b^+, \{w,a_{u,t}\}}(A)} {k_{b, w}(A)}\,.
\end{align*}
The second equality is due to: i) $w$, $b$, and $a_{u,t}$ are functions of $\H_{t-1}$; ii) for fixed $w$ and $b$ the set $K_{b,w}(A)$ is determined by $A$; iii) $\tau_U(u)$ is uniformly distributed on $K_{b,w}(A)$ conditional on $A,b,w$.

Recall that $\rgl_{t,\epsilon}$ was defined as the set of $(t,\epsilon)$- column regular matrices.
It now follows by the tower property of conditional expectation that
\begin{align*}
\Pr[L_{u,a_{u, t}}= +1| A\in\rgl_{t,\epsilon}]
& = \Ex\big[\Pr[L_{u,a_{u,t}}= +1|A,\H_{t-1}]| A\in\rgl_{t,\epsilon}\big] \\
&= \Ex\bigg[\frac {k_{b^+, \{w,i\}}(A)} {k_{b, w}(A)}\bigg| A\in\rgl_{t,\epsilon}\bigg]
= \frac{1}{2}\frac{1+\epsilon}{1-\epsilon}\leq \frac{1}{2}(1+4\epsilon)\,.
\end{align*}
The last two equalities are justified as follows: if $A\in \rgl_{t,\eps}$ then by Claim~\ref{cl:ul-smalltregular}, $A\in \rgl_{t-1,\eps}$. 
By Definition~\ref{def:ul-reg}, this means that $k_{b, w}(A)\geq (1-\epsilon) m/2^{t-1} $ and  $k_{b^+, \{w,i\}}(A)\leq (1+\epsilon) m/2^t\,.$ We pick $\epsilon<1/2$ to get the last inequality.

Fix $\delta >0$ and define
$$
\epsilon_t=\sqrt{3\frac{2^t}{\qu}\log\frac{(2N\log \qu)^t}{\delta}}\,.
$$
 Lemma \ref{l:ul-regularity} shows that at time $t\leq \thr<\log\qu$, for this choice of $\eps_t$, we have $A\in\rgl_{t,\epsilon_t}$ with probability $1-\delta$. 
We get the bound
\begin{align*}
\Pr[L_{u,a_{u, t}}= 
+1] \leq \Pr[L_{u,a_{u, t}}= +1, A\in\rgl_{t,\epsilon_t}] + \Pr[A\notin\rgl_{t,\epsilon_t}]
\leq \frac{1}{2}(1+4\epsilon_t)+\delta\,.
\end{align*}
It follows from the above display and the definition of $\epsilon_t$ that for $T\leq \thr,$ 
\begin{align*}
\reg(T) & = \frac{1}{N} \sum_{t\in [T], u\in [N]} \Pr[L_{u,a_{u, t}} = 
-1] 
\,\,\overset{(a)}{\geq} \,\,
(\tfrac{1}{2}-\delta)T - \sum_{t\in[T]}2 \sqrt{3\frac{2^t}{\qu}\log\frac{(2N\log \qu)^t}{\delta}}  
\\ &
\overset{(b)}{\geq}
(\tfrac{1}{2}-\delta)T - 4 \sqrt{3\frac{2^{T+1}}{\qu}\log\frac{(2N\log \qu)^{\log\qu}}{\delta}}
\,\,\overset{(c)}{\geq} \,\,
  (\tfrac{1}{2}-\delta)T - 4 \,, &  
\end{align*}
where (a) uses the definition of $\epsilon_t$. (b) uses the summation of a Geometric series and $t\leq T<\thr<\log\qu$. (c) uses the definition of $\thr$ and $T\leq \thr< \log \qu - \log\big(12 \log \qu\big) - \log\log\frac{2N\log\qu}{\delta}$.
\end{proof}
We now proceed to the proof of the second lower bound in Theorem~\ref{t:user-userL}.
\begin{prop}\label{p:userLinear}
For any $T$, $\reg(T) \geq \big[{1-\exp(-N/\qu)}\big]\frac{\qu}{2N}T$.
\end{prop}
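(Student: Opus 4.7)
The plan is to lower-bound the regret by isolating ``fresh'' recommendations whose outcomes are, from the algorithm's viewpoint, genuinely coin-flip random. For each user-time pair $(u,t)$, let $F_{u,t}$ be the event that no user of type $\tau_U(u)$ has rated item $a_{u,t}$ in the history $\H_{t-1}$. The first step is to show that $\Pr[L_{u,a_{u,t}}=-1\mid \H_{t-1}, \tau_U, a_{u,t}, F_{u,t}] = \tfrac12$, and hence
\[
\reg(T) \;\geq\; \frac{1}{N}\,\Ex\sum_{t,u}\ident[L_{u,a_{u,t}}=-1,\,F_{u,t}] \;=\; \frac{1}{2N}\,\Ex\sum_{t,u}\ident[F_{u,t}].
\]
This fair-coin claim is where the user-structure-only assumption is essential: $\Xi$ has columns equal to every sequence in $\{-1,+1\}^{\qu}$, so conditioning on the types $S\subseteq[\qu]$ of the previous raters of $a_{u,t}$ together with the ratings they recorded pins $\tau_I(a_{u,t})$ to a uniform distribution over the $2^{\qu-|S|}$ consistent columns; since on $F_{u,t}$ we have $\tau_U(u)\notin S$, exactly half of those columns have entry $+1$ in row $\tau_U(u)$.

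The second step is a purely deterministic combinatorial lower bound on $\sum_{t,u}\ident[F_{u,t}]$. I would fix any user type $\tu$ with at least one user and pick any such user $u$; the items $a_{u,1},\dots,a_{u,T}$ are $T$ distinct items, each eventually rated by a type-$\tu$ user (namely $u$), so for each of them there is a first time at which some type-$\tu$ user rates it, and at that moment the recommendation is fresh. Thus type $\tu$ contributes at least $T$ fresh events, and writing $N_\tu=|\{v:\tau_U(v)=\tu\}|$,
\[
\sum_{t,u}\ident[F_{u,t}] \;\geq\; T\cdot \big|\{\tu\in[\qu]:N_\tu\geq 1\}\big|.
\]
Taking expectations and using that the user types are i.i.d.\ uniform on $[\qu]$, so $\Pr[N_\tu=0]=(1-1/\qu)^N\leq e^{-N/\qu}$, yields $\Ex|\{\tu:N_\tu\geq1\}|\geq \qu(1-e^{-N/\qu})$, and combining with the previous display gives the proposition.

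The hard part is the first step: one must set up the conditioning so that $a_{u,t}$ and $F_{u,t}$ are measurable while the residual randomness in $\xi_{\tau_U(u),\tau_I(a_{u,t})}$ is still uniform on $\{-1,+1\}$. Concretely, I would condition on $\sigma(\H_{t-1},\tau_U,a_{u,t})$, which makes both $F_{u,t}$ and the set $S$ of types of past raters of $a_{u,t}$ measurable, and then invoke the ``uniform over consistent columns'' argument described above to evaluate the conditional probability. The rest is elementary counting plus the inequality $1-x\leq e^{-x}$.
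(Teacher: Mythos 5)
Your proposal is correct and follows essentially the same route as the paper: your event $F_{u,t}$ is exactly the complement of the paper's $\BBul{u}{a_{u,t}}{t}$, your fair-coin step is the paper's Claim~\ref{c:Brandom} (uniform posterior over consistent columns of $\Xi$), and your counting of fresh events per nonempty user type together with $\Ex|\{\tu:N_\tu\geq1\}|\geq\qu(1-e^{-N/\qu})$ is the paper's Claim~\ref{cl:ul-slope-cl3}. The only cosmetic difference is that you lower-bound the disliked recommendations directly rather than upper-bounding the liked ones.
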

The main ingredient in the proof of the proposition is definition of an event that implies that the outcome of the associated recommendation is uniformly random.
Let $\BBul{u}{i}{t}$ be the event that some user of same type $\tau_U(u)$ as $u$ has rated item $i$ by time $t-1$: 
$$\BBul{u}{i}{t}=\{\exists v\in[N]\setminus\{u\}, s\in[t-1]: \tau_U(v)=\tau_U(u), a_{v,s}=i\}.$$
Note that $\BBul{u}{i}{t}$ is a function of $\H_{t-1}$ and the set of user types $(\tau_U(u))_{u\in [N]}$.
\begin{claim}\label{c:Brandom}
If no user with the same type as $u$ has rated item $i$ by time $t-1$, the probability that user $u$ likes item $i$ conditional on any history consistent with this is  
$
\Pr[L_{u,i}= -1 |(\BBul{u}{i}{t})^c,\H_{t-1}]= \frac{1}{2}
$.
\end{claim}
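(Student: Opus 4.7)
The plan is to exhibit a measure-preserving involution $T_k$ on the underlying probability space that fixes the conditioning data $\H_{t-1}$ and $(\BBul{u}{i}{t})^c$ pointwise but negates $L_{u,i}$, forcing its conditional distribution to be symmetric. Since we are in the user-structure-only model of Definition~\ref{d:user}, the matrix $\Xi$ is deterministic with columns ranging over all of $\{-1,+1\}^{\qu}$, so we may identify each item type $j\in[2^{\qu}]$ with its column vector, making $\xi_{k,j}$ literally the $k$-th coordinate of $j$ and making $\tau_I(i)$ uniform on $\{-1,+1\}^{\qu}$.

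First I would condition on $\tau_U(u)=k$ and let $\phi_k\colon\{-1,+1\}^{\qu}\to\{-1,+1\}^{\qu}$ be the involution that negates the $k$-th coordinate. Define $T_k$ to send a realization to the one that agrees on every user type, every algorithmic random bit $\zeta_{v,s}$, and every $\tau_I(j)$ with $j\ne i$, but replaces $\tau_I(i)$ by $\phi_k(\tau_I(i))$; this is measure-preserving because $\phi_k$ preserves the uniform law on $\{-1,+1\}^{\qu}$ and $\tau_I(i)$ is independent of everything else. The main step is to show by induction on $s\le t-1$ that $T_k$ fixes $\H_s$ pointwise on the event $\{\tau_U(u)=k\}\cap(\BBul{u}{i}{t})^c$ (together with $a_{u,s'}\ne i$ for all $s'<t$, which is automatic when $i$ is being newly recommended to $u$ at time $t$). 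Under $T_k$ the only feedback that could change is $L_{v,a_{v,s}}=\xi_{\tau_U(v),\tau_I(a_{v,s})}$ when $a_{v,s}=i$; but for $v\ne u$ the event $(\BBul{u}{i}{t})^c$ forces $\tau_U(v)\ne k$, so $L_{v,i}$ depends only on coordinates of $\tau_I(i)$ that $\phi_k$ leaves fixed, while the case $v=u$ is excluded by hypothesis. With feedbacks invariant and actions $a_{v,s}=f_{v,s}(\H_{s-1},\zeta_{v,s})$ functions of past history and untouched randomness, the induction closes; by the same argument $(\BBul{u}{i}{t})^c$ itself is preserved.

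Finally, $L_{u,i}=\xi_{k,\tau_I(i)}$ is exactly the $k$-th coordinate of $\tau_I(i)$, which flips sign under $T_k$. Combined with the invariance of $\H_{t-1}$, $(\BBul{u}{i}{t})^c$ and $\{\tau_U(u)=k\}$, the map $T_k$ becomes a measure-preserving bijection between $\{L_{u,i}=+1\}$ and $\{L_{u,i}=-1\}$ inside the conditioning event, yielding $\Pr[L_{u,i}=-1\mid\H_{t-1},(\BBul{u}{i}{t})^c,\tau_U(u)=k]=\tfrac12$ for every $k$; marginalizing over $k$ via the tower property finishes the proof. The hard part is the inductive invariance argument, where the precise definition of $\BBul{u}{i}{t}$ as the existence of some $v\ne u$ of the same type as $u$ having rated $i$ is exactly what is needed to rule out the only way a single-coordinate flip of $\tau_I(i)$ could leak into the observed feedback.
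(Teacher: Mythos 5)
Your proof is correct, and it rests on the same symmetry the paper exploits, but packages it as an explicit coupling rather than a posterior computation. The paper's proof conditions on the user types and $\H_{t-1}$ and observes that the posterior of $\tau_I(i)$ is uniform over the set $\mathcal{S}_t(i)$ of item types consistent with the ratings of $i$ so far; since $\Xi$ contains every column of $\{-1,+1\}^{\qu}$ and no type-$\tau_U(u)$ user has rated $i$, it asserts that exactly half of $\mathcal{S}_t(i)$ is liked by $u$. Your involution $T_k$ is precisely the object that certifies that ``exactly half'' assertion: the coordinate flip $\phi_k$ restricted to $\mathcal{S}_t(i)$ is the bijection between the liked and disliked consistent types. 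What your version buys is rigor at the step the paper leaves implicit, namely the inductive verification that negating the $k$-th coordinate of $\tau_I(i)$ leaves every action $a_{v,s}=f_{v,s}(\H_{s-1},\zeta_{v,s})$ and every observed rating --- hence $\H_{t-1}$ and the event $(\BBul{u}{i}{t})^c$ itself --- unchanged; this is exactly where the hypothesis that no other user of type $\tau_U(u)$ has rated $i$ enters, since those are the only ratings that could read the flipped coordinate. You also correctly flag the small caveat that the statement is only meaningful when $u$ has not itself rated $i$ before time $t$, which holds in the paper's application because the claim is invoked with $i=a_{u,t}$ and each item is recommended at most once to each user.
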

\begin{proof}
According to Definition~\ref{d:user}, in the user-structure only model, the matrix $\Xi$ is deterministic and has columns consisting of all sequences in $\{-1,+1\}^{\qu}$.  We will show that $\Pr[L_{u,i}= -1 |(\BBul{u}{i}{t})^c,\H_{t-1},\tau_U(\cdot)]= \frac{1}{2}$ where  $\tau_U(\cdot)$ is the sequence of all user types.

A priori $\tau_I(i)$ is uniform on $[q_I]$. Given  the sequence $\tau_U(\cdot)$, the matrix $\Xi$ and the feedback $\H_{t-1}$ up to time $t-1$, the posterior distribution of $\tau_I(i)$ is uniform over the set of all item types $j$ which are consistent with the outcome of recommending $i$ to users of various types. We call this set $\mathcal{S}_t(i)$. 

Since on the event $(\BBul{u}{i}{t})^c$, no user with the same type as $u$ has rated $i$ by time $t-1$,  and since the matrix $\Xi$ has columns consisting of all sequences in $\{-1,+1\}^{\qu}$, half of the item types in set $\mathcal{S}_t(i)$ are liked by user $u$ and half of them are disliked. 
%
%
%
\end{proof}

The final ingredient in the proof of Proposition~\ref{p:userLinear} is a lower bound on the number of items recommended for which Claim~\ref{c:Brandom} applies.
\begin{claim}
\label{cl:ul-slope-cl3} 
The expected number of times a new item is recommended to a user type by time $T$ is lower bounded as
$$
\Ex\bigg[{\sum_{\substack{t\in[T], u\in[N]}} \ident[(\BBul{u}{a_{u,t}}{t})^c]}\bigg]
\geq 
\qu T\,\Big[1-e^{(-N/\qu)}\Big] \,.$$
\end{claim}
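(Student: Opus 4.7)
The strategy is to decompose the sum over user types and use the fact that for each user type $k$, no-repeat forces a single user of that type to generate $T$ distinct items, each of which must contribute at least once to the indicator sum. First, condition on the user-type assignments $(\tau_U(u))_{u\in[N]}$ and let $N_k = |\{u : \tau_U(u) = k\}|$ be the number of users of type $k$. Partition the sum as
\[
\sum_{t\in[T], u\in[N]} \ident[(\BBul{u}{a_{u,t}}{t})^c] \;=\; \sum_{k\in[\qu]} M_k(T), \qquad \text{where } M_k(T) = \sum_{t\in[T]} \sum_{u:\tau_U(u)=k} \ident[(\BBul{u}{a_{u,t}}{t})^c].
\]

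\textbf{Key step.} The goal is to argue $M_k(T) \geq T$ whenever $N_k \geq 1$. Let $\Ic_k(T)$ denote the set of items rated by some user of type $k$ at some time $t\leq T$, and set $D_k(T) = |\Ic_k(T)|$. For each item $i\in\Ic_k(T)$, let $t^*(i)$ be the smallest time at which some user of type $k$ rates $i$, and pick any user $u^*(i)$ of type $k$ with $a_{u^*(i),t^*(i)}=i$. By the minimality of $t^*(i)$, no other user of type $k$ has recommended $i$ at any earlier time, so $(\BBul{u^*(i)}{i}{t^*(i)})^c$ holds and $(u^*(i),t^*(i))$ contributes $1$ to $M_k(T)$. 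Since these pairs are distinct across different items, $M_k(T)\geq D_k(T)$. Now if $N_k \geq 1$, pick any user $u^\dagger$ of type $k$; the no-repeat rule forces $a_{u^\dagger,1},\dots,a_{u^\dagger,T}$ to be $T$ distinct items, all belonging to $\Ic_k(T)$, so $D_k(T) \geq T$.

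\textbf{Conclusion.} Taking expectations over the user types,
\[
\Ex\Big[\sum_{t,u}\ident[(\BBul{u}{a_{u,t}}{t})^c]\Big] \;=\; \sum_{k\in[\qu]} \Ex[M_k(T)] \;\geq\; T\sum_{k\in[\qu]} \Pr[N_k \geq 1].
\]
Since $N_k \sim \mathrm{Bin}(N,1/\qu)$, we have $\Pr[N_k\geq 1] = 1-(1-1/\qu)^N \geq 1-e^{-N/\qu}$ via the standard inequality $1-x\leq e^{-x}$. Summing over the $\qu$ types yields the desired bound $\qu T[1-e^{-N/\qu}]$.

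\textbf{Main obstacle.} The proof is essentially a counting argument, so there is no hard technical obstruction. The one subtlety worth being careful about is ensuring the lower bound $M_k(T)\geq D_k(T)$ holds even when several users of type $k$ first rate the same item at the same time instant: in that case multiple pairs $(u,t)$ may contribute to the same item, which only helps the inequality. The only other delicate point is that the indicator $(\BBul{u}{a_{u,t}}{t})^c$ excludes $v=u$ from its definition, which is precisely what makes the ``single user generates $T$ distinct rated items'' reduction clean.
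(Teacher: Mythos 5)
Your proof is correct and follows essentially the same route as the paper's: count, for each user type containing at least one user, the $T$ distinct items forced by the no-repeat rule, note that each item's first rating by that type contributes to the sum, and then lower bound the expected number of nonempty types by $\qu\big(1-(1-1/\qu)^N\big)\geq \qu\big(1-e^{-N/\qu}\big)$ (the paper packages this last step as a balls-and-bins lemma). Your write-up just makes explicit the distinctness of the contributing $(u,t)$ pairs, which the paper leaves implicit.
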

\begin{proof}
At the end of time-step $T$ each user has been recommended $T$ items, hence each \textit{user type} has been recommended at least $T$ items. 
Let $\widetilde{\qu}$ be the number of user types in which there is at least one user.
The total number of times an item is recommended to a user type for the first time is at least $\widetilde{\qu} T$.
Applying Lemma~\ref{l:ballsbins2} shows that $\Exp{\widetilde{\qu}}\geq \qu(1-(1-1/\qu)^N)\geq \qi \big[1-e^{(-N/\qu)}\big]$.
\end{proof}

We now complete the proof of Proposition~\ref{p:userLinear}.
\begin{proof}[Proof of Prop~\ref{p:userLinear}]
	Partitioning recommendations according to $\BBul{u}{a_{u,t}}{t}$ gives
\begin{align*}
N[T - &\reg(T)]= 
\Ex\bigg[{\sum
\,\, \ident[L_{u,a_{u,t}}= +1]}\bigg]\\
& =
 \Ex\bigg[{\sum
 \,\,\Pr[L_{u,a_{u,t}}= +1, \BBul{u}{a_{u,t}}{t}|\H_{t-1}]}\bigg]
 +\Ex\bigg[{\sum
 \,\,\Pr[L_{u,a_{u,t}}= +1, (\BBul{u}{a_{u,t}}{t})^c|\H_{t-1}]}\bigg]\,
\\
 &\overset{\text{Claim~\ref{c:Brandom}}}{\leq} 
 \Ex\bigg[\sum
 \,\,\Big(\Pr[\BBul{u}{a_{u,t}}{t}|\H_{t-1}]+ \frac{1}{2}\Pr[(\BBul{u}{a_{u,t}}{t})^c|\H_{t-1}]\Big)\bigg]
\\
& = 
  NT-\frac{1}{2} \Ex\bigg[{\sum
\,\,   \ident[(\BBul{u}{a_{u,t}}{t})^c]}\bigg]\,
\overset{\text{Claim~\ref{cl:ul-slope-cl3}}}{\leq}
  NT - \frac{T}{2}\qu \big[1-e^{(-N/\qu)}\big] \,,
\end{align*}
where all the summations are over $t\in[T]$ and $u\in[N]$.
Rearranging shows that $\reg(T) \geq \big[1-e^{(-N/\qu)}\big]\frac{\qu}{2N}T$ for all $T$.
\end{proof}
\section{Item-item algorithm and analysis}
\label{s:itemitem}

This section describes a version of item-item CF with explicit exploration steps and analyzes its performance within the setup specified in Section~\ref{sec:model}. 
The algorithm is quite different from the user-user algorithm. This is due to the inherent asymmetry between users and items: multiple users can rate a given item simultaneously but each user can rate only one item at each time-step.

\subsection{Algorithm}\label{ss:itemalg}
Algorithm \textsc{Item-Item} performs the following steps (see Algorithm~\ref{alg:item-fixed}). First, items are partitioned according to type; next, each user's preference for each item type is determined; finally, items from liked partitions are recommended. These steps are now described in more detail. 

Two sets $\Mcc$ and $\Mcct$, each containing $M$ random items, are selected. In the exploration step, each item is recommended to $\thr=\lceil 2\log(\qi/\epsilon)\rceil$ random users. The feedback from these recommendations later helps to partition the items according to type. The parameter $\thr$ is chosen large enough to guarantee small probability of error in partitioning ($\leq \epsilon$ for each item). The use of two sets $\Mcc$ and $\Mcct$, as opposed to just one, is to simplify the analysis; as described next, item type representatives are selected from $\Mcct$, and are used to represent clusters of items from $\Mcc$. 

An item from each of $\ell$ \emph{explored types} is recommended to all users, where $\ell$ is a parameter determined by the algorithm.
It turns out that it is often beneficial (depending on system parameters) to learn user preferences for only a subset of the types, in which case $\ell$ is strictly less than $\qi$.  Each of the $\ell$ items chosen from $\Mcct$ is thought of as a \emph{representative} of its type. 
For each $j=1,\dots,\ell$, all items in $\Mcct$ that appear to be of the same type as the representative item $\rep_{\ti}$ are stored in a set $\St{2}_\ti$ and then removed from $\Mcct$. This guarantees that at each time $\Mcct$ does not contain items with the same type as any of the previously selected representative items. 
For each $j=1,\dots,\ell$, all items in $\Mcc$ that appear to be of the same type as  $\rep_{\ti}$ are stored in a set $\St{1}_\ti$ and then removed from $\Mcc$. 

For each user $u$, we add the items in the groups $\St{1}_\ti$ whose representative $\rep_\ti$ were liked by $u$ to the set of exploitable items $\R_u\,$ by $u$.
Finally, in the exploitation phase, each user is recommended items from $\R_u$.
We choose the number $M$ of items in each of $\Mcc$ and $\Mcct$  as a function of $\ell$ to ensure that there are enough exploitable items in $\R_u$ for all users $u$ for the entire length-$T$ time-horizon. Then, $\ell$ is chosen to minimize regret.

The algorithm description uses the following notation. 
For an item $i$ and time $t>0$, $$\rated_t(i) = \{u\in [N]: a_{u,s}= i \text{ for some } s<t\}$$ is the set of users that have rated item $i$ before time $t$. 
The time $t$ is implicit in the algorithm description, with $\rated(i)$ used to represent $\rated_t(i)$ at the time of its appearance.

\begin{algorithm}
\caption{$\textsc{Item-Item}(T, \qi, N)$ (fixed time horizon)}\label{alg:item-fixed}
\begin{algorithmic}[1]
\State $\ell =\min\Big\{\left\lceil 18\log T+ \sqrt{330\frac{\qi \thr}{ N}T}\,\,\right\rceil,\qi\Big\}$ \vspace{1mm}
		\vspace{1mm}
\State $\epsilon= \frac{1}{2 \qi N}$; $\thr = \left\lceil 2\log\frac{\qi}{\epsilon}\right\rceil$;  $M=\left\lceil \frac{64\qi}{\ell} T \right\rceil$
\vspace{.2cm}
\State $\Mcc \gets M$  random items; $\Mcct\gets M$  new random items
\vspace{.1cm}
		\vspace{1mm}
\State $\R_u\gets\varnothing, \text{ for each } u\in[N]$ \quad (items exploitable by user $u$)
\vspace{.2cm}
\State $ \{\R_u\}_{u\in [N]}\gets  \textsc{ItemExplore}(\Mcc, \Mcct, \ell)$
\vspace{.2cm}
\State $\textsc{ItemExploit}(\{\R_u\}_{u\in [N]})$
\end{algorithmic}
\end{algorithm}

\begin{algorithm}
	\caption{$\textsc{ItemExplore}(\Mcc, \Mcct, \ell)$}\label{alg:itemExplore}
	\begin{algorithmic}[1]
	\State $\St{1}_\ti\gets\varnothing, \quad \ti\in[\qi]$ \quad (initialize sets of items of type $j$ in $\Mcc$)
	\State $\St{2}_\ti\gets\varnothing, \quad \ti\in[\qi]$ \quad (initialize sets of items of type $j$ in $\Mcct$)
	
\vspace{2mm}
		\Statex \emph{Rate items by a few users (preliminary exploration)}
\State Recommend each item in $\Mcc$ and $\Mcct$ to $\thr$ users from $[N]$ (details specified in Remark~\ref{r:usersToItems}).  \label{line:ItemExplore-initialallocation}
		\vspace{2mm}
		\Statex \emph{Partition items and learn item types} 
		\For{$\ti=1,\cdots,\ell$ \textbf{if} $\Mcct\neq \varnothing$} 
		\State $\rep_\ti \gets$ a random item in $\Mcct$ (a representative item)
		\State $a_{u,t}\gets \rep_\ti$ if $u\notin\rated(\rep_{\ti})$, otherwise a random item not from $\Mcct$ 
		\State $\St{1}_\ti \gets \big\{i\in \Mcc: L_{u, i} = L_{u,\rep_\ti}\text{ for all } u\in \rated(i)\cap \rated(\rep_{\ti})\big\}$,  \quad (users agree on $i$ vs. $\rep_\ti$)
		\vspace{.07cm}
		\State $\Mcc\gets \Mcc\setminus \St{1}_\ti$ 
 		\vspace{.07cm}
			\State $\St{2}_\ti \gets \big\{i\in \Mcct: L_{u, i} = L_{u,\rep_\ti} \text{ for all } u\in \rated(i)\cap \rated(\rep_{\ti})\big\}$
			 		\vspace{.07cm}
		\State $\Mcct\gets \Mcct\setminus \St{2}_\ti $
		\EndFor  
		\State $\R_u = \bigcup_{j\in[\ell]:L_{u,\rep_\ti}=+1} \St{1}_j\,\,\,$ for each $u\in [N]$
		\label{Line-alg:ItemUpperRu}
	\end{algorithmic} 
\Return  $\{\R_u\}_{u\in [N]}$
\end{algorithm}

\begin{algorithm}
	\caption{$\textsc{ItemExploit}(\{\R_u\}_{u\in [N]})$}\label{alg:itemExploit}
	\begin{algorithmic}[1]
\For{ remaining $ t \leq T $}
\For{$u\in [N]$}
\If{there is an item $i \in\R_u$ such that $u\notin \rated_t(i)$,} 
\State $a_{u,t}\gets i$ 
\Else  { } $a_{u,t} \gets$ a random item not yet rated by $u$.
\EndIf
\EndFor
\EndFor
	\end{algorithmic}
\end{algorithm}

\begin{remark} 
The set of items $\Mcc$ and $\Mcct$ are updated throughout algorithm $\textsc{ItemExplore}$. 
In the proof,  we use the notation 
$\Mcc_0$ and $\Mcct_0$ to refer to the set of items $\Mcc$ and $\Mcct$ at the beginning of the algorithm $\textsc{Item-Item}$. 
\end{remark}
\begin{remark} \label{r:usersToItems}
Assignment of users to items for Line~\ref{line:ItemExplore-initialallocation} of {\sc ItemExplore}: This is done over $\lceil{(|\Mcc_0|+|\Mcct_0|)\thr}/{N}\rceil\leq 2M\thr/N+1$ time-steps, with additional recommendations being random new items. The main requirement in assigning users to items is to make Claim~\ref{cl:Ftilde}  (which bounds the probability of mis-classification of each item) hold.
 Since the claim addresses each item separately, we may introduce dependencies between sets of users for different items. What is important is that the set of users assigned to each specific item is uniform at random among all sets of users (or at least contains a random subset with size $\thr$). For example, one can choose a random permutation over the users, repeat this list $\lceil{(|\Mcc_0|+|\Mcct_0|)\thr}/{N}\rceil\leq 2M\thr/N+1$ times, and then assign each item to a block of $\thr$ users.
\end{remark}

\begin{theorem}\label{th:Item-upper}
	Let $\thr=\lceil 2\log(2N\qi^2) \rceil$. 
Suppose $\qi>13\log N$ and $\qu>4\thr$.
	Then \textsc{Item-Item} (Algorithm \ref{alg:item-fixed}) obtains regret per user at time $T$ upper bounded as 
\begin{equation*}
\reg(T)
< 
\min\left\{ Y(T),\, \frac{T}{2}\right\}
\end{equation*} 
where we define 
\begin{equation}\label{eq:IU-RegBound}
Y(T)=
4+ \max\left\{52\log T \, ,\, 48 \sqrt{\frac{\qi \thr}{N}T} ,\, 270 \frac{\thr}{N} T\right\}\,.
\end{equation} 
\end{theorem}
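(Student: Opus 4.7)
The plan is to decompose the per-user expected regret into contributions from the three phases of the algorithm -- preliminary exploration, representative learning, and exploitation -- and to bound each on a carefully chosen ``good event'' where the cluster assignments are correct and every user has enough exploitable items. The trivial bound $\reg(T)\leq T/2$ follows by noting that in both exploration phases each recommended item is fresh and unbiased for the user (and hence disliked with probability exactly $1/2$), while in the exploitation phase the algorithm's belief that an item is liked only improves matters; thus we only need to establish the sharper bound $\reg(T)\leq Y(T)$. The choice $\ell = \min\{\lceil 18\log T + \sqrt{330\qi\thr T/N}\,\rceil,\qi\}$ partitions the analysis into three regimes matching the three terms in the max defining $Y(T)$.

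The first ingredient is control of mis-classification. Each item in $\Mcc_0\cup\Mcct_0$ is rated by $\thr$ uniformly random users (via the assignment described in Remark~\ref{r:usersToItems}), and two items of different types are placed in the same cluster only if every common rater rates them identically. With $\qu>4\thr$ a balls-and-bins argument analogous to Lemma~\ref{l:user-partition-prob} shows that most pairs of items share many users of distinct user-types, and the choice $\thr = \lceil 2\log(\qi/\epsilon)\rceil$ with $\epsilon = 1/(2\qi N)$ then bounds the probability that a given item of $\Mcc_0$ is placed in a wrong cluster by $\epsilon$. A union bound over the at most $M$ items of $\R_u$ bounds the expected number of disliked exploitation recommendations to user $u$ by $M\epsilon = O(T/(\ell N))$.

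The second ingredient is to show that $\R_u$ supplies exploitable items for all of user $u$'s remaining $T - \lceil 2M\thr/N\rceil - \ell$ exploitation time-steps. Two Chernoff bounds will do it: first, since the representatives $\rep_1,\dots,\rep_\ell$ are liked independently with probability $1/2$, the number liked by $u$ concentrates around $\ell/2$ provided $\ell\geq 18\log T$ -- which is exactly the lower bound enforced in the definition of $\ell$; second, each item type appears in $\Mcc_0$ roughly $M/\qi$ times by another balls-in-bins estimate. With $M = \lceil 64\qi T/\ell\rceil$ these combine to guarantee $|\R_u|\geq 32T$ with failure probability $O(1/T)$ per user, so that \textsc{ItemExploit} never needs to fall back on a random new item.

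Combining everything, the per-user expected regret is at most $\tfrac{1}{2}\lceil 2M\thr/N\rceil + \ell/2 + M\epsilon$ plus $o(1)$ from the union of low-probability bad events. Substituting $M = 64\qi T/\ell$ gives an expression of the form $C_1\qi\thr T/(\ell N) + C_2\ell$; plugging in the algorithm's choice of $\ell$ and separating on which of the three summands dominates recovers the three cases of $Y(T)$, while the truncation $\ell = \qi$ yields the $(\thr/N)T$ slope. The main obstacle will be the bookkeeping of low-probability events in the exploitation phase: one needs to show that the aggregate contribution of mis-classification and of $|\R_u|$-shortfall is bounded by a constant per user, not by a quantity growing with $T$ or $N$. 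This is precisely why $\epsilon$ is chosen to shrink in $N$ and $\qi$, and why $\ell$ is forced to be at least $\Omega(\log T)$.
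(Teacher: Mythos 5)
Your decomposition (exploration phases, shortfall of $\R_u$, misclassification during exploitation) and your endgame (plug in $M=\lceil 64\qi T/\ell\rceil$ and the algorithm's $\ell$, split into three regimes) coincide with the paper's, and your first ingredient is essentially the paper's Claims~\ref{cl:Ftilde} and~\ref{cl:iu-error-indep2}. The problem is your second ingredient.

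The assertion that ``the representatives $\rep_1,\dots,\rep_\ell$ are liked independently with probability $1/2$,'' so that the number liked by $u$ concentrates near $\ell/2$, is unjustified and is precisely the step the paper identifies as the technical crux. The type of $\rep_\ti$ is drawn from whatever remains of $\Mcct$ after the clusters $\St{2}_1,\dots,\St{2}_{\ti-1}$ have been removed, and those removals depend on misclassification events, which depend on how users rate the items, i.e.\ on the columns of $\Xi$. Since $u$'s own preferences $\Lc_u$ are determined by the same matrix (two item types that look alike to many raters are both more likely to be merged and more likely to be rated identically by $u$), the sequence of representative types is correlated with $\Lc_u$, and the events $\{L_{u,\rep_\ti}=+1\}$ are neither independent nor individually unbiased conditional on the history. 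The paper's Lemma~\ref{cl:IU-lukbd} repairs this by (i) conditioning on $|\Lc_u|\geq \qi/4$, (ii) introducing the event $\iuerr^c$ that no item type has more than a tenth of its copies in $\Mcct_0$ misclassified --- whose probability bound itself requires discarding $u$ from each witness set $\U_i$ so that the remaining $\thr/2-1$ distinct-type witnesses are independent of $\Lc_u$ --- and (iii) showing that conditional on $\iuerr^c$ each new representative type is within a factor $10/9$ of uniform on the unexplored types. This yields only stochastic domination by a $\mathrm{Binom}(\ell,1/12)$ variable and hence $\tilde\ell_u\geq \ell/20$ with high probability, not $\approx\ell/2$; your constant in ``$|\R_u|\geq 32T$'' is correspondingly not obtainable, though the rest of the outline survives with weaker constants once this decoupling is supplied. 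A secondary bookkeeping slip: the failure probability of the event $\{|\R_u|>T\}$ is $O(e^{-\ell/13}+1/N)$ (Lemma~\ref{l:IU-exp-R-u}), not $O(1/T)$, and the $O(1/N)$ part contributes an unavoidable $O(\thr T/N)$ to the regret rather than a per-user constant.
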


The simplified version of this theorem in Section \ref{s:results} is obtained  $2\log (N\qi)< \thr<5\log (N\qi)$ and $N>5$.

\begin{remark}\label{remark:II-Upper-TrivialAlg}
The regret bound we get for the algorithm is actually $\reg(T)\leq Y(T)$. One can obtain $T/2$ by a trivial algorithm which recommends random items independent of feedback to all users. This trivial algorithm  improves on our bound for 
the parameter range  in which $T/2 < Y(T)$. Thus in our analysis we focus on the parameter range where $Y(T)\leq T/2$; one consequence is that $\ell < T$ as chosen in Algorithm~\ref{alg:item-fixed}.
\end{remark}

\subsection{Proof of Theorem~\ref{th:Item-upper}}
We prove Theorem~\ref{th:Item-upper}, deferring several lemmas and claims to the next subsection.

The basic error event is misclassification of an item. In  \textsc{ItemExplore} (Alg.~\ref{alg:itemExplore}), $\St{1}_\ti$ is the set of items in $\Mcc$ that the algorithm posits are of the same type as the $j$-th representative $\rep_\ti\,.$ 
 Let $\EE^1_i$ be the event that item $i$ was mis-classified, 
		\begin{equation}\label{e:EE}
		\EE^1_i = \{\exists \ti : i\in\St{1}_\ti, \tau_I(i)\neq \tau_I(\rep_\ti)\}\,.
		\end{equation}
	Let $T_0$ be the number of time-steps spent making recommendations in \textsc{ItemExplore}.
	Recall that $\R_u$ is the set of items to be recommended to user $u$ in the exploit phase.
We partition the recommendations made by Algorithm \textsc{Item-Item} to decompose the regret 
as follows:
\begin{align}
N\reg(T) & =  \Exp{ \sum_{u=1}^N\sum_{t=1}^{\Tu} \ident[L_{u,a_{u,t}}=-1]}\notag\\
&\quad + \Exp{ \sum_{u=1}^N\sum_{t=\Tu+1}^T  \ident[L_{u,a_{u,t}}=-1, a_{u,t}\notin\R_u]}\notag\\
& \quad +  \Exp{ \sum_{u=1}^N\sum_{t=\Tu +1}^T  \ident[L_{u,a_{u,t}}=-1, a_{u,t}\in\R_u, \EE^1_{a_{u,t}}]}\notag\\
& \quad +  \Exp{ \sum_{u=1}^N\sum_{t=\Tu+1}^T \ident[L_{u,a_{u,t}}=-1, a_{u,t}\in\R_u, (\EE^1_{a_{u,t}})^c]} \notag\\
& =: \Asf{1} + \Asf{2} + \Asf{3} + \Asf{4}\,. \label{eq:IU-reg-decomp}
\end{align}
The first term, $\Asf{1}$, is the regret from early time-steps up to $\Tu$. The second term,  $\Asf{2}$, is the regret due to not having enough items available for the exploitation phase, which is proved to be small with high probability for sufficiently large $M$. The third term, $\Asf{3}$, is the regret due to exploiting the misclassified items. It is small since few items are misclassified with the proper choice of $\epsilon$ and $\thr$. The fourth term, $\Asf{4}$, is the regret due to exploiting the correctly classified items. It is intuitively clear and will be checked later that $\Asf{4}=0\,.$
\paragraph{Bounding $\Asf{1}$.}
 Line~\ref{line:ItemExplore-initialallocation} of \textsc{ItemExplore} takes at most $\big\lceil\frac{2M\thr}{N}\big\rceil$ units of time to rate every item in $\Mcc_0$ and $\Mcct_0$ by $\thr$ users each, since $N$ users provide feedback at each time-step. 
 Remark~\ref{r:usersToItems} above discusses the assignment of users to items in this phase.
 After this, $\ell$ representative items are rated by every user in the for loop (lines 4 through 10 of \textsc{ItemExplore}),
 which takes $\ell$ time-steps. 
 This gives
\[\Tu \leq  \bigg\lceil\frac{2M\thr}{N}\bigg\rceil + \ell  \leq  \frac{2M\thr}{N} + \ell +1 \,.\] 
For $t\leq \Tu\,,$ from the perspective of any user the items recommended to it are of random type and hence 
\begin{align}
\Asf{1} & =  \Exp{\sum_{u=1}^N \sum_{t=1}^{\Tu} \ident[L_{u,a_{u,t}}=-1]} = N \Tu \leq  2 {M\thr} +(\ell +1)N \,.\label{eq:IU-bound-A}
\end{align}
\paragraph{Bounding $\Asf{2}$.}
Time-steps $t>\Tu$ are devoted to exploitation as described in \textsc{ItemExploit}. 
During this phase, a random item $a_{u,t}\notin \R_u$ is recommended to user $u$ only when there are no items to exploit because all items in $\R_u$ have already been recommended to $u$. So, the total number of times an item $a_{u,t}\notin \R_u$ is recommended in the time interval $\Tu <  t\leq T$ is at most $(T-|\R_u|)_+$, and
\begin{align}
\Asf{2} &= \Ex\bigg[{\sum_{u\in[N]} {\sum_{t=\Tu+1}^T \ident[L_{u,a_{u,t}}=-1, a_{u,t}\notin \R_u]}}\bigg] \leq
\Ex \bigg[{  \sum_{u\in [N]}  {\sum_{t=\Tu+1}^T \ident[ a_{u,t}\notin \R_u]}}\bigg]\notag\\
&\leq \sum_{u\in [N]} \Exp{(T -|\R_{u} |)_+}
 \leq 
 3TN\exp(-{\ell}/{13})+ 30T
\,,
 \label{eq:IU-bound-B}
\end{align}
where the last inequality is from Lemma~\ref{l:IU-exp-R-u} in Section~\ref{sec:ItemUpper-Lemmas}.

\paragraph{Bounding $\Asf{3}$.}
Term $\Asf{3}$ in~\eqref{eq:IU-reg-decomp} is the expected number of mistakes made by the Algorithm {\sc ItemExploit} as a result of misclassification. Claim~\ref{cl:iu-error-indep2} below upper bounds the expected number of ``potential misclassifications'' (defined in Equation~\eqref{eq:Ftilde}) in the algorithm to provide an upper bound for this {quantity}: 
\begin{align}
	\Asf{3}& = \Ex\bigg[{\sum_{u=1}^N \sum_{t=\Tu+1}^T  \ident[L_{u,a_{u,t}}=-1, a_{u,t}\in\R_u, \EE^1_{a_{u,t}}]}\bigg]
	\notag\\
	& \leq  \Ex\bigg[{\sum_{u=1}^N \sum_{t=\Tu+1}^T \sum_{i\in\mathbb{N}} \ident[a_{u,t}=i, i\in\R_u, \EE^1_i]} \bigg]
	\overset{\textrm{Claim}~\ref{cl:iu-error-indep2}}{\leq}  2 N M \epsilon \leq \frac{M}{\qi}\,.\label{eq:IU-bound-C}
\end{align}
\paragraph{Bounding $\Asf{4}$.}
By definition of the mis-classification  event, $\EE^1_i$, given in Equation~\eqref{e:EE}, if an item $i\in\St{1}_{\ti}\,$ is correctly  classified, then $\tau_I(i)=\tau_I(\rep_{\ti})$. 
Since $L_{u,i} = \xi_{\tau_U(u),\tau_I(i)}$, all users rate $i$ the same as $\rep_{\ti}\,$. 
By construction of the sets $\R_u$ in Line~\ref{Line-alg:ItemUpperRu} of  {\sc ItemExploit}, for any item $i\in\R_u$, there is some $\ti\in[\qi]$ such that $i\in\St{1}_{\ti}$ and $u$ likes item $\rep_{\ti}\,$.
 Hence,
\begin{align}
\Pr[L_{u,i}=-1| i\in\R_u, (\EE^1_i)^c ] &= \Pr[L_{u,i}=-1| \exists \ti: L_{u,\rep_\ti}=+1, i\in\St{1}_{\ti}, \tau_I(i)= \tau_I(\rep_\ti)]\notag\\
& = \Pr[L_{u,i}=-1| \xi_{\tau_U(u),\tau_I(i)}=+1] =0\,.\label{eq:IU-bound-D}\notag
\end{align}
It follows that $\Asf{4} = 0$.
\paragraph{Combining all the bounds.}
Plugging in Equations~\eqref{eq:IU-bound-A},~\eqref{eq:IU-bound-B},~\eqref{eq:IU-bound-C} and~$\Asf{4} = 0$ into Equation~\eqref{eq:IU-reg-decomp}
 gives
\begin{align*}
N\reg(T) 
&\leq 2 M\thr + (\ell+1) N
+
3TN\exp(-{\ell}/{13})+ 30T + \frac{M}{\qi}\,.
\end{align*}
Setting $M=\frac{64T\qi}{\ell}$ gives
\begin{align*}
\reg(T)
& \overset{(a)}{\leq} 
2 + 134  \frac{T\qi \thr}{\ell N} +\ell  + 3T\exp\Big(-\frac{\ell}{13}\Big)
\overset{(b)}{\leq} 
\begin{cases}
4+ 7\log T+ 24\sqrt{\frac{\qi \thr}{ N}T\,}\, &\text{ if } \ell<\qi\\
{\qi+1}+ 135\frac{\thr}{N}T
&\text{ if } \ell = \qi
\end{cases}
\end{align*}
where (a) holds for $\ell\geq 13$ (imposed by the algorithm for $T\geq 2$) which gives  $M/\qi  \leq  5T$ and $\thr>6$ (which holds if $\qi,N\geq 3$).
If $\ell=\qi$, since $\qi>13\log N$, we have $3T\exp(-\ell/13)\leq 3T/N\leq \thr\, T/N$. 
(b) is obtained by choosing the parameter $\ell$ to be 
$$\ell =\min\left\{\left\lceil 13\log T+ \sqrt{134\frac{\qi \thr}{ N}T}\,\,\right\rceil,\qi \right\}\,.$$ 
Next, we will show that  $\reg(T)\leq Y(T)$ for $Y(T)$ defined in~\eqref{eq:IU-RegBound}. This is clearly true for $\ell<\qi$. 

To show  $\reg(T)\leq Y(T)$ for $\ell=\qi$, we consider two cases:  If $\qi \leq 135 T\thr/N$, then $\reg(T)\leq 270 T\thr/N+1$. If  $135 T\thr/N< \qi$, we know that  $\qi\leq\left\lceil 13\log T+ \sqrt{134\frac{\qi \thr}{ N}T}\,\,\right\rceil$ (which is true since $\ell=\qi$). This gives $\reg(T)\leq 2 \left\lceil 13\log T+ \sqrt{134\frac{\qi \thr}{ N}T}\,\,\right\rceil\leq Y(T)$.

\subsection{Lemmas used in the proof}\label{sec:ItemUpper-Lemmas}
In the remainder of this section we state and prove the lemmas used in the analysis above.
\begin{claim}\label{cl:iu-error-indep2}
Suppose that $\qu>4\thr$.
The expected total number of times a misclassified item is recommended in the exploitation step is upper bounded as
\[ \Exp{\sum_{u=1}^N \sum_{t=\Tu+1}^T \sum_{i\in\mathbb{N}} \ident[a_{u,t}=i, i\in\R_u, \EE^1_i]}\leq  N M  \epsilon\,.\]
\end{claim}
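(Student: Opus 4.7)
The plan is to convert the triple sum into a per-item misclassification probability and then bound that probability using the choice $\thr=\lceil 2\log(\qi/\epsilon)\rceil$ made in the algorithm.

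First I would collapse the sums using two algorithmic constraints. Since {\sc ItemExploit} never repeats a recommendation to the same user, and since $\R_u\subseteq\Mcc_0$ by Line~\ref{Line-alg:ItemUpperRu} of {\sc ItemExplore}, for every $u$,
\[
\sum_{t=\Tu+1}^T \sum_{i\in\mathbb{N}} \ident[a_{u,t}=i,\, i\in\R_u,\, \EE^1_i]
\;\leq\; \sum_{i\in\Mcc_0}\ident[\EE^1_i]\,.
\]
Summing over users and taking expectations, using $|\Mcc_0|=M$, yields
\[
\Ex\!\left[\sum_{u,t,i}\ident[a_{u,t}=i,\, i\in\R_u,\, \EE^1_i]\right]
\;\leq\; \sum_{u=1}^N\sum_{i\in\Mcc_0}\Pr[\EE^1_i]
\;\leq\; NM\,\max_{i\in\Mcc_0}\Pr[\EE^1_i]\,.
\]
So the claim reduces to showing $\Pr[\EE^1_i]\leq \epsilon$ for each $i\in\Mcc_0$.

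For the per-item bound I would argue as follows. The event $\EE^1_i$ requires that $i$ lands in $\St{1}_\ti$ for some representative $\rep_\ti$ with $\tau_I(\rep_\ti)\neq\tau_I(i)$; equivalently, every user in $\rated(i)\cap\rated(\rep_\ti)$ rates $i$ and $\rep_\ti$ identically despite the items having different types. Since the entries of $\Xi$ are i.i.d.\ $\pm 1$, conditional on the items having distinct types each distinct user type appearing among the common raters contributes an independent agreement of probability $1/2$. The assignment rule of Remark~\ref{r:usersToItems} gives each item a uniformly random set of $\thr$ raters; combined with the hypothesis $\qu>4\thr$ and a balls-and-bins argument (Lemma~\ref{l:ballsbins}), these $\thr$ raters contain at least $\thr/2$ distinct user types with probability at least $1-e^{-\thr/2}$. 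A union bound over the at most $\qi$ candidate wrong representative types, combined with $\thr=\lceil 2\log(\qi/\epsilon)\rceil$, then yields $\Pr[\EE^1_i]\leq\epsilon$.

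The main technical hurdle I anticipate is ensuring that the common raters $\rated(i)\cap\rated(\rep_\ti)$ are themselves numerous and type-diverse: with only $\thr$ raters per item out of $N$ users, naive independence gives an expected overlap of order $\thr^2/N$, which may be small. The recycled-permutation assignment of Remark~\ref{r:usersToItems} is crafted precisely to avoid this pathology, and the cleanest route is to invoke the accompanying Claim~\ref{cl:Ftilde} (which bounds per-item misclassification under this assignment scheme) as a black box to complete the per-item bound and conclude.
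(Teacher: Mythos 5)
Your proposal is correct and follows essentially the same route as the paper: restrict the sum to $i\in\Mcc_0$ via $\R_u\subseteq\Mcc_0$, use the fact that each item is recommended at most once per user (hence at most $N$ times in total), and reduce to a per-item misclassification probability that is controlled exactly by Claim~\ref{cl:Ftilde} (the paper formalizes your union-bound sketch through the ``potential error'' event $\EEb_{i,U_i}\supseteq\EE^1_i$, which decouples the bound from the order in which representatives are chosen). The only discrepancy is a factor of $2$: Claim~\ref{cl:Ftilde} gives $\Pr[\EEb_{i,U_i}]\leq 2\epsilon$ rather than the $\epsilon$ you assert (the balls-and-bins failure contributes an extra $\epsilon/\qi$ term), so the derivation actually yields $2NM\epsilon$ --- an inconsistency already present between the paper's claim statement and its own proof, and immaterial to how the bound is used downstream.
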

\begin{proof} 
Event $\EE^1_i$, defined in~\eqref{e:EE} to be the misclassification of item $i\in\Mcc$, occurs if the preferences of random users classifying item $i$ is the same as a previous representative item with a different type. 
Given matrix $\Xi$, this event is a function of the order of choosing the representative items and the choice of random users in Line~\ref{line:ItemExplore-initialallocation} of {\sc ItemExplore}. 
Instead of directly analyzing $\EE^1_i$, we define an event called ``potential error event'' $\EEb_{i,U_i}$, which will be shown to satisfy $\EE^1_i\subseteq\EEb_{i,U_i}$; an upper bound for $\Pr[\EEb_{i,U_i}]$  is given in Claim~\ref{cl:Ftilde}.

For item $i$ and subset of users $U\subseteq [N]\,,$ define 
\begin{equation}
\label{eq:Ftilde}
\EEb_{i,U} =\{\exists \ti\neq \tau_I(i): L_{u,i}=\xi_{\tau_U(u),\ti}\,, \text{ for all } u\in U\}
\end{equation}
 to be the event that the ratings of users in $U$ for item $i$ agree with some other item type. 
For item $i\in\St{1}_{\ti}$, if $t$ is the time $i$ is added to $\St{1}_{\ti}$ in the exploration phase, let $U_i=\rated_t(i)\cap\rated_t(\rep_{\ti})$ be the set of witness users whose ratings were used to conclude that $i$ and $\rep_{\ti}$ are of the same type. 
Item $i$ is added to $\St{1}_{\ti}$ only if all users in $U_{i}$ agree on $i$ vs. $\rep_{\ti}\,$, so misclassification $\EE^1_i$ (defined in Equation~\eqref{e:EE}) implies $\EEb_{i,U_i}$. 
We can now deduce the inequalities, justified below:
\begin{align*}
\Ex \bigg[\sum_{u=1}^N \sum_{t=\Tu+1}^T \sum_{i\in\mathbb{N}} \ident[a_{u,t}=i, i\in\R_u, \EE^1_i]\bigg]
& \overset{(a)}{\leq}\Ex\bigg[{\sum_{u=1}^N \sum_{t=\Tu+1}^T\sum_{i\in\Mcc_0} \ident[a_{u,t}=i, i\in\R_u, \EEb_{i,U_i}]} \bigg]\\
& \overset{(b)}{\leq}  N\Ex\bigg[{\sum_{i\in\Mcc} \ident[\EEb_{i,U_i}]}\bigg]
\overset{(c)}{\leq} 2 N M  \epsilon\,.
\end{align*}%
 Inequality (a) holds because according to Line~\ref{Line-alg:ItemUpperRu} of {\sc ItemExplore} for every user $u\,$, the set $\R_u$ is a subset of $\Mcc_0$, and also the containment $\EE_i\subseteq \EEb_{i,U_i}$;
(b) follows since each item $i$ is recommended at most $N$ times; 
(c) uses $|\Mcc_0|= M$ together with Claim~\ref{cl:Ftilde} which shows that $\Pr[\EEb_{i,U_i}]\leq 2\epsilon$.
\end{proof}

\begin{claim}\label{cl:Ftilde} Suppose that $\qu>4\thr$.
Consider the ``potential error" event $\EEb_{i,U_i}$ defined in \eqref{eq:Ftilde} with the set of users $U_i$ defined immediately after. 
Then
$\Pr[\EEb_{i,U_i}] \leq 2\epsilon$ for all $i\in\Mcc_0$.
\end{claim}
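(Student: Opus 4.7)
The plan is to decompose the event $\EEb_{i,U_i}$ using the set of distinct user types represented in $U_i$, and to bound the two resulting terms using the balls-into-bins lemma and a union bound over alternative item types.

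First, I would observe what $U_i$ actually looks like. At the time $i$ is added to $\St{1}_{\ti}$ inside iteration $\ti$ of the for loop, line~6 of \textsc{ItemExplore} has already recommended $\rep_{\ti}$ to every user who had not previously rated it, so $\rated_t(\rep_{\ti}) = [N]$ and hence $U_i = \rated_t(i) \cap \rated_t(\rep_{\ti}) = \rated_t(i)$. By Remark~\ref{r:usersToItems}, this is a uniformly random subset of $[N]$ of size $\thr$, so the user types $\tau_U(u)$ for $u \in U_i$ are i.i.d.\ uniform on $[\qu]$.

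Second, condition on $\tau_I(i)$, on the identities of the users in $U_i$, and on their types; let $s$ denote the number of distinct user types appearing in $U_i$. For each fixed alternative item type $\ti \neq \tau_I(i)$, the event $\{L_{u,i} = \xi_{\tau_U(u),\ti} \text{ for all } u \in U_i\}$ is the same as $\{\xi_{\tau_U(u),\tau_I(i)} = \xi_{\tau_U(u),\ti} \text{ for all } u\in U_i\}$, and since the columns $\xi_{\cdot,\tau_I(i)}$ and $\xi_{\cdot,\ti}$ of $\Xi$ are independent and i.i.d.\ $\mathrm{Unif}\{-1,+1\}$, this event has probability exactly $2^{-s}$. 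A union bound over the at most $\qi$ alternative types yields $\Pr[\EEb_{i,U_i}\mid s] \leq \qi \cdot 2^{-s}$.

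Third, I would use Lemma~\ref{l:ballsbins} (the balls-and-bins tool used already in the proof of Lemma~\ref{l:user-partition-prob}) to control $s$. Tossing $\thr$ balls into $\qu$ bins with $\qu > 4\thr$ gives $\Pr[s < \thr/2] \leq \exp(-\thr/2)$. Putting the pieces together,
\[
\Pr[\EEb_{i,U_i}] \leq \Pr[s < \thr/2] + \qi \cdot 2^{-\thr/2} \leq e^{-\thr/2} + \qi \cdot 2^{-\thr/2}.
\]
Since $\thr \geq 2\log(\qi/\epsilon)$, we have $\qi \cdot 2^{-\thr/2} \leq \epsilon$, and since $e^{-\thr/2} \leq 2^{-\thr/2} \leq \epsilon/\qi \leq \epsilon$, the right-hand side is at most $2\epsilon$, as required.

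I do not anticipate a major obstacle; the only point demanding care is the conditional independence justification in the second step. Formally, because the entries of $\Xi$ are independent of the random user allocation in line~\ref{line:ItemExplore-initialallocation} and of $\tau_I$, conditioning on $\tau_I(i)$ and on the identities and types of $U_i$ leaves the remaining columns $\{\xi_{\cdot,\ti}\}_{\ti \neq \tau_I(i)}$ i.i.d.\ fair coins, which is what makes the per-alternative-type probability exactly $2^{-s}$.
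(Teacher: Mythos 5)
Your proof is correct and follows essentially the same route as the paper's: condition on the number of distinct user types in $U_i$ (controlled via the balls-and-bins Lemma~\ref{l:ballsbins} using $\qu>4\thr$), bound the per-alternative-type agreement probability by $2^{-s}$ using independence of the columns of $\Xi$, and union bound over the at most $\qi$ alternative item types, with $\thr\geq 2\log(\qi/\epsilon)$ closing the computation. The only cosmetic difference is that you make explicit the observation $U_i=\rated_t(i)$ (since $\rep_{\ti}$ has been rated by everyone), which the paper states implicitly.
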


\begin{proof}
 Each representative item $\rep_j$ chosen in $\textsc{ItemExplore}$ is rated by all of the users.
Other items in $\Mcc_0$ and $\Mcct_0$ are rated by at least $\thr$ users in the exploration phase. 
Remark~\ref{r:usersToItems} describes how Line~\ref{line:ItemExplore-initialallocation} in {\sc ItemExplore} is performed to guarantee that the set of $\thr$ users assigned to each specific item is uniformly at random among all subset of users of size $\thr$. 
By Lemma~\ref{l:ballsbins}, if $\qu>4\thr$, then with probability at least $1-\exp(-\thr/2)\geq 1-\epsilon/\qi$ there are $\thr/2$ users with distinct user types in a specific~$U_i$.
 It follows that the $\thr/2$ users with distinct types (chosen independently of the feedback) that rate item $i$ of type $\tau_{I}(i)$ also have the same ratings for type $j\neq \tau_{I}(i)$
 with probability at most $2^{-\thr/2}$. 
 (Any two item types $j\neq j'$ have jointly independent columns in the preference matrix.) The choice $\thr\geq 2\log({\qi}/{\epsilon})$ and a union bound over item types $\ti$ completes the proof. 
\end{proof}

\begin{lemma}\label{l:IU-exp-R-u}
For user $u\in[N],$ let $\R_{u}$ be defined as in Line 10 of algorithm \textsc{ItemExplore}. 
Then
\[\Pr\big[\,|\R_u|\leq T\,\big] \leq 3\, e^{-\nicefrac{\ell}{18}}\,+\,\tfrac{30}{N}\,,\]
and 
\[\Exp{(T- |\R_u |)_+} \,\leq\, 3\, T\, e^{-\nicefrac{\ell}{18}}+ \tfrac{30}{N}T\,.\]
\end{lemma}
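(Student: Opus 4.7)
The second inequality follows at once from the first: since $|\R_u|\geq 0$ we have $(T-|\R_u|)_+\leq T$, hence $\Ex[(T-|\R_u|)_+]\leq T\,\Pr[|\R_u|\leq T]\leq T\bigl(3e^{-\ell/18}+30/N\bigr)$, matching the claim. So I focus on the first inequality.

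The plan is to compare $\R_u$ with an idealized set $\widetilde{\R}_u$ one would obtain under perfect item classification, and reduce to standard concentration. Let $J^*=\{\tau_I(\rep_\ti):\ti\in[\ell]\}$ be the set of item types of the representatives chosen in \textsc{ItemExplore}, $J^+=\{j\in J^*:\xi_{\tau_U(u),j}=+1\}$ the subset liked by $u$, and $\widetilde{\R}_u=\{i\in\Mcc_0:\tau_I(i)\in J^+\}$. On the event that no item of $\Mcc_0\cup\Mcct_0$ is misclassified, the representatives have $\ell$ distinct types and $\widetilde{\R}_u\subseteq\R_u$. Letting $\iuerr$ denote the total number of misclassified items in $\Mcc_0\cup\Mcct_0$, we always have $|\widetilde{\R}_u\setminus \R_u|\leq \iuerr$; combining Claim~\ref{cl:Ftilde} with $\epsilon=1/(2\qi N)$ and $M=\lceil 64T\qi/\ell\rceil$ gives $\Ex[\iuerr]\leq 4M\epsilon=O(T/(\ell N))$, whence $\Pr\bigl[|\widetilde{\R}_u\setminus \R_u|\geq T\bigr]\leq O(1/(\ell N))\leq 30/N$ by Markov (provided $\ell$ exceeds a small absolute constant, which holds by the algorithm's choice).

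It thus remains to prove $\Pr[|\widetilde{\R}_u|<2T]\leq 3e^{-\ell/18}$, which I derive via three successive concentration bounds. First, let $G_1=\{|J^*|=\ell\}$; this fails only if $\Mcct_0$ contains fewer than $\ell$ distinct item types, or if misclassification in $\Mcct_0$ causes the algorithm to repeat a type. A standard Chernoff-type estimate on the number of distinct bins among $M$ uniform samples from $[\qi]$---with the special case $\ell=\qi$ handled by noting that the algorithm only chooses $\ell=\qi$ when $T$ is so large that $M=64T$ vastly exceeds $\qi\log\qi$---gives $\Pr[G_1^c]\leq e^{-\ell/18}$. Second, on $G_1$ and conditional on $J^*$, the signs $\{\xi_{\tau_U(u),j}\}_{j\in J^*}$ are i.i.d.\ uniform on $\{\pm 1\}$ by the independence of $\Xi$ from item types, so $|J^+|\sim\mathrm{Bin}(\ell,1/2)$ and Hoeffding gives $\Pr[|J^+|<\ell/3\mid G_1]\leq e^{-\ell/18}$. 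Third, conditional on $|J^+|=k\geq \ell/3$ and on $J^+$, since the types $\{\tau_I(i)\}_{i\in\Mcc_0}$ are i.i.d.\ uniform on $[\qi]$ and independent of both $\Mcct_0$ and $\Xi$, we have $|\widetilde{\R}_u|\sim\mathrm{Bin}(M,k/\qi)$ with mean at least $M\ell/(3\qi)=64T/3>20T$; a Chernoff lower tail bound then gives $\Pr[|\widetilde{\R}_u|<2T\mid G_1,|J^+|\geq\ell/3]\leq e^{-cT}$ for an absolute constant $c$, and since $\ell\geq 18\log T$ by the algorithm's choice whenever $\ell<\qi$, this is at most $e^{-\ell/18}$.

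Summing the three exponential bounds yields $\Pr[|\widetilde{\R}_u|<2T]\leq 3e^{-\ell/18}$, which combined with the Markov contribution proves $\Pr[|\R_u|\leq T]\leq 3e^{-\ell/18}+30/N$. The main technical obstacle is the coverage estimate for $G_1$: one must show that $M=\lceil 64T\qi/\ell\rceil$ i.i.d.\ uniform samples from $[\qi]$ produce at least $\ell$ distinct types with probability $1-e^{-\ell/18}$, and this requires separating the regimes $\ell\ll\qi$, $\ell\sim\qi$, and $\ell=\qi$. The remaining concentration steps are routine once the independence of the three random ingredients---$\Mcc_0$'s item types, $\Mcct_0$'s item types, and the preference matrix $\Xi$---is made explicit.
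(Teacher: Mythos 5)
Your overall skeleton matches the paper's: the reduction of the second inequality to the first, the comparison of $\R_u$ with the idealized set of $\Mcc_0$-items whose type matches a liked representative (the paper's $\Rt^1_u$), the Markov bound on the misclassification deficit via Claim~\ref{cl:Ftilde}, and the final Binomial concentration for $|\widetilde{\R}_u|$ conditional on the set of liked representative types are all essentially the paper's steps. The genuine gap is your second concentration step, where you assert that conditional on $J^*$ the signs $\{\xi_{\tau_U(u),j}\}_{j\in J^*}$ are i.i.d.\ uniform ``by the independence of $\Xi$ from item types,'' so that $|J^+|\sim\mathrm{Bin}(\ell,1/2)$. That a priori independence does not survive conditioning on $J^*$: the representatives are chosen sequentially from $\Mcct$, and after each one is chosen, every item that \emph{appears} to share its type is removed. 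Whether an item of a different type is erroneously absorbed depends on whether the two types' columns of $\Xi$ agree on the common raters --- and user $u$ may be among those raters. Hence which types survive to be picked as representatives is correlated with the entries of $\Xi$, including row $\tau_U(u)$: types whose columns resemble an already-chosen type (in particular, agree with it in row $\tau_U(u)$) are more likely to be swallowed and never represented. So conditioning on $J^*$ biases the distribution of $\{\xi_{\tau_U(u),j}\}_{j\in J^*}$, and the clean $\mathrm{Bin}(\ell,1/2)$ claim is unjustified.

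This is exactly the difficulty the paper isolates and resolves in Lemma~\ref{cl:IU-lukbd}, which is the technical heart of the proof: it conditions on the row $\Lc_u$ of liked types, introduces the per-type misclassification event $\iuerr$, and shows that on $\iuerr^c$ each successive representative type is within a factor $9/10$ (resp.\ $10/9$) of uniform over the unexplored types, so that the count of liked representatives stochastically dominates a Binomial with mean only $\ell/12$ (yielding the threshold $\ell/20$ rather than your $\ell/3$). Without an argument of this kind your step 2 does not go through. Two smaller points: your event $G_1$ is largely a red herring --- in the noiseless model all items of a chosen representative's type are removed from $\Mcct$, so representatives automatically have distinct types while the loop runs, and the claimed $e^{-\ell/18}$ coverage bound is asserted rather than proved; and in step 3 the comparison $e^{-cT}\leq e^{-\ell/18}$ needs $\ell\leq 18cT$ (which does hold since $\ell\leq T$), not the inequality $\ell\geq 18\log T$ you invoke.
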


\begin{proof}
We begin with a sketch. 
Any user $u$ likes roughly half of the  $\ell$ item types  that have representatives $\{\rep_j\}_{j=1}^\ell$. The total number of items in $\Mcc_0$ is $\tfrac{64\,\qi}{\ell}T$, so there are about $\frac{64}{\ell}T$ items from each of the item types in $\Mcc_0$. 
Adding over the $\ell$ types, the set $\R_u$ of items in $\Mcc_0$ that user $u$ likes will typically have size at least~$T$. 

Making this argument rigorous requires some care for the following reasons: 
1) $\R_u$ is the union of the $\St{1}_{\ti}$'s with $L_{u,\rep_{\ti}}=+1$, but $\St{1}_{\ti}$ can be missing items
due to misclassification (there may be items of the same type as  $\rep_{\ti}$ that have been classified as being of the same type as some $\rep_{j'}$ for which $L_{u,\rep_{\ti'}}=-1$).
2) The distribution of the type of each  representative item depends on the number of remaining items of each type in $\Mcct$ when the representative is chosen.  
Again, due to misclassification, this can be different from the actual number of items of each type initially present in $\Mcct$. Moreover, because misclassification of an item depends on the ratings of users for the item, the choice of next item type to be represented is therefore dependent on ratings of users for other types. 
The effect of this dependence is addressed in Claim~\ref{cl:IU-lukbd} below.

We now proceed with the proof, bounding the size of $\R_u$ by introducing a different set.
 For $u\in[N]$, let
\begin{equation}\label{eq:IU-Ruk}
\Rt^1_{u}=\{i\in\Mcc_0: \tau_I(i)=\tau_I(\rep_{\ti}), \text{ for some }\ti\in[\ell] \text{ such that } L_{u,\rep_{\ti}}= +1\}
\end{equation} 
be the items in $\Mcc_0$ whose types are the same as one of the representatives $\rep_{\ti}$ that are liked by $u$.
Note that if an item $i\in \Rt^1_{u}$ is correctly classified by the algorithm (i.e. $(\EE^1_i)^c$ occurs, where $\EE^1_i$ is defined in Equation~\eqref{e:EE}), then $i\in \R_u$. 
Hence, $i\in \Rt^1_u\setminus \R_u$ implies $\EE^1_i$ and since $\EE^1_i\subseteq \EEb_{i,\U_i}$ we have
\begin{equation}\label{eq:IU-Rubound}
|\R_u|\geq |\Rt^1_{u}| - \sum_{i\in\Mcc_0}\ident[\EE^1_i]\geq |\Rt^1_u| - \sum_{i\in\Mcc_0}\ident[\EEb_{i,\U_i}]\,.
\end{equation}
It follows that
\begin{align}
\Pr\big[\,|\R_u|\leq T\,\big]
&  \leq 
\Pr\bigg[\,|\Rt^1_{u}| - \sum_{i\in\Mcc_0}\ident[\EEb_{i,\U_i}]\leq T\,\bigg]
\leq 
\Pr\Big[\,|\Rt^1_{u}|< \tfrac{3 T}{2} \,\Big]
+
\Pr\bigg[\,\sum_{i\in\Mcc_0}\ident[\EEb_{i,\U_i}]\geq  \tfrac{T}{2} \,\bigg]
\label{eq:IU-Rubound3}\,.
\end{align}
To bound the second term we use Claim \ref{cl:Ftilde} above, which gives
\begin{equation}\label{eq:IU-Ftildebound}
\Ex\bigg[{\sum_{i\in\Mcc_0} \ident[\EEb_{i,\U_i}]}\bigg]
\leq \,\,
2\,\big|\Mcc_0\big|\, \epsilon 
= \frac{64T}{N\ell}\,,
\end{equation}
and by Markov's Inequality\footnote{Note that application of Markov's Inequality does not require the events $\EEb_{i,\U_i}$ to be jointly independent for various values of $i\in\Mcc_0$. This is discussed in Remark~\ref{r:usersToItems}.}
\begin{equation}\label{eq:IU-Ftildebound2}
\Pr\Big[\,\sum_{i\in\Mcc_0} \ident[\EEb_{i,\U_i}]\geq \tfrac T2\,\Big] \leq \frac{128}{ \ell\, N}\,.
\end{equation}
We now bound the first term on the right-hand side of~\eqref{eq:IU-Rubound3}.
To this end, let 
\begin{equation}
\label{eq:IU-Luk}
\widetilde\Lc_u^{(\ell)}=\{\tau_I(\rep_{\ti}):\ti\in[\ell], L_{u,\rep_{\ti}}=+1\}
\end{equation}
 be the type of item representatives that are liked by user $u$. 
By definition of $\Rt^1_u$ in~\eqref{eq:IU-Ruk},
\begin{equation*}
|\Rt^1_u| = \sum_{i\in\Mcc_0} \ident[\tau_I(i)\in\widetilde\Lc^{(\ell)}_u].
\end{equation*}
Now, we claim that the set $\widetilde\Lc^{(\ell)}_u$ is independent of all types and preferences for items in $\Mcc_0$.
To see this, note that $\widetilde\Lc^{(\ell)}_u$ is determined by row $u$ of the type matrix $\Xi$ and
the items in $\Mcct_0$, their types, and randomness in the algorithm, which determines the choice of $\rep_{\ti}$ and $\tau_I(\rep_{\ti})$; these together determine $L_{u,\rep_{\ti}}= \xi_{\tau_U(u),\tau_{I}(\rep_{\ti})}$ and $\tau_I(\rep_{\ti})$.
So, conditioning on $\widetilde\Lc^{(\ell)}_u$ having cardinality $\tilde\ell_u$, $|\Rt^1_u|$ is the sum of $|\Mcc_0| = 64\frac{\qi }\ell T$ i.i.d. Bernoulli variables with parameter $\frac{\tilde\ell_u}{\qi}$ and hence $|\Rt^1_u|\sim\mathrm{Binom}(64\frac{\qi }\ell T,\frac{\tilde\ell_u}{\qi})$. 
Conditioning on $\tilde\ell_u\geq\frac{\ell}{20}$, by a Chernoff bound (Lemma~\ref{l:Chernoff}) and stochastic domination of binomials with increasing number of trials we obtain
\[\Pr\left[|\Rt^1_u| < \tfrac{3 T}{2} \,\Big|\,\tilde\ell_u\geq \tfrac{\ell}{20}\right]
\leq 
\, \exp{\big(-\tfrac{T}{8}\big)}
\,.\]
In Lemma \ref{cl:IU-lukbd} below, we will lower bound the probability that $\tilde\ell_u\geq {\ell}/{20}$. 
Combining the last displayed inequality with Lemma \ref{cl:IU-lukbd}, we get
\[\Pr\big[|\Rt^1_u| <  \tfrac{3 T}{2} \big] 
\leq 
\exp{\big(-\tfrac{T}{8}\big)} + 
\exp{\big(-\tfrac{\ell}{13}\big)} 
+ \exp{\big(-\tfrac{\qi}{8}\big)}
+ \frac{20}{N}\,.
\]
Plugging this and~\eqref{eq:IU-Ftildebound2} into~\eqref{eq:IU-Rubound3} gives 
\[\Pr\big[\, |\R_u| < T\big]
\leq 
\exp{\big(-\tfrac{T}{8}\big)} + 
\exp{\big(-\tfrac{\ell}{13}\big)} 
+ \exp{\big(-\tfrac{\qi}{8}\big)}
+ \frac{20}{N} +
\frac{128}{\ell\,N} 
 \,,\]
and since $\ell\leq T,\qi\,$\footnote{As discussed in Remark~\ref{remark:II-Upper-TrivialAlg}, the current analysis of regret assumes that  $Y(T)<\frac{T}{2}$. It can be shown that this assumption implies $\ell\leq T$.
 The choice of $\ell$ in the algorithm clearly guarantees $\ell\leq \qi$.} and $\ell\geq 13$ (for $T\geq 13$)
\[\Pr[|\R_u| < T]\leq 3\exp(-{\ell}/{13})+ \tfrac{30}{ N} \,.\]
The bound on $\Exp{(T- |\R_u |)_+} $ is an immediate consequence.
\end{proof}
\begin{lemma}\label{cl:IU-lukbd}
Fix $u\in[N]$ and suppose that $\qi>8\thr$. 
With $\widetilde\Lc^{(\ell)}_u$ defined in~\eqref{eq:IU-Luk} 
 and $\tilde\ell_u=\big|\widetilde\Lc^{(\ell)}_u\big|$,
\begin{equation*}
\Pr\big[\,\tilde\ell_u< {\ell}/{20}\,\big] \leq \exp(-{\ell}/{13}) + \exp(-{\qi}/{8})+ {20}/{N}\,.
\end{equation*}
\end{lemma}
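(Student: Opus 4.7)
My plan is to combine three ingredients: a deterministic structural fact about representative types, concentration for the ``good event'' that the algorithm succeeds, and Chernoff for Bernoulli concentration of user $u$'s likes.

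\textbf{Step 1 (Distinct representative types).} I would first establish the deterministic fact that the representatives $\rep_1,\dots,\rep_{\ell^*}$ selected by the for-loop have pairwise distinct item types. The reason is that after the recommendation on line~6, $\rated(\rep_\ti) = [N]$, so for any remaining item $i \in \Mcct$ with $\tau_I(i) = \tau_I(\rep_\ti)$ the agreement condition $L_{u,i} = L_{u,\rep_\ti}$ holds automatically for every $u \in \rated(i)\cap\rated(\rep_\ti) = \rated(i)$ by type equality. Hence $i \in \St{2}_\ti$ and $i$ is removed from $\Mcct$, so no later representative can share a previously used type.

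\textbf{Step 2 (Good event for the algorithm).} Define $G = G_a \cap G_b$, where $G_a$ asserts that $\Mcct_0$ realises at least $\ell$ distinct item types and $G_b$ asserts that $\sum_{i\in\Mcct_0}\ident[\EEb_{i,\rated(i)}]$ is small (say $\leq M/(8\qi)$). Since $M \geq 64\qi T/\ell \geq 64\qi$ in the regime $T \geq \ell$ (which holds whenever $Y(T) < T/2$, see Remark~\ref{remark:II-Upper-TrivialAlg}), a Chernoff bound on the negatively correlated indicators of empty bins gives $\Pr[G_a^c] \leq \exp(-\qi/8)$. For $G_b$, Claim~\ref{cl:Ftilde} (which applies symmetrically to $\Mcct_0$) yields $\E[\sum_i\ident[\EEb_{i,\rated(i)}]] \leq 2M\epsilon = M/(\qi N)$, and Markov yields $\Pr[G_b^c] \leq 8/N$. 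On $G_a \cap G_b$, the algorithm completes all $\ell$ iterations, since at each step $\St{2}_\ti$ removes only items of type $\tau_I(\rep_\ti)$ plus the few misclassified items, leaving $\Mcct \neq \varnothing$ throughout. Combined with Step~1, this means exactly $\ell$ distinct representative types are produced.

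\textbf{Step 3 (Chernoff on user $u$'s likes).} Conditional on $G$ and on the realized sequence of distinct representative types $(\tau_I(\rep_\ti))_{\ti=1}^{\ell}$, I would argue that the signs $L_{u,\rep_\ti} = \xi_{\tau_U(u),\tau_I(\rep_\ti)}$ are iid $\mathrm{Bernoulli}(1/2)$. A Chernoff bound (Lemma~\ref{l:Chernoff}) with deviation $\delta = 9/10$ from mean $\ell/2$ then gives $\Pr[\tilde\ell_u < \ell/20 \mid G] \leq \exp(-\ell/13)$. Union-bounding yields
\[
\Pr[\tilde\ell_u < \ell/20] \leq \exp(-\ell/13) + \exp(-\qi/8) + 8/N,
\]
which sits within the claimed bound (with constant slack absorbed into $20/N$).

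\textbf{Main obstacle.} The iid claim underlying Step~3 is the subtle point, because user $u$'s ratings are consumed by the algorithm whenever $u$ is among the $\thr$ witness users of some item in $\Mcct_0$, so a priori the selected sequence of representative types could couple with row $\tau_U(u)$ of $\Xi$. I would resolve this by conditioning on all randomness except that row, observing that the choice of representative types depends on row $\tau_U(u)$ only through its values at the types of items in the (small) witness set $W_u = \{i \in \Mcct_0 : u \in \rated(i)\}$, and controlling $|W_u|$ via a Markov argument ($\E|W_u| \leq M\thr/N$). The comfortable slack between the target $\ell/20$ and the mean $\ell/2$ absorbs the influence of these few entries, leaving the Chernoff concentration intact.
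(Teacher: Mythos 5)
Your Steps 1 and 2 are fine, and your instinct that the crux is the coupling between the representative-type sequence and row $\tau_U(u)$ of $\Xi$ is exactly right --- but the resolution you sketch for that obstacle does not work, and this is where the real content of the lemma lives. The selection of representative types does \emph{not} depend on row $\tau_U(u)$ only through the entries $\xi_{\tau_U(u),\tau_I(i)}$ for $i$ in the small witness set $W_u=\{i\in\Mcct_0: u\in\rated(i)\}$. Two separate leaks defeat that claim. First, every comparison used to form $\St{2}_\ti$ tests $L_{v,i}=L_{v,\rep_\ti}$, i.e.\ it compares $\xi_{\tau_U(v),\tau_I(i)}$ against $\xi_{\tau_U(v),\tau_I(\rep_\ti)}$; so whenever $u$ witnesses \emph{any} item $i$, the entries of row $\tau_U(u)$ at the \emph{representative types themselves} are exposed to the classification process, and these are precisely the entries whose signs determine $\tilde\ell_u$. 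Misclassification alters which items remain in $\Mcct$ and hence which types are picked later, so the quantities you want to treat as fresh coins feed back into the selection. Second, row $\tau_U(u)$ is shared by every user of type $\tau_U(u)$, and such users appear as witnesses for roughly $M\thr/\qu$ items of $\Mcct_0$, not just the $\approx M\thr/N$ items witnessed by $u$ personally; your Markov bound on $|W_u|$ controls the wrong set. Consequently the assertion that, conditional on $G$ and the realized sequence $(\tau_I(\rep_\ti))_{\ti\le\ell}$, the signs $\xi_{\tau_U(u),\tau_I(\rep_\ti)}$ are i.i.d.\ $\mathrm{Bern}(1/2)$ is unjustified, and the ``comfortable slack absorbs a few entries'' remark has no quantitative form that covers these leaks.

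The paper's proof inverts the order of conditioning to sidestep this entirely: it conditions on the whole row first, via $\Lc_u=\{\ti\in[\qi]:\xi_{\tau_U(u),\ti}=+1\}$ (a Chernoff bound gives $|\Lc_u|\ge\qi/4$ up to the $\exp(-\qi/8)$ term), and then establishes two facts \emph{uniformly in $\Lc_u$}: (i) the per-item misclassification probability is still $O(1/(\qi N))$ because each witness set contains $\thr/2-1$ users of distinct types other than $\tau_U(u)$, whose rows of $\Xi$ are independent of $\Lc_u$ --- this controls the event $\iuerr$ and produces the $20/N$ term; and (ii) on $\iuerr^c$ the $m$-th representative type is within a constant factor of uniform over the not-yet-selected types, hence lands in $\Lc_u$ with probability at least $1/12$ regardless of the history. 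That yields stochastic domination of $\tilde\ell_u$ by a binomial with mean $\ell/12$ and the $\exp(-\ell/13)$ term. No independence of the likes from the selection is ever needed; only near-uniformity of the selection conditional on the row. To repair your outline you would need to replace Step 3 and the obstacle paragraph with an argument of this shape.
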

\begin{proof}
For a given user $u$, $\tilde\ell_u$ is the number of item type of representatives liked by user $u$. 
Let $\Lc_u$ be the item types in $[\qi]$ that are liked by user $u$: 
\begin{equation}
\Lc_u= \{\ti\in[\qi]: \xi_{\tau_U(u),\ti}=+1 \}\,.
\end{equation}
The variables $\{\xi_{\tau_U(u),j}\}_{j\in [\qi]}$ are i.i.d.  $\mathrm{Bern}(1/2)$, so a Chernoff bound (Lemma \ref{l:Chernoff}) gives 
\begin{equation}
\label{eq:IU-Lcubd}
\Pr\big[\,|\Lc_u|< {\qi}/{4}\,\big]
<
 \exp(-{\qi}/{8})\,.
\end{equation}
We will show that 
\begin{equation}\label{eq:IU-ell-condLu}
\Pr\big[\tilde\ell_u \leq {\ell}/{20} \,\big|\, |\Lc_u|
\geq {\qi}/{4}\big] 
\leq 
\exp(-{\ell}/{13}) + {20}/{N}\,,
\end{equation}
which will prove the lemma by combining with \eqref{eq:IU-Lcubd}.

We now work towards defining a certain error event in Equation~\eqref{def:IU-iuerr} below.
Let the sequence of random variables $X_1 = \tau_I(\rep_1), X_2 = \tau_I(\rep_2), \dots, X_{\ell}=\tau_I(\rep_{\ell})$ denote the types of the item representatives chosen by the algorithm, so that 
$\tilde\ell_u=|\widetilde\Lc^{(\ell)}_u| = \sum_{\ti\in[\ell]}\ident[X_{\ti}\in \Lc_u]$
 (with $\widetilde\Lc^{(\ell)}_u$ defined in~\eqref{eq:IU-Luk}). 
Let $\Rb^2(\ti)$ be the set of items in $\Mcct_0$ of type $\ti$
\begin{equation}\label{IU-Rb2}
\Rb^2(\ti)= \{i\in\Mcct_0:\tau_I(i)=\ti\}\,.
\end{equation}
Later we will use the notation 
$\Rb^2(\cdot)= \{\Rb^2(j)\}_{j=1}^{\qi}$ 
for the collection of these sets. 
Now let the event $\EE^2_i$ denote misclassification of an item $i\in\Mcct_0$ (similar to event $\EE^1_i$ for $i\in\Mcc_0$ in~\eqref{e:EE}),
	\begin{equation}\label{e:EEt}
		\EE^2_i = \{\exists \ti : i\in{\mathcal{S}}^2_\ti, \tau_I(i)\neq \tau_I(\rep_\ti)\}\,.
		\end{equation}
Let $\iuerr$ be the event that for some item type $\ti$, more than a fraction $1/10$ of the items in $\Mcct_0$ of type $\ti$ are misclassified:
\begin{equation}\label{def:IU-iuerr}
\iuerr = \bigg\{\sum_{i\in\Rb^2(\ti)}\ident[\EE^2_i]> \frac{|\Rb^2(\ti)|}{10}, \text{ for some } \ti\in[\qi]\bigg\}.
\end{equation}
Conditioning on $\iuerr$ in the left-hand side of \eqref{eq:IU-ell-condLu} gives
\begin{align}
 &
 \Pr\bigg[\sum_{i=1}^{\ell}\ident\big[X_i\in\widetilde\Lc^{(\ell)}_u\big]<\ell/20
  \, \bigg|\, 
   |\Lc_u|\geq \qi/4\bigg] \notag
 \\
& \leq
 \Pr\bigg[\,\,\sum_{i=1}^{\ell}\ident\big[X_i\in\widetilde\Lc^{(\ell)}_u\big]<\ell/20 \, \bigg|\, \iuerr^c , |\Lc_u|\geq \qi/4\bigg] 
+
 \Pr\Big[\,\iuerr \,\,\Big|\, |\Lc_u|\geq \qi/4\,\Big]\,.
\label{e:errTemp}
\end{align}

\paragraph{Bound on second term of \eqref{e:errTemp}}
We will use Markov's inequality to bound 
$$
\Pr\Big[\,\iuerr \,\Big|\, |\Lc_u|\geq \qi/4\,\Big] 
=
 \Exp{\Pr\big[\,\iuerr\,\big|\,\Rb^2(\cdot), {\Lc}_u \,\big] \,\Big|\, |\Lc_u|\geq \qi/4 }\,.
 $$
We need to consider the effect of matrix $\Xi$ (and in particular its $\tau_U(u)$th row, which determines $\Lc_u$) on the probability of error in categorizing items. 
Notably, if users rate two item types similarly, the probability of misclassifying items of these types is higher. 
As a result, ${\Lc}_u$ contains some information about discriminability of distinct item types and we need to control this dependence.

Claim \ref{cl:iu-error-indep2} shows that for $i\in\Mcc_0$, the probability of $\EE^1_i$ (the event that item $i$ is miscategorized) is $\Pr[\EE^1_i]\leq 2\epsilon$. 
The same proof gives $\Pr[\EE^2_i]\leq 2\epsilon$ (with $\EE^2_i$ defined in~\eqref{e:EEt} for $i\in\Mcct_0$). 
To show that,
   let the potential error event $\EEb_{i,U_i}$, be the event that there exists an item type $\ti\neq \tau_I(i)$ such that for all the users $u$ in $U_i$, we have $L_{u,i}=\xi_{\tau_U(u),\ti}$. 
      For $i\in{\S}^2_{\ti}$, let $U_i=\rated(i)\cap\rated(\rep_{\ti})$ at the time the algorithm added $i$ to ${\S}^2_{\ti}$. 
Then the containment $\EE^2_i\subseteq\EEb_{i,U_i}$ holds. 
We will later use the inequality $\Pr[\EE^2_i | \tau_I(i), \Rb^2(\cdot),{\Lc}_u]\leq \Pr[\EEb_{i,\U_i} | \tau_I(i), \Rb^2(\cdot),{\Lc}_u]$ and focus on the set of users 
$\U_i\setminus\{u\}$, using 
$\EEb_{i,\U_i}\subseteq \EEb_{i,\U_i\setminus \{u\}}$. 
%
%

As specified in Algorithm $\textsc{ItemExplore}$, 
for each $i\in \Mcc_0\cup \Mcct_0$ the set $\U_i$ of at least $\thr$ users, chosen independently of feedback, rate item $i$. 
By Lemma~\ref{l:ballsbins}, if $\qu>4\thr$, then
there are users of at least $\thr/2$  distinct user types in $\U_i$ with probability at least $1-\exp(-\thr/2)\geq 1-\epsilon/\qi\,.$
%
Conditional on $\U_i$ having at least $\thr/2$  distinct user types, there are at least $\thr/2-1$ users of distinct types in $\U_i\setminus\{u\}$---also distinct from type of $u$---whose preferences for item type $\tau_I(i)$ are independent of ${\Lc}_u$ and $\Rb^2(\cdot)$.

Conditional on $ \tau_I(i), \Rb^2(\cdot),{\Lc}_u$, any two item types $\ti\neq \ti'$ have jointly independent user preferences by $\thr/2-1$ users with distinct types  in $\U_i\setminus\{u\}$; 
they are rated in the same way by these users with probability at most $2^{-(\thr/2-1)}$. 
A union bound over item types $\ti'\neq\tau_I(i)$  gives $\Pr[ \EEb_{i,\U_i\setminus \{u\}} | \tau_I(i), \Rb^2(\cdot),{\Lc}_u]\leq \qi 2^{-(\thr/2-1)}$.
The choice 
$\thr>2\log(\qi/\epsilon)$ 
and 
$\epsilon=\frac{1}{2\qi N}$ 
and the containment 
$\EE^2_i \subseteq \EEb_{i,\U_i} \subseteq \EEb_{i,\U_i\setminus \{u\}}$
gives 
$\Pr[\EE^2_i | \tau_I(i), \Rb^2(\cdot),{\Lc}_u]\leq \frac{1}{\qi N}$ for all $i\in\Mcct_0$.
Knowing $\Rb^2(\cdot)$ determines $\tau_I(i)$ for  $i\in\Mcct_0$. 
Hence,  
$\Pr[\EE^2_i | \Rb^2(\cdot),{\Lc}_u]\leq \frac{1}{\qi N}$ and
Markov's inequality gives
\[\Pr\bigg[\,\sum_{i\in\Rb^2(\ti)} \ident[\EE^2_i] >\frac{|\Rb^2(\ti)|}{10} \, \bigg| \, \Rb^2(\cdot), {\Lc}_u\,\bigg] \leq \frac{20}{\qi N}\,.\]
Union bounding over $\ti\in[\qi]$ and tower property gives the desired bound. 

\paragraph{Bound on first term of \eqref{e:errTemp}.}
We start by showing that conditioned on the event $\iuerr^c$ conditional on  $\Lc_u$ and variables $X_1,\cdots,X_{m-1} $ and $ \Rb^2(X_1),\cdots, \Rb^2(X_{m-1})$, the type of the $m$-th representative item, $X_m$, is almost uniform over all the item types not learned yet, $[\qi]\setminus\{X_1,\cdots,X_{m-1}\}$. 
%
Concretely, we will find upper and lower bounds on
\begin{equation}\label{eq:IU-nextrep}
\Pr\big[X_{m}=\ti \,\,\big|\, \big\{X_n\big\}_{n=1}^{m-1}, \big\{\Rb^2(X_n)\big\}_{n=1}^{m-1}, \,\Lc_u\,, \iuerr^c\,\big]
\end{equation}
for any $\ti\in [\qi]\setminus\{X_1,\cdots,X_{m-1}\}$. Later, we will focus on $\Lc_u$ such that $|\Lc_u|\geq \qi/4$.

\emph{Lower bound for~\eqref{eq:IU-nextrep}.}
Let $t$ be the time the $m$-th item type representative is chosen by the algorithm. For $\ti\in [\qi]\setminus\{X_n\}_{n=1}^{m-1}$ the probability of choosing representative of type $X_m=\ti$ is equal to the proportion of items of type $\ti$ in the remaining items in $\Mcct$ at time $t-1\,.$ 
The number of items of type $\ti$ in $\Mcct$ at time $t-1$ is at least $|\Rb^2(\ti)|- \sum_{i\in\Rb^2(\ti)}{\ident[\EE^2_i]}$. 

The number of items removed from $\Mcct$ by time $t$ is at least $\sum_{l=1}^{m-1}|\Rb(X_l)|$ (there could be additional items with types not in $\{X_n\}_{n=1}^{m-1}$ that were removed from $\Mcct$ due to misclassification). Hence, the total number of remaining items in $\Mcct$ by time $t$ is at most $M - \sum_{n=1}^{m-1}|\Rb^2(X_n)|$. Let $Z_{\EE^2}$ be the collection of indicator variables $Z_{\EE^2} = \{\ident{[\EE^2_i}]\}_{i\in\Mcct_0}$. 
For any $\ti\in [\qi]\setminus\{X_n\}_{n=1}^{m-1}$:
\begin{align*}
\Pr\Big[X_m=\ti  \,\big|\,
\big\{X_n\big\}_{n=1}^{m-1}, \big\{\Rb^2(j')\big\}_{j'=1}^{\qi}, 
\Lc_u,  Z_{\EE^2}\Big]
 \geq \frac{|\Rb^2(\ti)|- \sum_{i\in\Rb^2(\ti)}{\ident[\EE^2_i]}}{M - \sum_{n=1}^{m-1}|\Rb^2(X_n)|} \,.
\end{align*}
On the event $\iuerr^c$ (defined in~\eqref{def:IU-iuerr}), 
$\sum_{i\in\Rb^2(\ti)}{\ident[\EE^2_i]}\leq |\Rb^2(\ti)|/10$ 
for any $\ti\in [\qi]\setminus\{X_n\}_{n=1}^{m-1}$. 
Therefore, \begin{align}\label{eq:IU-luKj1}
\Pr\Big[X_m=\ti\,\big|\,
\big\{X_n\big\}_{n=1}^{m-1},  \big\{\Rb^2(j')\big\}_{j'=1}^{\qi}, 
\Lc_u, \iuerr^c\Big]
\geq \frac{9}{10}
\,\frac{|\Rb^2(\ti)|}{M - \sum_{n=1}^{m-1}|\Rb^2(X_n)|}\,.
\end{align}

\noindent\emph{Upper bound for~\eqref{eq:IU-nextrep}.} 
This time we lower bound the number of items remaining in $\Mcct$ by the number of items in types $[\qi]\setminus\{X_n\}_{n=1}^{m-1}$ minus the number of possible mistakes which removed some items from them. This gives
\begin{align*}
\Pr\Big[X_m=\ti  
\,\big|\,
\big\{X_n\big\}_{n=1}^{m-1},  \big\{\Rb^2(j')\big\}_{j'=1}^{\qi}, 
\Lc_u,  Z_{\EE^2} \Big]
 \leq 
 \frac{|\Rb^2(\ti)| }
 {M - 
 \sum\limits_{n=1}^{m-1}|\Rb^2(X_n)| - 
 \sum\limits_{\substack{i\in\Mcct_0\setminus  \cup_{n=1}^{m-1}\Rb^2(X_n)}}{\ident[\EE^2_i]}} \,.
\end{align*}
By definition of $\iuerr$ (in~\eqref{def:IU-iuerr}), we have
\begin{align}\label{eq:IU-luKj2}
\Pr\Big[X_m=\ti\,\big|\,&
\big\{X_n\big\}_{n=1}^{m-1},  \big\{\Rb^2(j')\big\}_{j'=1}^{\qi}, 
 \Lc_u, \iuerr^c\Big]
\leq  \frac{10}{9}\frac{|\Rb^2(\ti)|}{M - \sum_{n=1}^{m-1}|\Rb^2(X_n)|}\,.
\end{align}

\noindent \emph{Combining lower and upper bounds.}
The expected value of $|\Rb^2(\ti)|$ conditional on variables $\{X_n\}_{n=1}^{m-1}$, $\{\Rb^2(X_n)\}_{n=1}^{m-1}$, and $\Lc_u$ is invariant to choice of $\ti\in [\qi]\setminus\{X_n\}_{n=1}^{m-1}$. So, the conditional expectation
$$C = 
\Exp{\frac{|\Rb^2(\ti)|}
{M - \sum_{n=1}^{m-1}|\Rb^2(X_n)|}  \bigg|
\big\{X_n\big\}_{n=1}^{m-1}, \big\{\Rb^2(X_n)\big\}_{n=1}^{m-1}, 
\Lc_u,\iuerr^c},$$
 is independent of $\ti$ for $\ti\in [\qi]\setminus\{X_n\}_{n=1}^{m-1}$. 
Note that in the above display, the conditional expectation is with respect to $\Rb^2(X_n)$ for $n=1,\cdots, m-1$.
Hence, using tower property of expectation on~\eqref{eq:IU-luKj1} and~\eqref{eq:IU-luKj2} to remove the conditioning on 
$\{\Rb(j)\}_{j\notin \{X_n\}_{n=1}^{m-1}}$ 
along with the definition of $C$ above gives
\[\frac{9}{10}C
\leq 
\Pr\Big[X_m=\ti\, \Big|\,
\big\{X_n\big\}_{n=1}^{m-1}, \big\{\Rb^2(X_n)\big\}_{n=1}^{m-1}, 
\Lc_u,\iuerr^c\Big] 
\leq
\frac{10}{9}C \,.\]
Since there are $\qi - (m-1)$ types $\ti\in [\qi]\setminus\{X_n\}_{n=1}^{m-1}$, summing over $\ti$ in the second inequality of the last display gives  $C\geq\frac{9}{10}\frac{1}{\qi-(m-1)}$. Plugging this into the first inequality of the last display gives for all $\ti\in [\qi]\setminus\{X_n\}_{n=1}^{m-1}$,
	\begin{align*}
\Pr\Big[X_m=\ti\, \Big|\,
\big\{X_n\big\}_{n=1}^{m-1}, \big\{\Rb^2(X_n)\big\}_{n=1}^{m-1}, 
\Lc_u,\iuerr^c\Big] 
&>\frac{2}{5}
\frac{1}{\qi- m+1}\,.
\end{align*}
Conditional on $\Lc_u$ such that $|\Lc_u|\geq \qi/4,$ for $s\leq m\leq\ell\leq \qi$ and $s\leq\ell/20$ we have
\begin{align*}
\Pr\bigg[X_m\in\Lc_u 
\,\bigg|\,
\{X_n\}_{n=1}^{m-1}, 
\sum_{n=1}^{m-1}\ident[X_{n}\in\Lc_u] < s, \,
\Lc_u, \,
|\Lc_u|\geq \tfrac{\qi}{4}, \,
\iuerr^c\bigg] 
\geq \frac{2}{5}\frac{\frac{\qi}{4}-\min\{m,\ell/20\}}{\qi-m+1}\geq \frac{1}{12}\,.
\end{align*}
So, on the event $\sum_{n=1}^{\ell}\ident[X_{n}\in\Lc_u] \leq\frac{\ell}{20} $, 
the random variable $\sum_{n=1}^{\ell}\ident[X_{n}\in\Lc_u]$  conditional on $\Lc_u$ and events $ |\Lc_u|\geq \nicefrac{\qi}{4}$ and $ \iuerr^c$,  stochastically dominates a Binomial random variable with mean $\ell/12$. Hence, by a Chernoff bound,
\[
\Pr\bigg[
\sum_{n=1}^{\ell}\ident[X_{n}\in\Lc_u] \leq \frac{\ell}{20} 
\,\,\Big|\, \,
\Lc_u, |\Lc_u|\geq \tfrac{\qi}{4}, \iuerr^c\,\bigg] 
\leq  
\exp(-\ell/13)\,.\qedhere\] 
\end{proof}
\section{Item structure only: lower bound}
\label{s:item_lower}
In this section we prove a lower bound on the regret of any online recommendation system in the regime with item structure only where $\qu=N$ as described in Definition~\ref{d:item}.  
Throughout this section, we will assume $N>32$.
\begin{theorem}\label{t:item-itemL}
Let $\thril=\lfloor.8\log\qi - 4\log\log N\rfloor$ and $\eta=1/\log N$. In the item structure model with $N$ users and $\qi$ item types, any recommendation algorithm must incur regret
	\[\reg(T)
	\geq
	 \frac{1-3\eta}{2}
\max\left\{
1, \,\,
\frac{1}{2}\,  Z(T), 
\, \, \frac{\thr}{N}\,T
\right\}
-\frac TN \,,\]
where we define
\begin{align}\label{eq:IL-defineZT}
Z(T) 
=
\begin{cases}
\frac{T}{2}, 
\quad &\text{if }  
T< \frac{2\sqrt{\qi}}{3N}
\\[8pt]
\frac{T}{5\log \qi}\big[\log(8\qi\log\qi) - \log(NT)\big], 
\quad &\text{if }  
 \frac{2\sqrt{\qi}}{3N}\leq T< \frac{4\qi \log \qi}{N}
\\[8pt]
\frac{T}{8\log \qi},
\quad &\text{if } 
\frac{4\qi \log \qi}{N}\leq T < \frac{16\qi(\log \qi)^2}{N}
\\[8pt]
\frac{{1}}{2}\sqrt{\frac{T\qi}{N}},  
\quad &\text{if } 
\frac{16\qi(\log \qi)^2}{N}\leq T\,.
\\
\end{cases}	
\end{align}

\end{theorem}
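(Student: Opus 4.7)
The proof adapts the two-observation strategy behind Theorem~\ref{t:user-userL} to the item-structure model ($\qu=N$), but requires a new ingredient to handle the piecewise nature of $Z(T)$. The first observation is the item-item analog of Claim~\ref{c:Brandom}: on the event $(\BBil{u}{i}{t})^c$ that user $u$ has not previously rated any item of type $\tau_I(i)$, we have $\Pr[L_{u,i}=+1\mid \H_{t-1},(\BBil{u}{i}{t})^c]=1/2$, because $\xi_{\tau_U(u),\tau_I(i)}$ is an independent fresh bit when $\qu=N$ (no other user shares $u$'s type). The second observation is a \emph{row regularity} property of $\Xi$ analogous to Definition~\ref{def:ul-reg}: for any $r$-tuple of users and any $b\in\{-1,+1\}^r$, the number of item types whose rows match $b$ on those coordinates lies in $(1\pm\epsilon)\qi/2^r$. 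This holds with high probability by the same Chernoff/union bound as Lemma~\ref{l:ul-regularity}, and implies that if an item $i$ has been rated by only $k$ users with observed ratings $b$, then the posterior on $\tau_I(i)$ is nearly uniform over $\approx \qi/2^k$ candidate types, so the probability that $u$ likes $i$ is close to $1/2$.

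For the asymptotic lower bound $\tfrac{\thr}{N}T$ I would follow the template of Proposition~\ref{p:userLinear}, deriving $N[T-\reg(T)]\leq NT-\tfrac{1}{2}\Ex\sum_{u,t}\ident[(\BBil{u}{a_{u,t}}{t})^c]$ and then lower bounding the total number of first-type encounters $\Ex\sum_u \tilde q(u)$, where $\tilde q(u)$ counts distinct item types encountered by user $u$. Unlike Claim~\ref{cl:ul-slope-cl3}, where a symmetric pigeon-hole gave the bound immediately, here I would use a budget argument: for the algorithm to give $u$ a second item of a previously-seen type, the algorithm must first identify that item's type, which requires $\Omega(\log\qi)$ common ratings by other users. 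The total rating budget $NT$ thus limits the number of reused-type recommendations, forcing $\Ex\sum_u \tilde q(u)\geq (1-3\eta)\thr T$ and hence the claimed slope.

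For the piecewise bound $\tfrac12 Z(T)$ I would adapt the column-regularity argument from Proposition~\ref{p:userLinear} (the $(\tfrac12-\delta)T-4$ case): at time $t$, a recommendation of an item $i$ rated by fewer than $r$ users has posterior on $\tau_I(i)$ nearly uniform over $\approx\qi/2^k$ matching types by row regularity, and combining with the fact that $u$'s preferences for most such candidate types are fresh bits yields $\Pr[L_{u,i}=+1]=\tfrac12(1+o(1))$. Each of the four regimes of $Z(T)$ corresponds to a different choice of threshold $r(T)$: in the trivial $T/2$ regime (very small $T$) no item has been rated enough times; the middle two regimes use $r=\Theta(\log\qi)$; and in the $\sqrt{T\qi/N}$ regime $r$ is balanced so that $r$ times the number of ``well-determined'' items equals the rating budget $NT$, leaving $\approx\sqrt{NT/\qi}$ well-determined items per user and the remainder uncertain. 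A counting argument bounding the number of well-determined items in each regime, together with row regularity, yields the stated scaling.

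The primary obstacle is the combinatorial budget argument quantifying how many items the algorithm can render ``well-determined'' given the $NT$ total ratings, and translating this into the correct threshold $r(T)$ for each regime of $Z(T)$ without gaps between pieces. A secondary subtlety is that in the item-structure model the first-encounter argument alone gives $\tfrac12$ per new type, so row regularity is invoked only to control \emph{reused-type} recommendations; carefully partitioning recommendations into these two categories --- uncertain by being first-encounter versus uncertain by being insufficiently-rated --- without double-counting is essential so that the $\tfrac{\thr}{N}T$ and $Z(T)$ bounds both hold simultaneously inside the same $\max$ with the shared prefactor $(1-3\eta)/2$.
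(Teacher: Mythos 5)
Your two probabilistic ingredients are exactly the ones the paper uses --- the fresh-bit observation for a user's first encounter with an item type is Lemma~\ref{cl:IL-first-recomm}, row regularity of $\Xi$ (Definition~\ref{def:row-reg}, Lemma~\ref{cl:i-l-term1}) handles items rated by fewer than $\thril$ users, and the resulting good/bad decomposition is Lemma~\ref{l:IL-reg-stat-bd} --- but the combinatorial counting of good recommendations (Lemma~\ref{l:IL-reg-comb-bd}), which is the heart of the proof, has the two mechanisms attached to the wrong terms of the bound, and one of your key claims is false. Specifically, the inequality $\Ex\sum_u \tilde q(u)\geq (1-3\eta)\thril T$ for the total number of first-type encounters cannot hold: each user can encounter at most $\qi$ distinct item types, so $\sum_u \gamma_u^T\leq N\qi$ deterministically, which is strictly less than $\thril T$ once $T>N\qi/\thril$ --- yet the slope bound $\frac{\thril}{N}T$ must hold for all $T$. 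The asymptotic slope does not come from first encounters at all. It comes from the other mechanism: since an item may be recommended at most once to each user, every user rates $T$ \emph{distinct} items, so at least $T$ distinct items are used globally; each such item either never accumulates $\thril$ ratings (so all of its recommendations are uncertain by row regularity) or absorbs exactly $\thril$ uncertain recommendations before it does, while serving at most $N$ recommendations in total. This is Claim~\ref{cl:IL-upperbdForA1} combined with $\good(T)\leq f_T(N-\thril)$, yielding $\good(T)\leq T(N-\thril)$.

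Symmetrically, the $Z(T)$ bound is not obtained by tuning a raters-threshold $r(T)$ on the number of users who have rated an item; it is driven by the first-encounter mechanism together with a fullest-bin estimate. The paper bounds $\good(T)\leq N[T-\min_u\gamma_u^T]$ (Claim~\ref{cl:IL-upperbdForA2}) and then $\min_u\gamma_u^T\geq T/\max_j r_j^T$, where $r_j^T$ counts distinct recommended items of type $j$; a martingale/balls-in-bins argument (Claim~\ref{cl:rjTproof}) bounds $\max_j r_j^T$ by a function $\Theta_{\qi}(k)$ of the total number $k=f_T+g_T$ of distinct items used, and the four regimes of $Z(T)$ are precisely the four regimes of $\Theta_{\qi}$. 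In particular, in the regime $T<\frac{2\sqrt{\qi}}{3N}$ the $T/2$ bound follows from the birthday paradox (with high probability no two recommended items share a type, so every recommendation is a first encounter); your explanation that ``no item has been rated enough times'' is not the operative constraint there, since a single item recommended to all $N>\thril$ users at $t=1$ is already well rated. The final bound then plays $NT/\Theta_{\qi}(k)$ off against the classification cost $f_T\thril+g_T\geq k$ and minimizes over $k$. Your sketch contains the germ of this trade-off for the $\sqrt{T\qi/N}$ regime, but without the fullest-bin bound relating the number of distinct items used to the number of types each individual user is forced to encounter, the piecewise structure of $Z(T)$ cannot be recovered.
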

The assumption $N>32$ implies $\eta<1/5$. 
Note that the function $Z(T)$ is continuous up to a multiplicative constant  factor\footnote{We define  a function $Z(T)$ to be continuous up to a multiplicative constant factor $C$ if for every $T_0$, \[C^{-1}  \lim_{T\to T_0^-} Z(T)< \lim_{T\to T_0^+} Z(T) <C \lim_{T\to T_0^-} Z(T) .\] }.

To get the simplified version in Section~\ref{s:results}, we bounded the value of $Z(T)$ in the second (in which $ \frac{2\sqrt{\qi}}{3N}\leq T< \frac{4\qi \log \qi}{N}$) by  $\frac{T}{8\log \qi}$.
Also, the assumption $\qi>25\,(\log N)^5$ guarantees that $\frac{1-3\eta}{2}  \,T \thril \,/N> c T/N$ for a constant $c$ and we can remove the last term in the lower bound of regret.

\subsection{Proof strategy}

We call a recommendation $a_{u,t}$ to user $u$ at time $t$ a \textit{bad} (or uncertain) recommendation when the probability of $L_{u,a_{u,t}}=+1$ given the history is close to (or smaller than) $1/2$. 
Conversely, recommendations for which the probability of $L_{u,a_{u,t}}=+1$ is close to one (much greater than $1/2$) are considered \textit{good} recommendations. Good and bad refers only to the confidence that the recommendation is liked given the history at the moment the recommendation is made: a good recommendation is not always liked and a bad recommendation is not always disliked.

We identify two scenarios in which recommendations are necessarily bad in the item structure only model introduced in Definition~\ref{d:item}: 
(i) A good estimate of item types is necessary in order to make meaningful recommendations. 
To determine whether or not two items are of the same type, approximately $\log \qi$  users should rate both of them. 
To formalize this, similar to the lower bound for the model with user structure only in Section~\ref{sec:UL}, we use the concept of $(\thril,\eta)$-row regularity (Definition~\ref{def:row-reg}).
 Lemma~\ref{cl:i-l-term1} shows that for a $(\thril,\eta)$-row regular preference matrix, items with fewer than $\thril$ ratings are liked by any user with probability roughly half, even if the preference matrix is known. 
 (ii) Even when we know the item types (i.e., clustering of items), if a given user $u$
has not rated any item with the same type as item $i$ before, the probability that user $u$ likes item $i$ is $1/2$. 
 Lemma~\ref{cl:IL-first-recomm} shows  this property. 

In Lemma~\ref{l:IL-reg-stat-bd} we bound regret in terms of the number of good recommendations and in Lemma~\ref{l:IL-reg-comb-bd} we upper bound the number of good recommendations, which entails a counting argument. 
The theorem follows immediately from Lemmas \ref{l:IL-reg-stat-bd} and \ref{l:IL-reg-comb-bd}.
	
	\subsection{Proof of Theorem~\ref{t:item-itemL}}
	\label{ss:ItemLowerProof}

\begin{definition}[Row-regularity]\label{def:row-reg}
The matrix $A\in\{-1,+1\}^{n\times m}$ is said to be $(\thril,\eta)$-\textit{row regular} if its transpose is $(\thril,\eta)$-column regular (Definition~\ref{def:ul-reg}). We write $A^\top \in \rgl_{\thril,\eta}$,  where $\rgl_{\thril,\eta}$ is the set of $(\thril,\eta)$-column regular matrices.
\end{definition}

The following lemma is an immediate corollary of Lemma~\ref{l:ul-regularity}.
\begin{lemma}\label{l:IL-reg-prob}
	Let matrix $A\in\{-1,+1\}^{n\times m}$ have i.i.d. $\mathrm{Bern}(1/2)$ entries. If $\eta<1$, then  $A$ is $(\thril,\eta)$-row regular with probability at least $$1-2(2n)^\thril  \exp\left( - \frac{\eta^2}{3}\frac{m}{2^{\thril}}\right)\,.$$
\end{lemma}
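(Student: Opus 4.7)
The plan is to derive Lemma~\ref{l:IL-reg-prob} directly from Lemma~\ref{l:ul-regularity} by transposition. By Definition~\ref{def:row-reg}, the matrix $A \in \{-1,+1\}^{n \times m}$ is $(\thril, \eta)$-row regular if and only if its transpose $A^\top \in \{-1,+1\}^{m \times n}$ is $(\thril, \eta)$-column regular (i.e., $A^\top \in \Omega_{\thril, \eta}$). So it suffices to bound the probability that $A^\top$ fails to be column regular.

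The key observation is that if the entries of $A$ are i.i.d.\ $\mathrm{Bern}(1/2)$, then so are the entries of $A^\top$, since independence and identical distribution are invariant under any relabeling of indices. Thus $A^\top$ fits exactly the hypothesis of Lemma~\ref{l:ul-regularity}, regarded as a random matrix with $m$ rows and $n$ columns.

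Applying Lemma~\ref{l:ul-regularity} to $A^\top$ with the substitutions (number of rows) $= m$, (number of columns) $= n$, $r = \thril$, $\eps = \eta$, we obtain
\[
\Pr[A^\top \in \Omega_{\thril, \eta}] \ge 1 - 2(2n)^{\thril}\exp\!\left(-\frac{\eta^2}{3}\cdot \frac{m}{2^{\thril}}\right),
\]
which, by the equivalence noted above, is precisely the stated bound on the probability that $A$ is $(\thril,\eta)$-row regular. There is no real obstacle here; the only thing to verify is that the roles of $m$ and $n$ in the exponent and the polynomial prefactor line up correctly after transposition, which they do as written.
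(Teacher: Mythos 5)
Your proposal is correct and is exactly the paper's route: the paper simply declares Lemma~\ref{l:IL-reg-prob} an immediate corollary of Lemma~\ref{l:ul-regularity} via Definition~\ref{def:row-reg}, which is the transposition argument you spell out. The bookkeeping of $m$ and $n$ after transposing checks out, so nothing is missing.
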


Throughout this section we will fix  
\begin{equation}\label{eq:def-thril}
\thril=\lfloor0.8 \log\qi - 4\log\log N\rfloor \quad \text{and} \quad \eta=\frac{1}{\log N}\,.
\end{equation}
Applying Lemma \ref{l:IL-reg-prob} with these choices of $\thril$ and $\eta$, we obtain that
so long as $N>32$
 the preference matrix $\Xi$ is $(\thril,\eta)$ row-regular (\textit{i.e.,} $\Xi^\top\in\rgl_{\thril,\eta}$) with probability
\begin{equation}\label{e:rowReg}
\Pr[\Xi^\top\in\rgl_{\thril,\eta}]\geq 1-\frac{1}{N}\,.
\end{equation}

	 Let $c_{i}^{t}$ be the number of times item $i$ has been rated by any user by the end of time-step $t-1,$ 
	 \begin{equation}\label{e:c}
	 	c_{i}^{t}:= \sum_{u=1}^N \sum_{s=1}^{t-1} \ident[a_{u,s}=i].
	 \end{equation}
The following lemma shows that if $c_i^t$ is small and the preference matrix is row-regular, then the outcome of recommending item $i$ to any user at time $t$ is uncertain.

\begin{lemma}\label{cl:i-l-term1}
Let $\thril$ and $\eta$ be as in Equation~\eqref{eq:def-thril}. For any user $u\in[N]$ and item $i\in \mathbb{N}\,,$
$$\Pr\big[L_{u,i}=+1 \, \big| \,a_{u,t}=i, c_{i}^{t} < \thril, \Xi^\top\in\rgl_{\thril,\eta}\big]
\leq 
\frac{1+3\eta}{2}\,.$$
\end{lemma}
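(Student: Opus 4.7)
The plan is to adapt the posterior-counting argument from the proof of the first half of Theorem~\ref{t:user-userL} (in particular Claim~\ref{c:Brandom}) by swapping the roles of users/user types with items/item types. Condition on $\H_{t-1}$, the preference matrix $\Xi$, and the entire vector $\tau_U(\cdot)$ of user types. Write $W_i$ for the set of users who have rated item $i$ before time $t$ (so $|W_i|=c_i^t<\thril$), let $T_i\subseteq[\qu]$ be the set of distinct user types appearing in $W_i$, and let $b$ denote the associated rating vector indexed by $T_i$. Since the algorithm's decisions at each step depend only on prior observations and independent randomness, the only information about $\tau_I(i)$ contained in $\H_{t-1}$ is the collection $\{\xi_{\tau_U(v),\tau_I(i)}\}_{v\in W_i}$, so the posterior distribution of $\tau_I(i)$ is uniform on
\[
 S_i\,:=\,\{\,j\in[\qi] : \xi_{k,j}=b_k\text{ for all }k\in T_i\,\},
\]
and the conditional probability $\Pr[L_{u,i}=+1\mid\H_{t-1},\Xi,\tau_U(\cdot),a_{u,t}=i]$ equals $|\{j\in S_i:\xi_{\tau_U(u),j}=+1\}|/|S_i|$.

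The key step is a row-regularity calculation. When $\tau_U(u)\notin T_i$, the tuple $T_i\cup\{\tau_U(u)\}$ has size at most $c_i^t+1\leq\thril$, so Claim~\ref{cl:ul-smalltregular} together with $(\thril,\eta)$-column regularity of $\Xi^\top$ applied at levels $|T_i|$ and $|T_i|+1$ give
\[
 |S_i|\,\geq\,(1-\eta)\tfrac{\qi}{2^{|T_i|}}, \qquad \big|\{j\in S_i:\xi_{\tau_U(u),j}=+1\}\big|\,\leq\,(1+\eta)\tfrac{\qi}{2^{|T_i|+1}}.
\]
The ratio is then at most $\tfrac{1+\eta}{2(1-\eta)}\leq\tfrac{1+3\eta}{2}$, using $\eta=1/\log N\leq 1/5$ (valid since $N>32$). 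In the complementary ``collision'' case $\tau_U(u)\in T_i$, the observed rating $b_{\tau_U(u)}$ already determines $L_{u,i}$, so the conditional probability is $0$ or $1$.

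Taking expectation under the conditioning event $E=\{a_{u,t}=i,\,c_i^t<\thril,\,\Xi^\top\in\rgl_{\thril,\eta}\}$ yields
\[
 \Pr[L_{u,i}=+1\mid E]\,\leq\,\tfrac{1+3\eta}{2}\,+\,\Pr[\tau_U(u)\in T_i\mid E]\cdot\tfrac{1-3\eta}{2}.
\]
The main obstacle is bounding the collision probability $\Pr[\tau_U(u)\in T_i\mid E]$. Because user types are i.i.d.\ uniform on $[\qu]=[N]$ and $|T_i|<\thril$, the unconditional bound is $\thril/N$. Conditioning on $a_{u,t}=i$ could in principle bias $\tau_U(u)$ toward $T_i$, since an adaptive algorithm might target users suspected to share a type with a past positive rater of $i$; but in the regime of Theorem~\ref{t:item-itemL} one has $\thril/N\ll\eta$, and a careful accounting of how much information $\H_{t-1}$ can carry about $\tau_U(u)$ (in particular, through at most $t-1\leq O(\qi/\log\qi)$ rated items per user) should show that this bias inflates the collision probability by at most a constant factor. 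This extra contribution is then absorbed into the $3\eta$ constant, producing the stated bound.
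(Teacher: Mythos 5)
Your core argument is exactly the paper's: condition on the history and the preference matrix, observe that the posterior of $\tau_I(i)$ is uniform on the set of item types consistent with the at most $\thril-1$ observed ratings of $i$, write $\Pr[L_{u,i}=+1\mid\H_{t-1},\Xi]$ as the ratio $k_{b^+,\{w,u\}}(\Xi^\top)/k_{b,w}(\Xi^\top)$, and use Claim~\ref{cl:ul-smalltregular} together with $(\thril,\eta)$-row regularity at levels $|w|$ and $|w|+1$ to bound this ratio by $\tfrac{1+\eta}{2(1-\eta)}\leq\tfrac{1+3\eta}{2}$ for $\eta\leq 1/5$. Up to that point your proposal and the paper's proof coincide step for step.

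The difference is your ``collision'' case $\tau_U(u)\in T_i$, and that is where your writeup has a genuine gap. The paper never meets this case: in its proof the tuple $w$ is the set of \emph{users} who have rated $i$, and those users are used directly to index rows of $\Xi$ (in the item-structure-only model each user is effectively treated as its own row of the preference matrix), so the new rater $u\notin w$ always contributes a fresh row and the ratio argument always applies. Your version, which routes through the random type map $\tau_U$, forces you to bound $\Pr[\tau_U(u)\in T_i\mid E]$, and you leave this unproven (``should show\ldots''). Worse, as written the step cannot close: your non-collision bound leaves slack only $\tfrac{\eta(1-3\eta)}{2(1-\eta)}=O(\eta)$ below the target $\tfrac{1+3\eta}{2}$, so you would need $\Pr[\tau_U(u)\in T_i\mid E]=O(\eta)$. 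The unconditional estimate $\thril/N$ need not be $O(1/\log N)$, since Theorem~\ref{t:item-itemL} places no upper bound on $\qi$ and hence none on $\thril$ relative to $N$; and the conditioning on $a_{u,t}=i$ is a real obstruction, because an adaptive algorithm can detect after $O(\log N)$ common ratings that two users share a type and then deliberately recommend $i$ to a same-type user who has not yet rated it, making the collision (and a near-deterministic outcome) occur with constant probability. So either adopt the paper's convention that the rows of $\Xi$ are indexed by the users in $w$ themselves---which makes your collision case vacuous and the rest of your proof complete---or the extra term must be controlled by an explicit hypothesis; it cannot simply be absorbed into the $3\eta$.
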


 The next lemma shows that if a user $u$ has not rated any item with the same type as item $i$ before, the probability that $u$ likes item $i$ is 1/2.
	Let $\BB_{u,\tau_I(i)}^t$ denote the event that user $u$ has rated an item of the same type as item $i$ by time $t-1$, \textit{i.e.},
\begin{equation}\label{e:BBevent}
\BBil{u}{i}{t} = \{{\exists i'\in\mathbb{N}: a_{u,s}=i'} \text{ for some } s< t \text{ and } \tau_I(i) = \tau_I(i')\}\,.
\end{equation}

\begin{lemma}\label{cl:IL-first-recomm}
We have that
$\Pr[L_{u,i}=-1 \,|\, a_{u,t}=i, (\BBil{u}{i}{t})^c, c_{i}^{t}\geq \thril] = \frac{1}{2}$.
\end{lemma}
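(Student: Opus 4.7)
The plan is to reduce the claim to a statement about a single i.i.d.\ $\pm 1$ entry of the preference matrix. Writing $L_{u,i}=\xi_{\tau_U(u),\tau_I(i)}$ as in the latent variable model of Section~\ref{sec:model}, the assertion is equivalent to showing that $\xi_{\tau_U(u),\tau_I(i)}$ is conditionally uniform on $\{\pm 1\}$ given the event $E=\{a_{u,t}=i,\,(\BBil{u}{i}{t})^c,\,c_i^t\geq \thril\}$.

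First I would condition on the full tuple of types $(\tau_U,\tau_I)$, fixing $\tau_U(u)=k$ and $\tau_I(i)=j$. Since the entries of $\Xi$ are i.i.d.\ $\mathrm{Unif}\{\pm 1\}$, the joint distribution of $(\Xi,\tau_U,\tau_I,\text{algorithmic randomness})$ is invariant under the involution $\phi$ that flips the single entry $\xi_{k,j}$. The core computation is to show that $\phi$ preserves the event $E$; by the pairing $\phi$ induces on outcomes, this invariance gives $\Pr[\xi_{k,j}=-1\mid E,\tau_U,\tau_I]=\tfrac12$, and the claim follows by averaging over types.

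To verify invariance of $E$, I would argue inductively in $s=1,\dots,t-1$ that every rating $L_{v,s}$ and hence every action $a_{v,s}$ appearing in $\H_{t-1}$ can be chosen to be unchanged under $\phi$. A rating $L_{v,s}=\xi_{\tau_U(v),\tau_I(a_{v,s})}$ is affected by $\phi$ only if $\tau_U(v)=k$ and $\tau_I(a_{v,s})=j$. For $v=u$, the event $(\BBil{u}{i}{t})^c$ rules this out directly. This immediately gives invariance of $\{a_{u,t}=i\}$ and $(\BBil{u}{i}{t})^c$ provided $\H_{t-1}$ itself is invariant, and the count $c_i^t$ depends only on the set of recommended actions, which is likewise invariant.

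The main obstacle is the contribution of twin users $v\neq u$ with $\tau_U(v)=k$ who might, in principle, have been recommended items of type $j$ before time $t$, in which case $L_{v,s}$ would flip under $\phi$ and then feed back (via the algorithm) into $a_{u,t}$. I would handle this by a second layer of symmetry that pairs histories in which the roles of $v$ and some alternative non-twin user are exchanged: conditional on all types and on $\Xi$ away from the $(k,j)$-entry, the a priori distribution of $\xi_{k,j}$ is $\mathrm{Unif}\{\pm 1\}$, and any such $L_{v,s}$ contributes equally to the two sign configurations, so the posterior remains $\tfrac12$. Put differently, since nothing in the model or algorithm depends on the sign of $\xi_{k,j}$ before it is observed, the flip $\phi$ interchanges equiprobable sample paths compatible with $E$, yielding the stated equality.
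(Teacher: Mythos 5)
Your reduction to the conditional uniformity of the single entry $\xi_{\tau_U(u),\tau_I(i)}$, and the sign-flip involution you use to establish it, is essentially a formalization of the paper's own argument, which phrases the same idea as revealing entries of $\Xi$ on a ``need-to-know'' basis and observing that on $(\BBil{u}{i}{t})^c$ the relevant entry has not yet been touched. Your inductive verification that the flip preserves $\H_{t-1}$ (hence $\{a_{u,t}=i\}$, $(\BBil{u}{i}{t})^c$, and $c_i^t$) whenever no rating by a type-$\tau_U(u)$ user of a type-$\tau_I(i)$ item appears in the history is correct and matches the paper.

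The gap is in your handling of the ``twin user'' case, which you yourself flag as the main obstacle. If some $v\neq u$ with $\tau_U(v)=\tau_U(u)=k$ has rated an item of type $j=\tau_I(i)$ at a time $s<t$, then $L_{v,a_{v,s}}=\xi_{k,j}$ appears verbatim in $\H_{t-1}$, and the algorithm may condition its choice of $a_{u,t}$ on that value; the event $E$ is then genuinely not invariant under the flip, and no averaging restores the posterior to $\tfrac12$. Your claim that ``any such $L_{v,s}$ contributes equally to the two sign configurations'' is false --- that rating \emph{is} the sign of $\xi_{k,j}$, so it pins the configuration down. Concretely, an algorithm that recommends item $i$ to $u$ only after $\thril$ other users have all liked $i$ makes $\Pr[L_{u,i}=+1\mid E]$ strictly exceed $\tfrac12$ whenever $\tau_U(u)$ collides with one of those raters' types, so the stated identity would actually fail if twins could influence the history. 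The paper does not resolve this by symmetry; it relies on the structure of the item-structure-only model, in which each user effectively occupies its own row of $\Xi$ (the paper's proof writes $\xi_{u,\tau_I(i)}$), so row $\tau_U(u)$ can only be touched by $u$'s own ratings and $(\BBil{u}{i}{t})^c$ alone guarantees the entry is unrevealed. You need to either invoke that feature of the model explicitly or strengthen the conditioning event; the ``second layer of symmetry'' as stated does not close the gap.
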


 Lemmas~\ref{cl:i-l-term1} and~\ref{cl:IL-first-recomm} (proved in Section~\ref{sec:proofLemmas-Item-lower}) identify scenarios in which recommendations are bad.
In the complementary scenario, recommendations are not necessarily bad (and may be good).  We denote the number of such recommendations by
\begin{equation}\label{eq:IL-defE}
\good(T)=\sum_{t\in[T]\atop u\in[N]}  \ident\big[ c_{a_{u,t}}^{t}\geq\thril,\BBil{u}{a_{u,t}}{t}\big]\,.
\end{equation}
The proof of the following lemma uses Lemmas~\ref{cl:i-l-term1} and~\ref{cl:IL-first-recomm} to  lower bound  regret in terms of expectation of $\good(T)$.

\begin{lemma} \label{l:IL-reg-stat-bd}
For $\eta$ and $\thril$ defined in Equation~\eqref{eq:def-thril},
\begin{align*}
N\, &\reg(T) \geq  \frac{1-3\eta}{2}\,\Big(TN-\Exp{ \good(T)}\Big) - T\,.
\end{align*}
\end{lemma}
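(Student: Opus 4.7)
\textbf{Proof proposal for Lemma~\ref{l:IL-reg-stat-bd}.} The plan is to classify every recommendation $(u,t)$ into one of three exhaustive cases and lower bound the disliked probability on the row-regular event using Lemmas~\ref{cl:i-l-term1} and~\ref{cl:IL-first-recomm}. Concretely, for each pair $(u,t)$ with $u\in[N]$, $t\in[T]$, partition $\{1,\ldots,T\}\times[N]$ into: (i) $\{c_{a_{u,t}}^t<\thril\}$; (ii) $\{c_{a_{u,t}}^t\geq\thril,\,(\BBil{u}{a_{u,t}}{t})^c\}$; and (iii) $\{c_{a_{u,t}}^t\geq\thril,\,\BBil{u}{a_{u,t}}{t}\}$. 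The indicator of case (iii) is exactly the summand of $\good(T)$ in \eqref{eq:IL-defE}, so the total count of recommendations in cases (i)$\cup$(ii) equals $TN-\good(T)$.

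The core step is to show that on row-regular preference matrices, both case (i) and case (ii) recommendations are disliked with conditional probability at least $(1-3\eta)/2$. The events defining cases (i) and (ii) are measurable with respect to $(\H_{t-1},\Xi,a_{u,t})$, so I can condition on this $\sigma$-algebra. Lemma~\ref{cl:i-l-term1} gives $\Pr[L_{u,a_{u,t}}=-1\mid a_{u,t}=i,c_i^t<\thril,\Xi^\top\in\rgl_{\thril,\eta}]\geq (1-3\eta)/2$ directly for case (i). Lemma~\ref{cl:IL-first-recomm} gives $\Pr[L_{u,a_{u,t}}=-1\mid a_{u,t}=i,(\BBil{u}{i}{t})^c,c_i^t\geq\thril]=1/2\geq(1-3\eta)/2$ for case (ii), regardless of row-regularity. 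Summing these bounds over the two cases yields
\[
\Exp{\sum_{t,u}\ident[L_{u,a_{u,t}}=-1,\,\Xi^\top\in\rgl_{\thril,\eta}]}
\,\geq\,
\tfrac{1-3\eta}{2}\,\Exp{\sum_{t,u}\ident[\text{case (i) or (ii)},\,\Xi^\top\in\rgl_{\thril,\eta}]}.
\]

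Finally I handle the row-regularity failure. Since the total count of recommendations in (i)$\cup$(ii) is deterministically $TN-\good(T)$, I subtract the measure of the non-regular event: using \eqref{e:rowReg} which gives $\Pr[\Xi^\top\notin\rgl_{\thril,\eta}]\leq 1/N$, the expected count of case (i)$\cup$(ii) recommendations that lie in the row-regular event is at least $TN-\Exp{\good(T)}-NT/N = TN-\Exp{\good(T)}-T$. Combining with the previous display and using $N\reg(T)\geq \Exp{\sum_{t,u}\ident[L_{u,a_{u,t}}=-1,\,\Xi^\top\in\rgl_{\thril,\eta}]}$, together with $(1-3\eta)/2\leq 1$ to absorb the $-T$ correction, yields
\[
N\reg(T)\,\geq\,\tfrac{1-3\eta}{2}\bigl(TN-\Exp{\good(T)}\bigr)-T,
\]
as claimed. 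The only mildly delicate point is making sure the conditioning in the two lemmas is compatible with the global partition: because the events in (i) and (ii) are history-measurable, a routine tower-property argument suffices, so no real obstacle arises beyond bookkeeping.
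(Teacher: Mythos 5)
Your proposal follows essentially the same route as the paper: the same partition of recommendations according to $c_{a_{u,t}}^{t}$ and $\BBil{u}{a_{u,t}}{t}$, the same two lemmas applied to the same cases, and the same $-T$ correction coming from $\Pr[\Xi^\top\notin\rgl_{\thril,\eta}]\leq 1/N$; counting dislikes rather than likes is an immaterial difference. One bookkeeping slip is worth flagging: in your displayed inequality you intersect case (ii) with the event $\{\Xi^\top\in\rgl_{\thril,\eta}\}$ and then invoke Lemma~\ref{cl:IL-first-recomm}, but that lemma asserts $\Pr[L_{u,i}=-1 \mid a_{u,t}=i, (\BBil{u}{i}{t})^c, c_{i}^{t}\geq\thril]=1/2$ \emph{without} conditioning on row-regularity, and further conditioning on a global property of $\Xi$ can in principle perturb the posterior of the fresh entry $\xi_{\tau_U(u),\tau_I(i)}$, so the bound for case (ii) on the regular event is not literally what the lemma gives. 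The paper avoids this by splitting on row-regularity only in the $c_{a_{u,t}}^{t}<\thril$ case; restructuring your accounting the same way (leave case (ii) unintersected with the regularity event and charge the $T$ correction only to case (i)) repairs the step with no other changes.
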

\begin{proof}
We partition the liked recommendations based on $c_{a_{u,t}}^{t}$, $ \BBil{u}{a_{u,t}}{t}$, and row regularity of $\Xi$:
 \begin{align*}
N \big(&T-\reg(T)\big) = \,\,
\sum_{t\in[T]\atop u\in[N]} \Pr\big[L_{u,a_{u,t}}=+1\big] \notag\\
& =  {\sum_{t\in[T]\atop u\in[N]}  \Pr\big[L_{u,a_{u,t}}=+1, c_{a_{u,t}}^{t}<\thril,\Xi^\top\notin\rgl_{\thril,\eta}\big]}\notag
\quad+ {\sum_{t\in[T]\atop u\in[N]}  \Pr\big[L_{u,a_{u,t}}=+1,  c_{a_{u,t}}^{t}<\thril,\Xi^\top\in\rgl_{\thril,\eta}\big]}\notag\\
&  \quad+ {\sum_{t\in[T]\atop u\in[N]} \Pr\big[L_{u,a_{u,t}}=+1, c_{a_{u,t}}^{t}\geq\thril, \big(\BBil{u}{a_{u,t}}{t}\big)^c \big]}\notag
 \quad+ {\sum_{t\in[T]\atop u\in[N]} \Pr\big[L_{u,a_{u,t}}=+1,  c_{a_{u,t}}^{t}\geq\thril, \BBil{u}{a_{u,t}}{t} \big]}\notag\\
 &=:  \Asf{1} + \Asf{2} + \Asf{3} + \Asf{4}\,. \label{eq:IL-reg-decomp}
\end{align*}

The proof is obtained by plugging in the four bounds below and simplifying.

\paragraph{Bounding $\Asf{1}$.}
Plugging in the probability of $\Xi$ being row regular from~\eqref{e:rowReg} gives
\begin{align*}
\Asf{1} & = {\sum_{{t\in[T]\atop u\in[N]}} \Pr[L_{u,a_{u,t}}= +1, c_{a_{u,t}}^t<\thril,\Xi^\top\notin\rgl_{\thril,\eta}]}  \notag 
\leq NT\cdot \Pr[\Xi^\top\notin\rgl_{\thril,\eta}] \leq T\,.
\end{align*}

\paragraph{Bounding $\Asf{2}$.}
Multiplying the statement of Lemma \ref{cl:i-l-term1} by $\Pr[a_{u,t}=i, c_i^t<\thril, \Xi^T\in\rgl_{\thril,\eta}]$ and summing over $i$ gives
\begin{align*}
\Asf{2} &= \sum_{t\in[T]\atop u\in[N]} \sum_{i\geq 1} \Pr[L_{u,i}=+1, a_{u,t}=i, c_{i}^{t}<\thril,\Xi^\top\in\rgl_{\thril,\eta}] \notag
\\&
\leq 
\frac{1+3\eta}{2}\sum_{t\in[T]\atop u\in[N]} \Pr[ c_{a_{u,t}}^{t}<\thril,\Xi^\top\in\rgl_{\thril,\eta}]
\leq 
\frac{1+3\eta}{2}\sum_{t\in[T]\atop u\in[N]} \Pr[ c_{a_{u,t}}^{t}<\thril]\,.
\end{align*}

\paragraph{Bounding $\Asf{3}$.}
Lemma \ref{cl:IL-first-recomm} gives
\begin{align*}
&\frac{\Pr[L_{u,i}=+1, a_{u,t}=i, c_{i}^{t}\geq\thril, (\BBil{u}{i}{t})^c]} {\Pr[ a_{u,t}=i, c_{i}^{t}\geq\thril, (\BBil{u}{i}{t})^c]} 
 = \frac{1}{2}\,.
\end{align*}
Multiplying this by $\Pr[a_{u,t}=i, c_{i}^{t}\geq \thril, (\BBil{u}{i}{t})^c]$ and summing over $i$ gives 
\begin{equation*}
\Asf{3} = \frac{1}{2} \sum_{u\in[N], t\in[T]}\Pr[ c_{a_{u,t},t}\geq \thril,(\BBil{u}{a_{u,t}}{t})^c]\,. 
\end{equation*}

\paragraph{Bounding $\Asf{4}$.}
We bound by one the probability that a good recommendation is liked to obtain
\begin{align*}
\Asf{4} &= \sum_{u\in[N], t\in[T]}\Pr[L_{u,a_{u,t}}= +1, c_{a_{u,t}}^t\geq\thril,\BBil{u}{a_{u,t}}{t}]
\leq \sum_{u\in[N], t\in[T]}\Pr[ c_{a_{u,t}}^t\geq\thril,\BBil{u}{a_{u,t}}{t}] \notag
 = \Ex[\,\good(T)\,]\,. &\qedhere
\end{align*}
\end{proof}

Next, we upper bound the expected number of \textit{good} recommendations made by the algorithm in terms of parameters of the model.

\begin{lemma}[Upper Bound for expected number of good recommendations] \label{l:IL-reg-comb-bd}
For any algorithm,
\begin{align*}
\Ex[\,\good(T)\,]
\leq TN - &	\max\left\{
\frac{1}{2} N Z(T), 
 \,T \thr
, N \right\}\,.
\end{align*}
where $Z(T) $ is defined  in~\eqref{eq:IL-defineZT}.
\end{lemma}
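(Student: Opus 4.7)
The plan is to establish three separate lower bounds on the expected number of not-good recommendations $\Exp{TN-\good(T)}$; by definition of $\good(T)$ the lemma is equivalent to showing $\Exp{TN-\good(T)} \geq \max\{\tfrac{1}{2} N Z(T),\, T\thr,\, N\}$. I would partition the not-good recommendations into $B_A=\{(u,t):c_{a_{u,t}}^{t}<\thr\}$ and $B_B=\{(u,t):(\BBil{u}{a_{u,t}}{t})^c\}$, so that $TN-\good(T)=|B_A\cup B_B|\geq\max(|B_A|,|B_B|)$. Each of the three lower bounds comes from a separate argument.

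The two deterministic bounds are quick. $|B_A|\geq N$ is immediate: at $t=1$ every item has $c_i^{1}=0<\thr$, so all $N$ first-step recommendations lie in $B_A$. For $|B_A|\geq T\thr$, let $n_i$ denote the total number of times item $i$ is recommended over $T$ time-steps; the no-repeats rule forces $n_i\leq N$, while $\sum_i n_i=TN$. Since exactly the first $\min(n_i,\thr)$ recommendations of item $i$ lie in $B_A$, we have $|B_A|=\sum_i\min(n_i,\thr)$. Under the hypothesis $\thr\leq N$, the elementary inequality $\min(n_i,\thr)\geq(\thr/N)n_i$ holds in both cases $n_i\leq \thr$ and $n_i>\thr$, yielding $|B_A|\geq(\thr/N)\sum_i n_i = T\thr$.

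For the third bound $\Exp{|B_A\cup B_B|}\geq\tfrac{1}{2} N Z(T)$, my plan is a balancing argument via the number of well-rated items $W':=|\{i: n_i\geq\thr\}|$ and $\tilde q$, the number of item types that contain a well-rated item. The key observations are: (a) $|B_A|\geq\thr W'$ and hence $W'\leq|B_A|/\thr$; (b) $|B_B|=\sum_u\ell_u$, where $\ell_u$ is the number of distinct item types among user $u$'s recommendations, because the first encounter of each new type lies in $B_B$; (c) every good recommendation must use a well-rated item, so $\good(T)\leq NW'$; (d) $\good(T)\leq\sum_u(T-\ell_u)=TN-|B_B|$. Combining (a) and (c) with $\good(T)=TN-|B_A\cup B_B|$ produces a lower bound on $|B_A\cup B_B|$ purely in terms of $|B_A|$, while (b) and (d) produce another in terms of $|B_B|$. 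In the small-$\tilde q$ regime the algorithm is forced to recommend items with effectively random types, and the row-regularity of $\Xi$ from Lemma~\ref{l:IL-reg-prob} yields a birthday-style bound $\Ex[\ell_u]\geq c\min(T,q_I)$ for some constant $c>0$. Optimizing the resulting bounds over $\tilde q$ in each of the four $T$-ranges defining $Z(T)$ should reproduce each piecewise expression.

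The main obstacle is executing this balancing uniformly across the four regimes. Regime 1 (smallest $T$) is dominated by the random-types lower bound on $\Ex[\ell_u]$ via row-regularity; regimes 2 and 3 require a careful tradeoff between $\Ex[\ell_u]$ and the constraint $W'\leq|B_A|/\thr$; regime 4 is where (c) and (d) balance at the optimum $\tilde q\sim\sqrt{TN/q_I}$. A secondary delicacy is the overlap $|B_A\cap B_B|$: an adversarial algorithm can collapse the $\thr$ witness ratings of each representative item onto those users' own first encounters with that type, shaving $\thr\tilde q$ off the naive sum $|B_A|+|B_B|$ and producing the $\tfrac{1}{2}$ factor in $\tfrac{1}{2} N Z(T)$. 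Verifying that the residual $|B_A\cup B_B|$ still dominates $\tfrac{1}{2} N Z(T)$ in expectation over $\Xi$ and the algorithm's randomness, uniformly across the four regimes, is the technical core of the proof.
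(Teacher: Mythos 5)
Your treatment of the two deterministic bounds is sound. The bound $|B_A|\geq N$ (via the $t=1$ recommendations) and the bound $|B_A|\geq\sum_i\min(n_i,\thr)\geq(\thr/N)\sum_i n_i=T\thr$ are both correct (the identity $|B_A|=\sum_i\min(n_i,\thr)$ should be an inequality $\geq$, since several users recommended the same item at the same time-step all see the same count $c_i^t$, but the direction you need survives); your route to $\Exp{\good(T)}\leq TN-T\thr$ is in fact cleaner than the paper's, which instead combines $\good(T)\leq f_T(N-\thr)$ with $\good(T)\leq TN-f_T\thr$ and cases on $f_T$ versus $T$.

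The gap is in the main bound $\Exp{\good(T)}\leq TN-\tfrac12 NZ(T)$: the central mechanism is missing and the proposed substitute is false. The quantity to control is $|B_B|=\sum_u\gamma_u^T$, the number of first encounters of item types. Your claim that row-regularity of $\Xi$ yields $\Ex[\ell_u]\geq c\min(T,\qi)$ cannot be right: row-regularity is a property of the preference matrix and says nothing about the type multiset of the items the algorithm chooses to recommend, and the bound itself would contradict the matching upper bound --- the algorithm of Theorem~\ref{th:Item-upper} deliberately drives $\gamma_u^T$ far below $\min(T,\qi)$ by recommending whole clusters of same-type items. The correct argument is a tradeoff: user $u$ receives $T$ distinct items, at most $r_j^T$ of them of type $j$ (where $r_j^T$ counts distinct items of type $j$ recommended to anyone), so $\gamma_u^T\geq T/\max_j r_j^T$; and since each \emph{newly} recommended item has a uniformly random type, $\max_j r_j^T$ is the maximum load in a balls-in-bins process with $k=f_T+g_T$ balls in $\qi$ bins, controlled by the martingale concentration argument of Claim~\ref{cl:rjTproof}, which holds with probability at least $1/2$. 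Balancing $f_T\thr+g_T$ against $NT/\Theta_{\qi}(k)$ over the unknown $k$ is what generates the four regimes of $Z(T)$; your sketch never supplies an inequality tying $\gamma_u^T$ to $|B_A|$, so the balancing you describe has nothing to balance against. Relatedly, your explanation of the factor $\tfrac12$ as an overlap correction between $B_A$ and $B_B$ is not where it comes from: it arises because the concentration event only has probability $1/2$, and on its complement one falls back on the trivial bound $\good(T)\leq NT$.
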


We prove this lemma in the next subsection. 
Theorem~\ref{t:item-itemL} is an immediate consequence of Lemmas \ref{l:IL-reg-stat-bd} and \ref{l:IL-reg-comb-bd}.

\subsection{Proofs of lemmas}
\label{sec:proofLemmas-Item-lower}
\subsubsection{Proof of Lemma~\ref{cl:i-l-term1}}
	We show that if an item has been rated by fewer than $\thril$ users, its type is uncertain, because many item types are consistent with the history even if the preference matrix is known. For a row-regular preference matrix, uncertainty in the  type of an item makes it impossible to accurately predict user preferences for that  item.
	
Consider item $i$ at time $t$. Let $w=\{u\in[N]: a_{u,s} = i, s<t\}$ be the ordered tuple corresponding to the set of users that were recommended item $i$ up to time $t-1\,,$ and let $b=\{L_{u,i}\}_{u\in w}$ be the vector of feedback from users in $w$ about item $i\,.$ Note that $c_{i}^{t}<\thril$ implies $|w|<\thril\,.$ We re-introduce the notation from Definition~\ref{def:ul-reg}: if $M$ is the matrix obtained by concatenating the rows of $\Xi$ indexed by $w$, then 
	$K_{b,w}(\Xi^\top)$ is the set of columns of $M$ (corresponding to the item types) equal to $b$. This is the set of item types consistent with the ratings $b$ of users $w$ for item $i$.

	Conditional on $\Xi$,  $w$, and $b$, the type $\tau_I(i)$ of item $i$ at the end of time $t-1$ is uniformly distributed over the set of item types $K_{b,w}(\Xi^\top)$. This allows us to relate the posterior probability of $i$ being liked to row regularity of $\Xi$ as follows. 
	Let $b^+= [b\; 1] \in \{-1,+1\}^{|w|+1}$ be obtained from $b$ by appending $+1\,.$ For a given user $u\notin w\,,$ we have $L_{u,i}=+1$ precisely when $\tau_I(i)\in K_{b^+,\{w,u\}}(\Xi^\top)\,,$ which in words reads ``item $i$ is among those types that are consistent with the ratings of $i$ up to time $t-1$ and have preference vector with `+1' for user u''.
	It follows that for any preference matrix $\Xi$ and any user $u$ which has not rated $i$ up to time $t-1\,,$
	\begin{align}
	\Pr\big[L_{u,i}=+1\,\big|\,\H_{t-1},\Xi\big] 
	& =
	 \Pr\big[\tau_{I}(i)\in K_{b^+,\{w,u\}}(\Xi^\top)\,\big|\,\H_{t-1},\Xi\big]
	 = \frac{k_{b^+,\{w,u\}}(\Xi^\top)}{k_{b,w}(\Xi^\top)}\,.
	\label{eq:IL-posterior}
	\end{align}
	The second equality is due to: (i) $w$ and $b$ are functions of the history up to time $t-1$, $\H_{t-1}$; (ii) for fixed $w$ and $b$, the set $K_{b,w}(\Xi^\top)$ is determined by $\Xi^\top\,$; (ii) $\tau_I(i)$ is uniformly distributed on $K_{b,w}(\Xi^\top)$ given $\H_{t-1}$.
	
	Recall that $\Xi^\top\in\rgl_{\thril,\eta}$ if the preference matrix $\Xi$ is $(\thril,\eta)$-row regular. We have 
	\begin{align*}
	\Pr \big[L_{u,i}=+1,
	&a_{u,t}=i \, |\,  c_{i}^{t} \leq \thril, \Xi^{\top}\in \rgl_{\thril,\eta} \big] 
\,\,	\overset{\text{(a)}}{=}  \,\,
	 \Exp{\Pr[L_{u,i}=+1\,|\,\H_{t-1}, \Xi]\, \Pr[a_{u,t}=i|\H_{t-1}\,]
			 \, \big |\,
			 c_{i}^{t} \leq \thril, \Xi^{\top}\in\rgl_{\thril,\eta} }
			 \\ & 
			 \overset{\text{(b)}}{=}   
	\Exp{
			\frac{k_{b^+,\{w,u\}}(\Xi^\top)}{k_{b,w}(\Xi^\top)}\,
		\,	\ident[u\notin w]\,\,
			\Pr[a_{u,t}=i|\H_{t-1}]
			\,\, \Big |\,\,
			c_{i}^{t} \leq \thril, \Xi^{\top}\in\rgl_{\thril,\eta} }\\
	& \overset{\text{(c)}}{\leq} 
	\frac{(1+\eta) \frac{\qi}{2^{|w|+1}}}{(1-\eta) \frac{\qi}{2^{|w|}}} 
		\Pr\Big[a_{u,t}=i|c_{i}^{t} \leq \thril, \Xi^{\top}\in\rgl_{\thril,\eta} \Big]
		 \overset{\text{(d)}}{\leq}  	
	\frac{1}{2}(1+3\eta)
	\Pr\Big[a_{u,t}=i|c_{i}^{t} \leq \thril, \Xi^{\top}\in\rgl_{\thril,\eta} \Big]\,,
	\end{align*}
and this proves the lemma. It remain to justify the steps above.
	(a) follows since conditional on $\H_{t-1}$, the random variable $a_{u,t}$ is independent of all other random variables. (b) uses \eqref{eq:IL-posterior} and the fact that $\Pr[a_{u,t}=i|\H_{t-1}]$ is nonzero only if $u$ has not rated item $i$, so we may add $\ident[u\notin w]$. (c)~is justified as follows: if $\Xi^{\top}\in\rgl_{\thril,\eta}$, then by Claim \ref{cl:ul-smalltregular}, $\Xi^{\top}\in\rgl_{\thril-1,\eta}.$ By Definition \ref{def:ul-reg}, this means that $k_{b,w}(\Xi^{\top})\geq (1-\eta) \qi/ 2^{|w|}$ and $k_{b^+,\{w,u\}}(\Xi^{\top})\leq (1+\eta) \qi/ 2^{|w|+1}$. (d) If $N>32$, then $\eta$ in~\eqref{eq:def-thril} is less than $1/\log 32$ and $(1+\eta)/(1-\eta)\leq 1+3\eta$.

\subsubsection{Proof of Lemma~\ref{cl:IL-first-recomm}}
At a high level, we make two observations: (i) if user $u$ has not rated any item with type $\tau_I(i)$ before, the feedback in the history $\H_{t-1}$ is independent of the value of $\xi_{u,\tau_I(i)}$ given all other elements of matrix $\Xi$ and the item types; (ii) the types of items (function $\tau_I(\cdot)$) are independent of matrix $\Xi$, and the elements of $\Xi$ are independent. Hence, conditional on $(\BBil{u}{i}{t})^c$, the posterior distribution at time $t$ of $L_{u,i}$ is uniform on $\{-1,+1\}$. 

Concretely, we may think of revealing the entries of $\Xi$ on a ``need-to-know" basis. Conditional on $(\BBil{u}{i}{t})^c$, entry $\xi_{\tau_U(u),\tau_I(i)}$ has not yet been touched, so
	\begin{align*}
	\Pr\big[L_{u,i}=&+1 \,\big|\,a_{u,t}=i, \tau_I(i)=j, (\BBil{u}{i}{t})^c, c_{i}^{t}\geq \thril, \tau_U(u)\big]\\
	& =\Pr\big[\xi_{\tau_U(u),\ti}=+1 \,\big|\,a_{u,t}=i, \tau_I(i)=j, (\BBil{u}{i}{t})^c, c_{i}^{t}\geq \thril, \tau_U(u)\big]
	 =\frac{1}{2}\,.
	\end{align*}
	The lemma now follows by the tower property.
\subsubsection{Proof of  Lemma~\ref{l:IL-reg-comb-bd} }
 Lemma~\ref{l:IL-reg-comb-bd} upper bounds the expected number of good recommendations. To prepare for the proof of this lemma we introduce some notation. Recall that $c_{i}^{t}$ (defined in~\eqref{e:c}) is the number of users who have rated item $i$ before time $t$. 
	Let $\Fc_t=\{i: c_i^t\geq \thr\}$ be the set of items with at least $\thr$ ratings before time $t$ and $f_t := |\Fc_t|$ their number,
	$$f_t= \sum_{i\in\mathbb{N}} \ident[c_{i}^{t}\geq \thril].$$ 
	Let $\Gc_t=\{i: 0<c_i^t< \thr\}$ be the set of items that have been rated by at least one and fewer than $\thril$ users by time $t$ and $g_t := |\Gc_t|$ their number,
	$$g_t= \sum_{i\in\mathbb{N}} \ident[0 < c_{i}^{t}< \thril].$$

	In the following claim, we bound the number of good recommendations up to time $T$ in terms of $f_T$ and $g_T$.
	
	\begin{claim} \label{cl:IL-upperbdForA1}
		The number of good recommendations $\good(T)$, defined in~\eqref{eq:IL-defE}, satisfies
$
		\good(T)\leq  TN - g_T - f_T \thril
$.
	\end{claim}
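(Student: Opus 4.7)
The plan is to prove the claim by discarding the event $\BBil{u}{a_{u,t}}{t}$ in the definition of $\good(T)$ and keeping only the weaker condition $c_{a_{u,t}}^t \geq \thril$. Since a good recommendation requires both conditions, we get immediately
\[
\good(T) \;\leq\; TN \;-\; \sum_{t=1}^T \sum_{u=1}^N \ident\big[\,c_{a_{u,t}}^{t} < \thril\,\big]\,,
\]
and it suffices to lower bound the last sum by $f_T \thril + g_T$.

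The main step is a standard reordering of summation, grouping recommendations by the item recommended. For each item $i \geq 1$, let $n_i := \sum_{t=1}^T \sum_{u=1}^N \ident[a_{u,t}=i]$ denote the total number of times $i$ is recommended during $[1,T]$. Because $c_i^t$ starts at $c_i^1=0$ and increases by one each time $i$ is recommended, the $k$-th recommendation of $i$ occurs at a time $t$ with $c_i^t=k-1$, so it contributes to the indicator exactly when $k \leq \thril$. Therefore
\[
\sum_{t=1}^T \sum_{u=1}^N \ident\big[a_{u,t}=i,\, c_i^{t} < \thril\big] \;=\; \min\{n_i,\thril\}\,.
\]

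Finally, partition items into three groups and lower bound each contribution: for $i \in \Fc_T$ we have $n_i \geq c_i^T \geq \thril$, so $\min\{n_i,\thril\}=\thril$; for $i \in \Gc_T$ we have $n_i \geq c_i^T \geq 1$, so $\min\{n_i,\thril\} \geq 1$; all remaining items contribute nonnegatively. Summing over $i$ yields
\[
\sum_{t=1}^T \sum_{u=1}^N \ident\big[\,c_{a_{u,t}}^{t} < \thril\,\big] \;\geq\; f_T \thril \,+\, g_T\,,
\]
which combined with the first display gives the claim.

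The argument is essentially bookkeeping, so there is no serious obstacle; the only point requiring mild care is the indexing convention $c_i^t := \sum_{u,\,s<t}\ident[a_{u,s}=i]$, which ensures that $\Fc_T$ and $\Gc_T$ refer to cumulative counts through step $T-1$, and in turn that the lower bound in terms of $f_T, g_T$ (as opposed to a slightly larger count taken after step $T$) is precisely what falls out of the partition above.
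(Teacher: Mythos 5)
Your proof is correct and follows essentially the same route as the paper: both arguments count, for each recommended item, the recommendations made while that item still had fewer than $\thril$ prior ratings, and lower bound the total by $f_T\thril$ over $\Fc_T$ plus $g_T$ over $\Gc_T$. Your packaging via $\min\{n_i,\thril\}$ is a clean restatement of the paper's display~\eqref{eq:boundbadinFT} (note that if an item is recommended to several users in the same time-step the quantity is $\geq\min\{n_i,\thril\}$ rather than equal to it, but only the lower bound is needed, and the paper glosses over the same point).
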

	\begin{proof}
		
		Any $i\in\Fc_T$ is recommended to at least $\thril$ users by the end of time $T\,.$ So, for any $i\in\Fc_T\,,$ there are $\thril$ recommendations in which $i$ has been rated fewer than $\thril$ previous times:
		\begin{align}\label{eq:boundbadinFT}
		\sum_{t\in[T]\atop u\in[N]} \ident[ c_{i}^{t}< \thril , a_{u,t}=i] = \thr, 
		\quad\quad \quad
		\text{for } i\in\Fc_T\,.
		\end{align}
		Any $i\in\Gc_T$ has been recommended at least once, so for these items
		\begin{align*}
		\sum_{t\in[T]\atop u\in[N]} \ident[ c_{i}^{t}< \thril , a_{u,t}=i] \geq 1,
		\quad\quad \quad
		\text{for } i\in\Gc_T\,.
		\end{align*}
		All recommended items are either in $\Fc_T$ or $\Gc_T$. So, the total number of recommendations satisfies
		\begin{align*}
		NT \,\,& = \sum_{t\in[T]\atop u\in[N]} \sum_{i\geq 1}\ident[a_{u,t}=i] 
		\,\, = \sum_{t\in[T]\atop u\in[N]}  \sum_{i\geq 1}\ident[c_i^t\geq \thril, a_{u,t}=i] + \sum_{t=1}^T \sum_{u=1}^N \sum_{i\geq 1}\ident[c_i^t< \thril, a_{u,t}=i] \\
		& \geq \good(T) + \sum_{t\in[T]\atop u\in[N]}\sum_{i\in \Fc_T} \ident[ c_{i}^{t}< \thril , a_{u,t}=i]
		+ \sum_{t\in[T]\atop u\in[N]}\sum_{i\in \Gc_T} \ident[ c_{i}^{t}< \thril , a_{u,t}=i]\\
		& \geq \good(T) + f_T \thril + g_T\,.
		\end{align*}
		where we used $\good(T)\leq \sum_{t\in[T]\atop u\in[N]}  \sum_{i\geq 1  }\ident[ c_{i}^{t}\geq \thril , a_{u,t}=i]\,.$
	\end{proof}

The rest of the section contains the proof of Lemma~\ref{l:IL-reg-comb-bd}.
\begin{proof}[Proof of Lemma~\ref{l:IL-reg-comb-bd}]
The lemma consists of three bounds on $\good(T)$. 

\noindent \textbf{Proof of inequality $\Exp{\good(T)}\leq TN -T\thril$}.  This bound is the easiest, so we start with this.

\begin{align*}
\good(T)
&\overset{(a)}\leq 
\sum_{t\in[T]\atop u\in[N]}  \sum_{i\geq 1  }\ident[ c_{i}^{t}\geq \thril , a_{u,t}=i]
\,\,{=} \,\,
\sum_{t\in[T]\atop u\in[N]}  \sum_{i: c_i^T\geq \thril  }\ident[ c_{i}^{t}\geq \thril , a_{u,t}=i]
\\
&\overset{(b)}{\leq} 
\sum_{i\in \Fc_T} \sum_{t\in[T]\atop u\in[N]}  \ident[ a_{u,t}=i] - \ident[ c_{i}^{t}< \thril , a_{u,t}=i]
\,\,\overset{(c)}{\leq}\,\,
f_T (N -\thril)\,.
\end{align*}	
(a) uses the definition of $\good(T)$. For any item $i$, the sequence $c_i^t$ is nondecreasing in $t$. This shows that if for an item $c_i^t>\thril$ at $t\leq T$, then item $i$ satisfies $c_i^T\geq \thril$. (b) is derived by changing the order of summations, the definition of $\Fc_T$ and equality $ \ident[ c_{i}^{t}\geq \thril , a_{u,t}=i]= \ident[ a_{u,t}=i] - \ident[ c_{i}^{t}< \thril , a_{u,t}=i]$.
(c)	Because each item is recommended at most once to each user, each item (and specifically items in $\Fc_T$) are recommended at most $N$ times. This gives for $i\in \Fc_T$, $\sum_{t\in[T]\atop u\in[N]} \ident[ a_{u,t}=i]\leq N$ as the bound for the first term in (c). The second term in (c) is bounded by Equation~\eqref{eq:boundbadinFT}.

	Plugging this into the statement of Claim \ref{cl:IL-upperbdForA1} gives for any values of $f_T$ and $g_T$,
	\begin{equation}\label{eq:IL-bd1}
	\good(T)\leq T(N-\thril)\,.
	\end{equation}
	
	We now prove the other two bounds on $\good(T)$, this time in terms of the number of item types each user has rated by time $T$.

		Let $\Gamma_{u}^T$ be the set of item types that are recommended to user $u$ up to time $T$,
		\begin{equation}\label{e:Gamma}
		\Gamma_u^T=\bigg\{\ti\in[\qi]: \sum_{t\in[T]\atop i:\tau_I(i)=\ti}\ident[a_{u,t}=i]\neq 0\bigg\}\,,\quad \text{and}\quad \gamma_u^T = |\Gamma_u^T|\,.
		\end{equation}


	\begin{claim}\label{cl:IL-upperbdForA2}
		$\good(T)\leq N\big[T-\min_{u}\gamma_u^T \big]$.
	\end{claim}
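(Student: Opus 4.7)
The plan is to bound, for each individual user $u$, how many of the $T$ recommendations made to $u$ can possibly be counted in $\good(T)$, and then sum over users. By the definition of $\good(T)$ in~\eqref{eq:IL-defE}, a time--user pair $(u,t)$ contributes only if $\BBil{u}{a_{u,t}}{t}$ occurs, meaning user $u$ has already been recommended some item of the same type as $a_{u,t}$ at an earlier time $s<t$.

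Fix $u$ and consider the $\gamma_u^T$ distinct item types in $\Gamma_u^T$ (as defined in~\eqref{e:Gamma}) that appear among the recommendations to $u$ over the horizon. For each type $\ti\in\Gamma_u^T$, let $t^\ast(u,\ti)$ be the \emph{first} time $u$ is recommended an item of type $\ti$. At this time, by definition of ``first'', no item of type $\ti$ has been recommended to $u$ before $t^\ast(u,\ti)$, so the event $\BBil{u}{a_{u,t^\ast(u,\ti)}}{t^\ast(u,\ti)}$ fails and this recommendation cannot be counted in $\good(T)$. As $\ti$ ranges over $\Gamma_u^T$, these times $t^\ast(u,\ti)$ are distinct, producing $\gamma_u^T$ separate time indices at which the recommendation to $u$ is not good. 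Since $u$ receives exactly one recommendation at each of the $T$ time steps, at most $T-\gamma_u^T$ of them can be good.

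Summing over $u\in[N]$ then gives
\[
\good(T) \;\leq\; \sum_{u\in[N]}\bigl(T-\gamma_u^T\bigr) \;=\; NT - \sum_{u\in[N]} \gamma_u^T \;\leq\; N\bigl(T - \min_u \gamma_u^T\bigr),
\]
where the last inequality uses $\sum_u \gamma_u^T \geq N \min_u \gamma_u^T$. This is the stated bound. No real obstacle arises: the argument is a short counting step that relies only on unpacking the definition of $\BBil{u}{i}{t}$ and the observation that the very first time a new item type is shown to a user can never witness a prior rating of that type.
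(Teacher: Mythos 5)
Your proof is correct and follows essentially the same route as the paper: both arguments count, for each user $u$, the $\gamma_u^T$ time steps at which an item type is recommended to $u$ for the first time, note that $\BBil{u}{a_{u,t}}{t}$ fails at exactly those steps, and conclude that at most $T-\gamma_u^T$ recommendations per user can contribute to $\good(T)$ before summing over users.
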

	
	\begin{proof}
		For user $u$, the number of times an item type is rated for the first time by this user is equal to the number of item types rated by user $u$. 
		Hence, 
		$\sum_{t\in [T]} \ident[(\BBil{u}{a_{u,t}}{t})^c] = \gamma_u^T$
		and
		$\sum_{t\in [T]} \ident[\BBil{u}{a_{u,t}}{t}] = T - \gamma_u^T$.
		Summing over $u$ and using  $\good(T)\leq \sum_{t\in [T]\atop u\in [N]} \ident[ \BBil{u}{a_{u,t}}{t}]$ proves the claim.
	\end{proof}

\noindent \textbf{Proof of inequality $\Exp{\good(T)}\leq NT-N$}. This follows from Claim~\ref{cl:IL-upperbdForA2} since $\gamma_u^T\geq 1$ for all $u\in [N]$.  \\

\noindent \textbf{Proof of inequality 	$\Exp{\good(T)}\leq  NT -\frac{1}{2}NZ(T)$.} This last inequality is more involved than the others. Let $r_{\ti}^t$ be the number of items with type $j$ that have been recommended (to any user) by time~$t$:
		\begin{equation}\label{eq:defrjT}
		r_{j}^t=\big|\,\{i: \tau_I(i)=j, a_{u,s}=i \text{ for some } u\in[N] \text{ and } s\in [t]\}\, \big|.
		\end{equation}
	By time $T$, $T$ items with types in $\Gamma_u^T$ (defined in \eqref{e:Gamma}) are recommended to user $u$. Hence, $T
	\leq  \sum_{\ti\in\Gamma_{u}^T} r_{\ti}^T \leq \gamma_u^T \max_{\ti} r_{\ti}^T\,.$
	This implies that
	\begin{equation}\label{eq:il-gammau-rmax}
	\min_{u\in[N]}\gamma_u^T \geq \frac{T}{\max_{j}r_{j}^T}\,.
	\end{equation}
	We get a lower bound on $\min_{u}\,\gamma_u^T$ via an upper bound on $\max_{j}r_{j}^T$, which is in turn obtained via martingale concentration bounds for each $r_{j}^T$. This is essentially just a question of bounding the fullest bin in a balls and bins scenario, with the added complication that the time-steps in which balls are thrown is random. Thus the number of balls (i.e., $f_T + g_T$) is random, and the decision to throw a ball at a given time may depend on the configuration of balls in bins.

\begin{claim}\label{cl:rjTproof}
Let $\delta>0$. Define $k_0$, $k_1, k_2$ and the function $\Theta_{\qi}(k)$ as follows:
\begin{align}
\Theta_{\qi}(k) := \begin{cases} 
2\,,
 \,\quad&
 \text{if }\, k< k_0:=\sqrt{\qi}/3\\
  3\, \frac{\log\qi}{\log\qi - \log k}\,,
 \,\quad&
 \text{if }\, k_0\leq k< k_1:=\qi/2\\
8\log \qi\,, 
\,\quad&
\text{if }\, k_1\leq k< k_2:=2\qi \log \qi
\\
{4k}/{\qi}\,, 
\,\quad&
\text{if }\, k_2\leq k\,. \end{cases}
\end{align}
For $r_j^T$ defined in~\eqref{eq:defrjT}, $\Pr\left[\, \max_{\ti\in[\qi]}r_j^T 
\geq 
\Theta_{\qi}(f_T+g_T)\right]
	 \leq 
	 1/2$.
\end{claim}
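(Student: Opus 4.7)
My approach is to reduce the claim to a classical balls-in-bins problem and then handle each of the four regimes by standard tail estimates. Let $k=f_T+g_T$, which is exactly the number of \emph{distinct} items that have been recommended (to some user) by time $T$. The key reduction is: conditional on $\{f_T+g_T=k\}$, the vector $(r_1^T,\dots,r_{\qi}^T)$ has the $\mathrm{Multinomial}(k,(1/\qi,\dots,1/\qi))$ distribution. Once this is established, the claim follows by bounding, for each $k$, the tail probability $\Pr[\max_j r_j^T\ge \Theta_{\qi}(k)\mid f_T+g_T=k]$ by $1/2$ and then integrating over $k$.

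\textbf{Reduction to balls-in-bins via exchangeability.} The item types $\tau_I(1),\tau_I(2),\dots$ are i.i.d.\ uniform on $[\qi]$ and independent of everything else a priori. I would argue that by the exchangeability of items (their labels carry no information since the types are i.i.d.), one may couple the process so that, whenever the algorithm decides to recommend a previously-unseen item, the item picked is the next one in a fresh uniform random permutation of the unused item labels; this coupling does not change the joint distribution of actions and feedback. Under this coupling, the type of the $m$-th introduced item is uniform on $[\qi]$ and independent of the types of the previously introduced $m-1$ items (and of the algorithm's history), since the algorithm's decision of \emph{whether} to introduce a new item at a given time depends only on history, not on the yet-to-be-revealed type. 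Thus, conditional on $f_T+g_T=k$, the occupancy vector $(r_j^T)_{j\in[\qi]}$ is multinomial as claimed.

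\textbf{Regime-by-regime tail bounds.} With the multinomial reduction in hand, I would verify the bound $\Pr[\max_j r_j^T\ge \Theta_{\qi}(k)\mid f_T+g_T=k]\le 1/2$ in each of the four regimes of $k$. For $k<k_0=\sqrt{\qi}/3$, a birthday-style union bound gives $\Pr[\exists j: r_j^T\ge 2]\le \binom{k}{2}/\qi\le k^2/(2\qi)<1/18$. For $k_0\le k<k_1=\qi/2$, a union bound over bins yields $\Pr[\exists j: r_j^T\ge m]\le \qi\binom{k}{m}\qi^{-m}\le \qi(ek/(m\qi))^m$; taking $m=\lceil 3\log\qi/\log(\qi/k)\rceil$ makes the right-hand side at most $1/2$ by direct computation. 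For $k_1\le k<k_2=2\qi\log\qi$, the same union bound with $m=8\log\qi$ works (here $k/\qi$ is bounded, and $(ek/(m\qi))^m$ is very small for $m=\Theta(\log\qi)$). Finally for $k\ge k_2$, each $r_j^T$ is $\mathrm{Binom}(k,1/\qi)$ with mean $\mu=k/\qi\ge 2\log\qi$, and a Chernoff bound (Lemma~\ref{l:Chernoff}) with deviation factor $3$ gives $\Pr[r_j^T\ge 4\mu]\le \exp(-c\mu)$; a union bound over $\qi$ bins leaves $\qi\exp(-c\mu)\le 1/2$ because $\mu\ge 2\log\qi$. Combining via the tower property concludes the proof.

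\textbf{Main obstacle.} The computations in each regime are routine applications of the union bound and Chernoff's inequality. The only delicate point is the exchangeability argument: one must be careful that the algorithm's adaptive decisions — specifically, \emph{when} to introduce a new item versus reuse an old one — can correlate with the history, but the \emph{identity} of the newly introduced item is exchangeable with all other unused items, so its type is unconditionally uniform on $[\qi]$ given the past. Making this coupling/symmetry argument precise is the substantive content; after that, the tail bounds are standard.
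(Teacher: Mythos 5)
Your reduction breaks at the conditioning step, and the break is exactly the difficulty the paper's proof is built around. What your exchangeability/coupling argument legitimately establishes is that the types $X_1,X_2,\dots$ of the successively \emph{introduced} items form an i.i.d.\ uniform sequence: the type of the $m$-th new item is uniform on $[\qi]$ given everything revealed before it is recommended. But $k^*:=f_T+g_T$ is a history-dependent stopping time for this sequence: whether the algorithm introduces a new item at a given step can depend on past feedback $L_{u,i}=\xi_{\tau_U(u),\tau_I(i)}$, which is a function of the already-revealed types. Conditioning an i.i.d.\ sequence on the value of such a stopping time does not preserve the i.i.d.\ structure (compare $\sigma=\inf\{m:X_m=1\}$: given $\sigma=k$, the last ball is forced into bin $1$ and the first $k-1$ avoid it). Concretely, an adversarial algorithm could stop introducing new items precisely when the feedback indicates the types seen so far have some special configuration, so the law of $(r_1^T,\dots,r_{\qi}^T)$ given $\{f_T+g_T=k\}$ need not be $\mathrm{Multinomial}(k,(1/\qi,\dots,1/\qi))$, and your per-$k$ conditional bound of $1/2$ is unjustified. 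Since the claim must hold for \emph{every} algorithm, this cannot be waved away.

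The paper avoids conditioning on $\{k^*=k\}$ altogether: it works with the stopped processes $r_j^{Z_k}$ at the (deterministic-index) stopping times $Z_k$, for which $r_j^{Z_k}\sim\mathrm{Binom}(k,1/\qi)$ \emph{unconditionally} (via a stopped martingale $r_j^{t,u}-\rho^{t,u}/\qi$), and then bounds $\Pr[\exists k\text{ in the regime}: \max_j r_j^{Z_k}\ge\Theta_{\qi}(k)]$ by a union bound over $k$ (together with monotonicity of $k\mapsto r_j^{Z_k}$ in the small-$k$ regimes). This is why the paper needs per-$(k,j)$ tail probabilities on the order of $\qi^{-2}$ or better, summable over the regime, rather than a single bound of $1/2$ per $k$. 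Your regime-by-regime binomial/union-bound computations are essentially the same estimates the paper derives at the times $Z_k$ and would plug into that corrected architecture, but as written the argument that assembles them is invalid.
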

\begin{proof}
First, we define a useful martinagle. 
Let $r^t =(r_1^t,\dots, r_{\qi}^t)$ where $r_{\ti}^t$ is defined in~\eqref{eq:defrjT}. Note that $f_t+g_t=\sum_{\ti} r_{\ti}^t$ is the total number of recommended items at the end of time $t$. Any new item has type uniformly distributed on $[\qi]$; as a consequence, the sequence $r_{j}^t - (f_t+g_t)/\qi$ is a martingale with respect to filtration $\Fc_t = \sigma(r^0,r^1,\dots, r^t)$, because $f_t+ g_t$ is incremented whenever a new item is recommended and  each new item increases $r^t_j$ by one  with probability $1/\qi$.

It turns out to be easier to work with a different martingale that considers recommendations to each user separately, so that the item counts are incremented by at most one at each step. Consider the lexicographical ordering on pairs $(t,u)$, where $(s,v)\leq(t,u)$ if either $s< t$ or $s=t$ and $v\leq u$ (such that the recommendation to user $v$ at time $s$ occurred before that of user $u$ at time $t$). For $j\in [\qi]$, let
$$
r_{j}^{t,u} 
=
\Big|
\big\{i: \tau_I(i)=j, a_{v,s}=i \text{ for some } (s,v)\leq (t,u)\big\}
\Big|\,.
$$
Let $r^{t,u} = (r_1^{t,u} ,\dots, r_{\qi}^{t,u} )$ and define $\rho^{t,u} = \sum_j	r_{j}^{t,u}$ to be the total number of items recommended by $(t,u)$, \textit{e.g.}, $\rho^{T,N}=f_T+g_T$. 
	We now define a sequence of stopping times $Z_k \in \mathbb{N}\times [N]$, 
	$$
	Z_k = \min\big\{(t,u)> Z_{k-1}: \rho^{t,u} > \rho^{t,u-1}\big\}\,,
	$$
	where  $(t,0)$ is interpreted as $(t-1,N)$ and $Z_0=(0,N)$. $Z_k$ is the first $(t,u)$ such that a new item is recommended by the algorithm for the $k$-th time, so $\rho^{Z_k}=k$. The $Z_k$ are stopping times with respect to $(\rho^{t,u})$, and observe that $k^* = \max\{k: Z_k \leq (T,N)\} = \rho^{T,N} = f_T + g_T$ since $f_T+g_T$ is the total number of items recommended by the algorithm by the end of time $T$. Also, $\rho^{Z_{k^*}}=f_T+g_T$ and $r_j^{Z_{k^*}}=r_{\ti}^{(T,N)}$ for all $\ti\in[\qi]$.
	
Fix item type $j\in[\qi]$. The sequence $M^{t,u}_j=r_j^{t,u} - \rho^{t,u}/\qi$ is a martingale with respect to the filtration $\Fc^{t,u}=\sigma(r^{1,1},\dots r^{t,u})$ \protect\daggerfootnote{To see that, define the event $\mathcal{E}^{\mathrm{new}}_{u,t}=\{\text{the item } a_{u,t} \text{ has not been recommended before to anybody}\}$ where the order is based on the lexicographic order we define in the proof of Claim~\ref{cl:rjTproof}. Then,
\begin{align*}
\Ex\Big[r_j^{t,u} - \tfrac{\rho^{t,u}}\qi \, \big|\, \Fc^{t,u-1}\Big]
&-
\Big[r_j^{t,u-1} - \tfrac{\rho^{t,u-1}}\qi \, \Big]
\overset{(a)}{=} 
\Ex\Big[
\ident[\tau_I(a_{u,t})=j] - \frac 1\qi 
\, \big|\,
\Fc^{t,u-1},\, \mathcal{E}^{\mathrm{new}}_{u,t}\Big] \,\,
\Pr\Big[\mathcal{E}^{\mathrm{new}}_{u,t}
 \, \big|\,
\Fc^{t,u-1}\Big]\overset{(b)}{=}0\,,
\end{align*} 
(a) uses the fact that
 condition on event $\big(\mathcal{E}^{\mathrm{new}}_{u,t}\big)^c$,  we have $\rho^{t,u}=\rho^{t,u-1}$ and $r_j^{t,u}=r_j^{t,u-1}$.  
Equality (b) uses the assumption in the model which states that the prior distribution of type of  an item which has not been recommended before is uniform over $[\qi]$. Hence, $\Pr\Big[
\tau_I(a_{u,t})=j
\, \Big|\,
\Fc^{t,u-1}, \mathcal{E}^{\mathrm{new}}_{u,t}\Big] = 1/\qi$. }.  
It follows that $\widetilde M^{k}_j := M^{(T,N)\wedge Z_k}_j$ is martingale as well, this time with respect to $\widetilde\Fc^k:=\Fc^{(T,N)\wedge Z_k}$. Since $Z_{k^*}\leq (T,N)$, we have $\widetilde M^{k^*}_j=M^{Z_{k^*}}_j$. We will use this notation to prove statement of the claim in three different regimes.
First, we would like to apply martingale concentration (Lemma~\ref{l:martingaleBound}) to $\widetilde M^{k}_j$, and to this end observe that $\mathrm{Var}(\widetilde M^{k}_j|\widetilde\Fc^{k-1}) \leq 1/\qi$ and $|\widetilde M^{k}_j - \widetilde M^{k-1}|\leq 1$ almost surely. 	 
	
\paragraph{Step~1}	
	For any $k\geq k_2:= 2\qi \log \qi$, Lemma~\ref{l:martingaleBound} gives
	\begin{align*}
\Pr\Big[\widetilde M_{j}^{k} \geq  \frac{3k}{\qi}\Big]
 \leq 
 \exp\bigg(\frac{-9k^2/\qi^2}{2(k/\qi + k/\qi)}\bigg) 
 =
  \exp\Big(-\frac{2k}{\qi}\Big)\,.
\end{align*}

This gives
\begin{align}
\Pr\Big[\max_{j\in[\qi]}r_j^{Z_{k^*}} &\geq \Theta_{\qi}(k^*), k^*\geq k_2\Big]
\leq 
 \Pr\Big[\exists k\geq k_2 
 \text{ s.t. } 
 \max_{j\in[\qi]}r_j^{Z_{k}} \geq \tfrac{4k}{\qi}\Big]
 \nonumber
\\ 
&
 \overset{(a)}{\leq } 
\Pr\Big[ 
\exists k\geq k_2
\text{ s.t. }
\widetilde M_j^{k} 
\geq  \tfrac{3k}{\qi}\Big] 
 \overset{(b)}\leq  
\sum_{k\geq k_2} \exp\big(-\tfrac{2 k}{\qi}\big) 
= 
\frac{\exp\big(-\tfrac{2 k_2}{\qi}\big)}{1-\exp(-2/\qi)}
  \overset{(c)} \leq 
  \frac{1}{\qi^2}\,.
\label{eq:IL-mart-k-large}
\end{align}
where (a) uses $\rho^{Z_k}=k$. (b) uses a union bound and the inequality in the last display.  (c)  uses definition of $k_2$ and ${1-\exp(-2/\qi)}> \qi^{-2}$ (which is derived using $e^{-a}\leq 1-a+a^2/2$ and $\qi>1$). 
\paragraph{Step~2} For any $k_2> k$ we get 
\begin{align*}
	\Pr\Big[\widetilde M_j^{k} \geq  6\log \qi\Big]
	 \leq
	 \exp\bigg(\frac{-36\log^2\qi}{2(k/\qi +  2\log \qi)}\bigg) 
	 \leq
	\exp\bigg(\frac{-36\log\qi}{2k_2/(\qi\log\qi) +  4}\bigg) 
	\leq 
	\frac{1}{\qi^4}\,.
\end{align*}

This gives
\begin{align}
\Pr\Big[\max_{j\in[\qi]}r_j^{Z_{k^*}} &\geq \Theta_{\qi}(k^*), k^*<k_2\Big]
 \leq
 \Pr\Big[\exists k<k_2 \text{ s.t. } \max_{j\in[\qi]}r_j^{Z_{k}} \geq 8 \log \qi\Big]
 \nonumber
\\& 
\overset{(a)}{\leq} \, 
\Pr\Big[ 
\exists k< k_2\text{ s.t. }\widetilde M_j^{k} 
\geq  6\log \qi\Big] 
\overset{(b)}\leq 
\sum_{k< k_2} \frac{1}{\qi^4}
\leq 
\frac{k_2}{\qi^4}
\leq 
\frac{1}{\qi^2}\,.
\label{eq:IL-mart-k-medium}
\end{align}
(a) uses $\rho^{Z_k}=k$. (b) uses the inequality in the above display. 

\paragraph{Step~3}
This step, $ k< k_1 := \qi/2 $, corresponds to bounding the number of balls in the fullest bin when the number of balls, $k$, is sublinear in the number of bins, $\qi$ (since $k=\qi^{1-3\delta}$ with $\delta>\frac{1}{8  \log \qi}$).  We will show that in this regime, the number of balls in the fullest bin is bounded by $1/\delta$. 
For given  $ k< k_1$, define $\delta=\frac{1}{3}\,\frac{\log\qi - \log k}{\log\qi }$ (such that $k=\qi^{1-3\delta}$). Then,
\begin{align}
\Pr\Big[  \max_{j\in[\qi]} r^{Z_{k}}_j \geq  1/\delta\Big]
 \overset{(a)}\leq
\qi\Pr\Big[ r^{Z_{k}}_1 \geq  1/\delta\Big]
\overset{(b)}\leq
\qi {{k}\choose{1/\delta}} \frac{1}{\qi^{1/\delta}}
 \overset{(c)}\leq \frac{5}{\qi ^2} 
 \label{eq:IL-mart-k-K_1_1}
\end{align}
(a) is a union bound over $\ti\in[\qi]$. (b) uses the fact that $r_1^{Z_{k+1}}=r_1^{Z_{k}}+1$ with probability $1/\qi$ and $r_1^{Z_{k+1}}=r_1^{Z_{k}}$ with probability $1-1/\qi$ independently of $r_1^{Z_{k}}$.  (c) holds for every $\delta>\frac{1}{8\log \qi}$ ( which is due to $k<k_1$) using ${k \choose 1/\delta}\leq (k e \delta)^{1/\delta}$.

\begin{align}
	\Pr\Big[\max_{j\in[\qi]} r^{T,N}_j \geq  & 3\frac{\log\qi }{\log\qi - \log k^*} , \, k^*< k_1\Big] 
	\overset{(a)}=
	\Pr\Big[\max_{j\in[\qi]} r^{Z_{k^*}}_j \geq  3\frac{\log\qi }{\log\qi - \log k^*} ,\, k^*< k_1\Big]
	\nonumber\\
&\leq
	\Pr\Big[ \exists k< k_1 \text{ s.t. }
	 \max_{j\in[\qi]} r^{Z_{k}}_j \geq  3\frac{\log\qi }{\log\qi - \log k} \Big]
		\overset{(b)}\leq
k_1 \frac{5}{\qi^2} \leq \frac{5}{\qi} 
 \label{eq:IL-mart-k-K_1}
\end{align}
(a) uses $ r^{Z_{k^*}}_j  =   r^{(T,N)}_j $.  (b) uses a union bound and~\eqref{eq:IL-mart-k-K_1_1}. Last inequality uses $k_1<\qi$.

\paragraph{Step~4}
This step uses a variation of the Birthday Paradox to bound $\max_{j\in[\qi]} r^{T,N}_j$.
\begin{align}
	\Pr\Big[\max_{j\in[\qi]} r^{T,N}_j \geq  2,\, k^*< k_0\Big] 
	&=
	\Pr\Big[\max_{j\in[\qi]} r^{Z_{k^*}}_j \geq  2,\, k^*< k_0\Big]
\leq
	\Pr\Big[ \exists k< k_0 \text{ s.t. } \max_{j\in[\qi]} r^{Z_{k}}_j \geq  2\Big]
\nonumber\\
&\overset{(a)}\leq
\Pr\Big[  \max_{j\in[\qi]} r^{Z_{k_0}}_j \geq  2\Big]
=
1- 	\Pr\Big[ r^{Z_{k_0}}_j \leq 1 \text{ for all }{j\in[\qi]} \Big]
\nonumber\\
&\overset{(b)}=
1- 	\Pr\Big[ r^{Z_{k}}_j \leq 1 \text{ for all }{j\in[\qi]} \text{ and }  k\leq k_0\Big]
\nonumber\\
& = 
1- 	
\prod_{m=1}^{k_0}\Pr\Big[ r^{Z_{m}}_j \leq 1 \text{ for all }{j\in[\qi]} \,\big|\, r^{Z_{m-1}}_j \leq 1 \text{ for all }{j\in[\qi]} \Big]
\nonumber\\
&\overset{(c)}{=}
1- \prod_{m=1}^{k_0}
\left(1-\frac{m-1}{\qi}\right)
	 \leq
	  1- \big(1-\frac{k_0}{\qi}\big)^{k_0}
	  \overset{(d)}{\leq}
	\frac{2 k_0^2}{\qi}
	\leq 
	\frac{2}{9}
	\,.\label{eq:IL-mart-k-small}
\end{align}
(a) and (b) use the fact that $r^{Z_{k}}_j$ is a nondecreasing function of $k$. 
We define $r_j^{Z_0}=0$. The type of the $(k+1)$-th drawn item is independent of the type of the previous $k$ drawn items. Hence, conditional on $r^{Z_k}=\big(r_1^{Z_k}, \cdots,r_{\qi}^{Z_k}\big)$, the random variable $r^{Z_{k+1}}$ is independent of  $r^{Z_{k-1}}$. This gives equality (c). 
(d) uses $\exp\big(k\log(1-k/\qi)\big)\geq \exp\big(-k^2/(\qi-k)\big)\geq \exp\big(-2k^2/\qi\big) \geq 1-2k^2/\qi$ for $k\leq k_0\leq \sqrt{\qi}\leq \qi/2$. 

We put it all together,
\begin{align*}
\Pr\Big[ \max_{j\in[\qi]} r_j^T& \geq \Theta_{\qi}(f_T+g_T)\Big]
 \overset{(a)}{=}
\Pr\Big[\max_{j\in[\qi]}r_j^{Z_{k^*}} \geq \Theta_{\qi}(k^*)\Big]\\
 &
 \overset{}{\leq}
\Pr\Big[\max_{j\in[\qi]}r_j^{Z_{k^*}} \geq \Theta_{\qi}(k^*), k^*\geq k_2\Big]
 + 
 \Pr\Big[\max_{j\in[\qi]}r_j^{Z_{k^*}} \geq \Theta_{\qi}(k^*),\,k_0\leq k^*<k_2\Big]
 \\
& \quad
 + 
 \Pr\Big[\max_{j\in[\qi]}r_j^{Z_{k^*}} \geq \Theta_{\qi}(k^*),\,\leq k^*<k_1\Big]
  + 
 \Pr\Big[\max_{j\in[\qi]}r_j^{Z_{k^*}} \geq \Theta_{\qi}(k^*),\,k_0\leq k^*<k_0\Big]
\\
& \overset{(b)}\leq
\frac{2}{\qi^2}+ \frac{5}{\qi}+ \frac{2}{9}\leq \frac{1}{2}
\,.
	\end{align*}
(a) uses the definition of $Z_k$ and $k^*$. 
(b) 
uses~\eqref{eq:IL-mart-k-large},~\eqref{eq:IL-mart-k-medium},~\eqref{eq:IL-mart-k-K_1} and ~\eqref{eq:IL-mart-k-small}. Last inequality uses $\qi>10$. 
\end{proof}	
	
 Plugging $\min_u \gamma_u^T\geq T/\max_{\ti\in[\qi]}r_j^T$, given in Equation~\eqref{eq:il-gammau-rmax} in Claim~\ref{cl:rjTproof} proves
\[
\Pr\left[
\min_{u\in[N]}\gamma_u^T \leq \frac{T}{\Theta_{\qi}(f_T+g_T)} \right] 
	\leq 
\Pr\left[ 
\max_{\ti\in[\qi]}r_j^T \geq \Theta_{\qi}(f_T+g_T)
\right]
\leq 
\frac{1}{2}
\,.\]
This statement and Claim \ref{cl:IL-upperbdForA2} imply that with probability at least $1/2$ we have 
$$\good(T)\leq NT- N\frac{T}{\Theta_{\qi}(f_T+g_T)}\,.$$ 
 Combining this bound and 
	Claim~\ref{cl:IL-upperbdForA1} gives that with probability at least 1/2,
	\begin{align}
\good(T)
&{\leq}
NT-\max\bigg\{f_T \thril + g_T, N\frac{T}{\Theta_{\qi}(f_T+g_T)} \bigg \}\,.
	\label{e:goodBound}
	\end{align} 

Using the definition of the function $\Theta_{\qi}(k)$  in Claim~\ref{cl:rjTproof} and some algebra, one can show that for  any $k>0,$	  
		\begin{align}
		\max\bigg\{k, &\frac{NT}{\Theta_{\qi}(k)} \bigg \}
{\geq} 
\begin{cases}
\frac{NT}{2}, 
\quad &\text{if }  
T< \frac{2\sqrt{\qi}}{3N}
\\[8pt]
\frac{NT}{5\log \qi}\big[\log(8\qi\log\qi) - \log(NT)\big], 
\quad &\text{if }  
 \frac{2\sqrt{\qi}}{3N}\leq T< \frac{4\qi \log \qi}{N}
\\[8pt]
\frac{NT}{8\log \qi},
\quad &\text{if } 
\frac{4\qi \log \qi}{N}\leq T < \frac{16\qi(\log \qi)^2}{N}
\\[8pt]
\frac{{1}}{2}\sqrt{NT\qi},  
\quad &\text{if } 
\frac{16\qi(\log \qi)^2}{N}\leq T\,,
\end{cases}							
\nonumber
	\end{align}  
	or alternatively, $\max\big\{k, \frac{NT}{\Theta_{\qi}(k)} \big \}\geq N Z(T)$   
where  $Z(T) $ is defined in~\eqref{eq:IL-defineZT}. 	To prove this, we show that if $k<Z(T)$, then $\Theta_{\qi}(k)\leq \frac{NT}{Z(T)}$ for each regime of parameter $T$. Note that the above bound is not tight, but it is chosen such that this lower bound (and consequently the function $Z(T)$ which is a scaling of the above lower bound as defined in~\eqref{eq:IL-defineZT}) is continuous up to a multiplicative constant  factor.

Equation~\eqref{eq:IL-defE} shows that $\good(T)\leq NT$ with probability one.  
	Hence, 
	\[\Exp{\good(T)}\leq  \frac12 NT + \frac12(\textrm{RHS of~\eqref{e:goodBound}}) \leq NT 
	-
	\frac{1}{2}N Z(T) 
	\,. \]
This completes the proof of the lemma.
\end{proof}

\section{Discussion}
In this paper, we analyzed the performance of online collaborative filtering within a latent variable model for the preferences of users for items. We proposed variants of user-user CF and item-item CF that explicitly explore the preference space. 
 We also proved lower bounds for regret in the extreme regimes of parameters corresponding to user-structure only (no structure in item space) and item-structure only (no structure in user space). The lower bounds showed that the proposed algorithms are almost information-theoretically optimal in these parameter regimes. 

Adaptivity to unknown time time horizon $T$, as required to bound the anytime regret, is achieved via a doubling trick whereby the algorithm is run afresh at a growing sequence of epochs. In practice one would surely benefit from using knowledge gained from exploration in earlier epochs instead of starting from scratch at each epoch. We mentioned how the user-user algorithm can be modified to be adaptive to the number $q_U$ of user types, but it is less obvious how to make the item-item algorithm adaptive without resorting to an impractical trick analogous to the doubling trick used for $T$.

It is possible to modify all of the proposed algorithms to handle i.i.d. noise in the user feedback. We did this only for the user-user algorithm, but it is straightforward to do so also for the item-item algorithm. 
A hybrid algorithm, exploiting structure in both user space and item space, that is nearly information-theoretically optimal in all regimes appears in a forthcoming paper. 

While various insights were obtained through the analysis carried out in the paper, the assumed randomly generated user preference matrix is unrealistic. A reasonable next objective is to perform a similar analysis with a more flexible model for user preferences, perhaps described by a low-rank matrix or a graphical model.

\section*{Acknowledgment}
We are indebted to Devavrat Shah, George Chen, and Luis Voloch for many inspiring discussions and thank Alexander Rakhlin for suggesting several references.
This work was supported in part by grants NSF CCF-1565516, ONR N00014-17-1-2147, and DARPA W911NF-16-1-0551. 

\appendix

\section{Concentration Lemmas} \label{s:lemmas}
The following lemma is derived by application of Chernoff bound to Binomial variables \cite{chung2006concentration}.
\begin{lemma}[Chernoff bound]\label{l:Chernoff}
Let $X_1,\cdots,X_n \in [0,1]$ be independent random variables. Let $X=\sum_{i=1}^n X_i$ and $\bar{X}=\sum_{i=1}^n \Ex X_i$. Then, for any $\epsilon>0$,
\begin{align*}\Pr\big[X\geq (1+\epsilon)\bar{X}\big]
&\leq \exp\Big( - \frac{\epsilon^2}{2+\epsilon}\bar{X}\Big)\leq \max\Big\{\exp\Big( - \frac{\epsilon^2}{3}\bar{X}\Big), \exp\Big( - \frac{\epsilon}{2}\bar{X}\Big)\Big\}
\\
\Pr\big[X\leq (1-\epsilon)\bar{X}\big]
&\leq \exp\Big( - \frac{\epsilon^2}{2}\bar{X}\Big)
\\
\Pr\big[|X-\bar{X}|\geq \epsilon\bar{X}\big]
&\leq 2 \max\Big\{\exp\Big( - \frac{\epsilon^2}{3}\bar{X}\Big), \exp\Big( - \frac{\epsilon}{2}\bar{X}\Big)\Big\}
\end{align*}
\end{lemma}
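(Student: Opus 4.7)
The plan is to use the standard Chernoff-method (exponential Markov) argument. For any $t>0$, observe that $\Pr[X \geq a] = \Pr[e^{tX} \geq e^{ta}] \leq e^{-ta}\,\Ex[e^{tX}]$, and by independence $\Ex[e^{tX}] = \prod_i \Ex[e^{tX_i}]$. Since each $X_i \in [0,1]$ and $x \mapsto e^{tx}$ is convex, the chord bound gives $e^{tX_i} \leq 1 + (e^t-1)X_i$, hence $\Ex[e^{tX_i}] \leq 1 + (e^t-1)\Ex X_i \leq \exp\!\bigl((e^t-1)\Ex X_i\bigr)$. Multiplying over $i$ yields the MGF bound $\Ex[e^{tX}] \leq \exp\!\bigl((e^t-1)\bar X\bigr)$.

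For the upper tail, substitute $a = (1+\epsilon)\bar X$ and optimize over $t>0$. The optimal choice is $t = \ln(1+\epsilon)$, producing the classical bound $\Pr[X \geq (1+\epsilon)\bar X] \leq \exp\!\bigl(\bar X[\epsilon - (1+\epsilon)\ln(1+\epsilon)]\bigr)$. To reach the form $\exp\!\bigl(-\epsilon^2\bar X/(2+\epsilon)\bigr)$ stated in the lemma, I would invoke the elementary calculus inequality $(1+\epsilon)\ln(1+\epsilon) - \epsilon \geq \epsilon^2/(2+\epsilon)$ for $\epsilon > 0$ (proved by checking both sides vanish at $\epsilon=0$ and comparing derivatives, which reduces to $\ln(1+\epsilon) \geq 2\epsilon/(2+\epsilon)$, a standard estimate). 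The two weaker forms $\exp(-\epsilon^2\bar X/3)$ and $\exp(-\epsilon\bar X/2)$ then follow by splitting into the cases $\epsilon \leq 1$ and $\epsilon > 1$: for $\epsilon\leq 1$, $\epsilon^2/(2+\epsilon) \geq \epsilon^2/3$; for $\epsilon>1$, $\epsilon^2/(2+\epsilon) \geq \epsilon/2$ (since $2\epsilon \geq 2+\epsilon$).

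For the lower tail, the analogous argument applies to $-X$ with $t>0$ (equivalently using the MGF bound at $-t$): $\Pr[X \leq (1-\epsilon)\bar X] \leq \exp\!\bigl(-t(1-\epsilon)\bar X + (e^{-t}-1)\bar X \cdot(-1)\bigr)$. Wait—more carefully, I would write $\Pr[X \leq (1-\epsilon)\bar X] = \Pr[e^{-tX} \geq e^{-t(1-\epsilon)\bar X}]$ and bound $\Ex[e^{-tX_i}] \leq \exp\!\bigl((e^{-t}-1)\Ex X_i\bigr)$ by the same chord argument. Optimizing with $t = -\ln(1-\epsilon)$ gives $\exp\!\bigl(\bar X[-\epsilon - (1-\epsilon)\ln(1-\epsilon)]\bigr)$, and the inequality $(1-\epsilon)\ln(1-\epsilon) + \epsilon \geq \epsilon^2/2$ for $\epsilon \in (0,1)$ yields $\exp(-\epsilon^2 \bar X/2)$. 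Finally, the two-sided bound follows by union bounding the upper and lower tail bounds and noting that $\epsilon^2/2 \geq \epsilon^2/(2+\epsilon)$, so both can be dominated by the upper-tail form.

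The only nontrivial step is verifying the elementary inequalities $(1+\epsilon)\ln(1+\epsilon)-\epsilon \geq \epsilon^2/(2+\epsilon)$ and $(1-\epsilon)\ln(1-\epsilon)+\epsilon \geq \epsilon^2/2$; these are standard and can be handled by differentiation. Everything else is a direct instantiation of Chernoff's method, so there is no real obstacle—the proof is essentially a pointer to a textbook reference such as the cited \cite{chung2006concentration}.
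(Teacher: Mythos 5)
The paper gives no proof of this lemma at all---it is stated with a one-line pointer to the cited survey of Chung and Lu---so your standard Chernoff-method derivation (exponential Markov, the chord/convexity bound $\Ex[e^{tX_i}]\leq \exp((e^t-1)\Ex X_i)$, optimization at $t=\ln(1+\epsilon)$, and the elementary inequalities $(1+\epsilon)\ln(1+\epsilon)-\epsilon\geq \epsilon^2/(2+\epsilon)$ and $(1-\epsilon)\ln(1-\epsilon)+\epsilon\geq\epsilon^2/2$) is exactly the intended route, and those parts of your argument are correct.

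There is, however, one step that fails as written: the passage from $\exp(-\epsilon^2\bar{X}/(2+\epsilon))$ to $\max\{\exp(-\epsilon^2\bar{X}/3),\exp(-\epsilon\bar{X}/2)\}$, which requires $\epsilon^2/(2+\epsilon)\geq\min\{\epsilon^2/3,\epsilon/2\}$. Your case split at $\epsilon=1$ does not deliver this: the first branch needs $2+\epsilon\leq 3$, i.e.\ $\epsilon\leq 1$, while the second branch's justification $2\epsilon\geq 2+\epsilon$ is equivalent to $\epsilon\geq 2$, not $\epsilon>1$. At $\epsilon=3/2$ one has $\epsilon^2/(2+\epsilon)=9/14<3/4=\min\{\epsilon^2/3,\epsilon/2\}$, so the claimed implication is genuinely false on $(1,2)$ (indeed the middle inequality in the lemma's displayed chain is itself false there; only the two outer bounds on the probability are correct). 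The fix is to obtain the max form not from the $(2+\epsilon)$ bound but directly from the optimal exponent $\varphi(\epsilon)=(1+\epsilon)\ln(1+\epsilon)-\epsilon$ (equivalently from the sharper Bernstein form $\varphi(\epsilon)\geq\epsilon^2/(2+2\epsilon/3)$), splitting at $\epsilon=3/2$: for $\epsilon\leq 3/2$ one checks $\varphi(\epsilon)\geq\epsilon^2/3$, and for $\epsilon\geq 3/2$ one checks $\varphi(\epsilon)\geq\epsilon/2$; both are elementary calculus facts. With that repair the union-bound step for the two-sided inequality goes through unchanged.
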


\begin{lemma}[McDiarmid~\cite{mcdiarmid1998concentration}]\label{l:martingaleBound}
	Let $X_1,\cdots,X_n$ be a martingale adapted to filtration $(\Fc_n)$ satisfying 
	\begin{enumerate}
		\item[(i)] $\mathrm{Var}(X_i|\Fc_{i-1})\leq \sigma_i^2$,  for $1\leq i\leq n$, and
				\item[(ii)]	$|X_i - X_{i-1}|\leq M$, for $1\leq i\leq n$.
\end{enumerate}
Let $X = \sum_{i=1}^n X_i$.
Then
$$
\Pr\big[X - \Ex X \geq r\big] \leq \exp\bigg( -\frac{r^2}{2\big(\sum_{i=1}^n \sigma_i^2+Mr/3\big)}\bigg)\,.
$$
\end{lemma}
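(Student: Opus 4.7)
The plan is to prove this by the standard exponential‐martingale (Bernstein/Freedman) argument, which combines the Chernoff bound with an iterated conditional moment generating function estimate. I will interpret the hypotheses in the usual way: the $X_i$ (or their increments $X_i - X_{i-1}$) are centered martingale differences, so that $\Ex[X_i\mid\Fc_{i-1}]=0$, and we may assume $\Ex X = 0$ by centering.

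First I will establish a one-step exponential moment bound: if $Y$ satisfies $\Ex Y=0$, $|Y|\leq M$, and $\Ex Y^2\leq\sigma^2$, then for every $\lambda>0$,
\[
\Ex[e^{\lambda Y}]\;\leq\;\exp\!\bigl(\sigma^2\,\phi(\lambda M)/M^2\bigr),
\qquad\text{where }\phi(x)=e^x-1-x.
\]
This follows by expanding $e^{\lambda Y}=1+\lambda Y+\sum_{k\geq 2}(\lambda Y)^k/k!$, taking expectations, bounding $|Y|^k\leq M^{k-2}Y^2$, and using $1+u\leq e^u$.

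Next I will apply this conditionally to $X_i$ given $\Fc_{i-1}$ and iterate. Writing $V=\sum_{i=1}^n\sigma_i^2$, the tower property gives
\[
\Ex[e^{\lambda X}]
\;=\;\Ex\!\bigl[e^{\lambda\sum_{i<n}X_i}\,\Ex[e^{\lambda X_n}\mid\Fc_{n-1}]\bigr]
\;\leq\; e^{\sigma_n^2\phi(\lambda M)/M^2}\;\Ex[e^{\lambda\sum_{i<n}X_i}],
\]
and unwinding the recursion yields $\Ex[e^{\lambda X}]\leq \exp\!\bigl(V\,\phi(\lambda M)/M^2\bigr)$. Markov's inequality then produces the Chernoff-type bound
\[
\Pr[X\geq r]\;\leq\;\exp\!\bigl(-\lambda r + V\phi(\lambda M)/M^2\bigr),
\qquad \lambda>0.
\]

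Finally I will convert this to Bernstein form by the elementary inequality $\phi(x)\leq x^2/\bigl(2(1-x/3)\bigr)$ for $0\leq x<3$, which itself follows from $\phi(x)=\sum_{k\geq 2}x^k/k!$ together with $k!\geq 2\cdot 3^{k-2}$. Substituting and choosing $\lambda = r/(V + Mr/3)$ (which satisfies $\lambda M<3$) and simplifying gives
\[
\Pr[X\geq r]\;\leq\;\exp\!\Bigl(-\tfrac{r^2}{2(V+Mr/3)}\Bigr),
\]
as required.

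The only mildly delicate step is the last one: the Bernstein-style choice of $\lambda$ and the calculus inequality for $\phi$. Everything else is a routine induction using the tower property of conditional expectation. Since these are all standard maneuvers, the main "obstacle" is really just being careful about the interpretation of the statement (martingale differences vs.\ martingale) and keeping track of constants in the optimization over $\lambda$; no deeper idea is needed.
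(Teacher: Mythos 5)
Your proof is correct: the one-step moment generating function bound, the tower-property iteration, the Chernoff step, and the Bernstein-type bound $\phi(x)\leq x^2/(2(1-x/3))$ with the choice $\lambda=r/(V+Mr/3)$ all check out and combine to give exactly the stated exponent. The paper offers no proof to compare against — this lemma is quoted verbatim from McDiarmid's survey as a known result — so your argument (the standard Freedman/Bernstein exponential-martingale derivation) is precisely what the citation points to, and your reading of the slightly garbled hypotheses (treating the $X_i$ as centered martingale differences so that $X=\sum_i X_i$ makes sense) is the right one and matches how the lemma is actually invoked in the paper, namely with conditional variance of increments bounded by $1/\qu$ and increments bounded by $1$.
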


\begin{lemma}[Balls and bins: tail bound for number of nonempty bins]
\label{l:ballsbins}
Suppose $m\leq n/4$. If $m$ balls are placed into $n$ bins each independently and uniformly at random, then with probability at least $1-\exp(-m/2)$ at least $m/2$ bins are nonempty.
\end{lemma}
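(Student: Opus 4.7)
The plan is to write $Y$, the number of nonempty bins, as a telescoping sum of indicators, stochastically dominate it from below by a sum of iid Bernoullis, and then apply a Chernoff bound.

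First I would write $Y = \sum_{i=1}^m N_i$, where $N_i$ is the indicator that ball $i$ lands in a bin not previously hit by balls $1,\ldots,i-1$; this identity holds because each nonempty bin contributes $1$ to $Y$ exactly at the arrival of its first ball. Conditional on the placements of the first $i-1$ balls, the running count $Y_{i-1} := \sum_{j<i} N_j$ satisfies $Y_{i-1} \le i-1 \le m-1$, and since $m \le n/4$ we obtain $\Pr[N_i=1 \mid N_1,\ldots,N_{i-1}] = (n - Y_{i-1})/n \ge 3/4$.

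Next, I would invoke the standard stochastic-domination construction to couple $N_1,\ldots,N_m$ with an iid sequence $B_1,\ldots,B_m \sim \mathrm{Bern}(3/4)$ on a common probability space so that $N_i \ge B_i$ almost surely for all $i$; this yields $Y \ge S := \sum_{i=1}^m B_i$. Applying the lower-tail Chernoff bound of Lemma~\ref{l:Chernoff} to $S$ (which has mean $3m/4$) with deviation parameter $\epsilon = 1/3$ brings the threshold down to $(1-\epsilon)\cdot 3m/4 = m/2$ and gives $\Pr[S < m/2] \le \exp(-cm)$ for an absolute constant $c$, hence $\Pr[Y < m/2] \le \exp(-cm)$.

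The substance of the argument is really just the coupling; the Chernoff step is routine. The only subtle point is matching the precise constant $1/2$ in the exponent: the vanilla multiplicative Chernoff with $\epsilon=1/3$ gives a smaller constant, so to recover exactly $\exp(-m/2)$ one would sharpen the coupling using the tighter per-step bound $\Pr[N_i=1 \mid N_1,\ldots,N_{i-1}] \ge (n-i+1)/n$ (which in particular means that $Y$ dominates a sum of independent Bernoullis with mean at least $7m/8$ when $m\le n/4$), and then apply an additive Hoeffding-type tail bound; this tightens the exponent while keeping the overall structure of the proof unchanged.
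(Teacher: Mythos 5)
Your argument takes a genuinely different route from the paper: you track the occupancy process ball-by-ball and dominate the count of nonempty bins from below by a sum of independent Bernoullis, whereas the paper simply union-bounds over the $\binom{n}{m/2}$ candidate sets of $m/2$ bins that could contain all the balls, getting $\binom{n}{m/2}\left(\frac{m/2}{n}\right)^m\leq\left(\frac{me}{2n}\right)^{m/2}\leq (e/8)^{m/2}\leq e^{-m/2}$. Your decomposition $Y=\sum_i N_i$, the conditional bound $\Pr[N_i=1\mid \mathcal{F}_{i-1}]\geq (n-i+1)/n\geq 3/4$, and the domination coupling are all correct, and they do yield $\Pr[Y<m/2]\leq e^{-cm}$ for some absolute $c>0$.

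The gap is that neither your main argument nor your proposed sharpening actually reaches the stated constant $1/2$ in the exponent, and this route appears unable to. With the iid $\mathrm{Bern}(3/4)$ domination, the paper's Chernoff lemma with $\epsilon=1/3$ and mean $3m/4$ gives only $\exp(-m/24)$. Your fix -- dominating by independent Bernoullis with total mean at least $7m/8$ and applying additive Hoeffding with deviation $3m/8$ -- gives $\exp(-2(3/8)^2 m)=\exp(-9m/32)$, and $9/32<1/2$. Even the optimal (relative-entropy) Chernoff bound for $\Pr[\mathrm{Binom}(m,7/8)\leq m/2]$ has exponent $D(\tfrac12\,\|\,\tfrac78)=\tfrac12\ln\tfrac{16}{7}\approx 0.41<\tfrac12$, and optimizing the exponential moment of the exact dominating sum $\sum_i\mathrm{Bern}((n-i+1)/n)$ at $n=4m$ still tops out below $0.5$. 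So the claim that this ``tightens the exponent'' to recover exactly $\exp(-m/2)$ is not substantiated. The constant matters here: the lemma is invoked with $\thr=\lceil 2\log(\qu^2/\epsilon)\rceil$ to conclude $\exp(-\thr/2)\leq \epsilon/\qu^2$, which fails if the exponent constant drops to $9/32$. To get the stated bound you would need to switch to a count-the-configurations argument like the paper's (or otherwise exploit the negative correlation of the occupancy variables more aggressively), not just a sharper concentration inequality on the dominating independent sum.
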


\begin{proof}

Any configuration with at most $m/2$ nonempty bins has at least $n-m/2$ empty bins. Thus we may bound the probability of having \emph{some} set of $n-m/2$ bins be empty. There are ${n\choose n-m/2}={n\choose m/2}$ possible choices for these empty bins, and each ball has to land outside of these, which has probability $[(m/2)/n]^m$. Thus, using union bound, the probability of at most $m/2$ nonempty bins is bounded by
 $${n\choose m/2}\left(\frac{m/2}{n}\right)^m\leq \left(\frac{n\cdot e}{m/2}\right)^{m/2} \left(\frac{m/2}{n}\right)^m \leq \left(\frac{m e}{2n}\right)^{m/2}\leq \exp(-m/2)\,,$$
where we used $m\leq n/4$.
\end{proof}

The following generalization of the above lemma is used in the analysis of the recommendation system in noisy setup.

\begin{lemma}[Generalized Balls and bins: tail bound for number of nonempty bins]
\label{l:ballsbins-general}
Fix $0<a<1$ and $c>0$. Define $b\triangleq \min\{t: -(1-a/t)\log a + (1-a)/t \log(1-a)>c\}$. Suppose $m\leq n/b$. If $m$
 balls are placed into $n$ bins each independently and uniformly at random, then with probability at least $1-\exp(-c m)$ at least $na$ bins are nonempty.
\end{lemma}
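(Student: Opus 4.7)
The plan is to directly generalize the union bound argument used in the proof of Lemma~\ref{l:ballsbins}. The starting point is the same combinatorial inequality: the event ``fewer than $k$ nonempty bins'' forces the existence of some set of $n-k$ bins that are entirely empty, and since a specific set of $n-k$ bins being empty means that all $m$ balls land in the complementary set of size $k$, a union bound over the $\binom{n}{n-k}=\binom{n}{k}$ such subsets gives
\[
\Pr[\#\text{nonempty} \leq k] \;\leq\; \binom{n}{k}\left(\frac{k}{n}\right)^{m}.
\]
Take $k$ equal to the target threshold (proportional to $a$ and either $n$ or $m$, matching the statement), and then the whole proof reduces to verifying $\binom{n}{k}(k/n)^m \leq \exp(-cm)$ under the hypothesis $m \leq n/b$.

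To perform that verification, I would bound the binomial coefficient using the entropy estimate $\binom{n}{k}\leq \exp(nH(k/n))$ with $H(p)=-p\log p-(1-p)\log(1-p)$, or, more in line with the proof of Lemma~\ref{l:ballsbins}, the Stirling-type estimate $\binom{n}{k}\leq (en/k)^{k}$. Substituting and taking logs turns the target inequality into a linear inequality in the ratio $n/m$ (equivalently in $t$ in the definition of $b$), of the form $-(1-a/t)\log a + \frac{1-a}{t}\log(1-a) > c$. The condition $m \leq n/b$, i.e.\ $n/m \geq b$, is exactly what is needed so that this inequality is satisfied, by definition of $b$ as the smallest $t$ for which it holds.

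Finally, I would do a sanity check against the unconditional case $a=1/2$, $c=1/2$: the defining inequality reduces to $(1-1/t)\log 2>1/2$, giving $b=4$ and recovering Lemma~\ref{l:ballsbins}.

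\paragraph{Main obstacle.}
The routine part is the union bound and the binomial-coefficient estimate; the subtle part is choosing the right form of the binomial estimate so that, after taking logarithms, the resulting condition matches \emph{exactly} the defining inequality for $b$ given in the lemma (as opposed to merely an inequality of the same qualitative shape). In particular, one has to be careful about whether to use the crude $(en/k)^{k}$ bound or the tighter $\exp(nH(k/n))$ bound, and about the base of the logarithm, since the formula for $b$ is sensitive to these choices. Aside from that bookkeeping, there is no essential difficulty beyond what already appears in the proof of Lemma~\ref{l:ballsbins}.
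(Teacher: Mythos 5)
Your general strategy (union bound over candidate sets of empty bins, then a binomial-coefficient estimate) is the right family of argument, and for what it is worth the paper's own ``proof'' of this lemma is no help as a reference: it is a verbatim copy of the proof of Lemma~\ref{l:ballsbins} (the case $a=1/2$, $c=1/2$, threshold $m/2$) and never touches general $a$ and $c$. But your proposal defers exactly the step where the argument breaks, and your assertion that the hypothesis $m\le n/b$ is ``exactly what is needed'' is not correct for the statement as written.

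Concretely, note that
\[
-(1-a/t)\log a + \frac{1-a}{t}\log(1-a) \;=\; \log\frac1a - \frac{H(a)}{t},
\qquad H(a):=-a\log a-(1-a)\log(1-a),
\]
so $b$ is essentially $H(a)/(\log\frac1a-c)$. Taking the threshold to be $na$ nonempty bins as literally stated, the bad event forces some set of $n(1-a)$ bins to be empty, and the union bound gives $\binom{n}{na}a^m\le \exp\big(nH(a)+m\log a\big)=\exp\big(-m\big(\log\frac1a-\frac{n}{m}H(a)\big)\big)$. For this to be at most $\exp(-cm)$ you need $n/m\le (\log\frac1a-c)/H(a)=1/b$, i.e.\ $m\ge bn$ --- the \emph{opposite} of the stated hypothesis $m\le n/b$. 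Indeed the statement as written is false: with $m<na$ balls it is impossible to have $na$ nonempty bins, yet $m<na$ is compatible with $m\le n/b$ while the claimed success probability $1-e^{-cm}$ is positive. Your own sanity check exposes this if pushed one step further: at $a=c=1/2$ the conclusion would read ``at least $n/2$ bins nonempty'' under the hypothesis $m\le n/4$, which is impossible, whereas Lemma~\ref{l:ballsbins} concludes ``at least $m/2$ bins nonempty.'' If instead you read the threshold as $ma$ (the natural generalization of Lemma~\ref{l:ballsbins}), the hypothesis does come out with the right orientation $m\le n/b'$, but the estimate $\binom{n}{ma}(ma/n)^m\le e^{ma}(am/n)^{(1-a)m}$ yields the condition $(1-a)\log\frac{n}{m}\ge c+a+(1-a)\log a$, which is a different function of $t=n/m$ from the one defining $b$ in the statement. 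Either way the bookkeeping does not close as claimed: a correct write-up must first repair the statement (threshold $na$ with hypothesis $m\ge bn$, or threshold $ma$ with a correspondingly redefined $b$) before the union bound can be matched to the definition of $b$.
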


\begin{proof}

Any configuration with at most $m/2$ nonempty bins has at least $n-m/2$ empty bins. Thus we may bound the probability of having \emph{some} set of $n-m/2$ bins be empty. There are ${n\choose n-m/2}={n\choose m/2}$ possible choices for these empty bins, and each ball has to land outside of these, which has probability $[(m/2)/n]^m$. Thus, the probability of at most $m/2$ nonempty bins is bounded by
 $${n\choose m/2}\left(\frac{m/2}{n}\right)^m\leq \left(\frac{n\cdot e}{m/2}\right)^{m/2} \left(\frac{m/2}{n}\right)^m \leq \left(\frac{m e}{2n}\right)^{m/2}\leq \exp(-m/2)\,,$$
where we used $m\leq n/4$.
\end{proof}

The following lemma records a simple consequence of linearity of expectation.
\begin{lemma}[Balls and bins: bound for the expected number of nonempty bins]
\label{l:ballsbins2}
If we throw $m$ balls into $n$ bins independently uniformly at random, then, the expected number of nonempty bins is $n[1-(1-1/n)^m]\,.$
\end{lemma}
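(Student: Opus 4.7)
The plan is to apply linearity of expectation to a sum of indicator variables, one per bin. Concretely, for each $i \in [n]$ let $X_i = \ident[\text{bin } i \text{ is nonempty after the } m \text{ throws}]$, so that the number of nonempty bins equals $\sum_{i=1}^n X_i$. Linearity of expectation gives $\Ex[\sum_i X_i] = \sum_{i=1}^n \Pr[X_i = 1]$, regardless of any dependence between the $X_i$.

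Next I would compute $\Pr[X_i = 1]$ for a fixed bin $i$. It is easier to compute the complementary probability $\Pr[X_i = 0]$ that bin $i$ remains empty. Since the $m$ balls are thrown independently and each lands in bin $i$ with probability $1/n$, each ball fails to land in bin $i$ with probability $1-1/n$, and by independence the probability that all $m$ balls miss bin $i$ is $(1-1/n)^m$. Therefore $\Pr[X_i = 1] = 1 - (1-1/n)^m$.

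Summing over the $n$ bins yields $\Ex[\text{number of nonempty bins}] = n\bigl[1 - (1-1/n)^m\bigr]$, which is the desired identity. There is no real obstacle here; the only thing to be careful about is that we do not need any independence among the $X_i$ (they are in fact negatively correlated), since linearity of expectation does not require it.
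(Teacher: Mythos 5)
Your proof is correct and follows exactly the route the paper intends: the paper states this lemma as ``a simple consequence of linearity of expectation'' without writing out details, and your argument via per-bin indicator variables with $\Pr[\text{bin } i \text{ empty}] = (1-1/n)^m$ is that consequence made explicit. Your remark that no independence among the indicators is needed is also a correct and worthwhile observation.
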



\section{Converting to anytime regret} \label{s:any-time-reg-alg}
The \textit{doubling trick} converts 
an online algorithm designed for a finite known  time horizon to an algorithm that does not require knowledge of the time horizon and yet achieves the same regret (up to multiplicative constant) at any time \cite{cesa2006prediction} (i.e., \textit{anytime regret}). 

The trick is to divide time into intervals and restart algorithm at the beginning of each interval. 
Let $A(T)$ be an online algorithm taking the known time horizon as input and achieving regret $\mathrm{R}(T)$ at time $T$. 
There are two regret scalings of interest.  (1) If $\mathrm{R}(T) = O(T^\alpha)$ for some $0<\alpha < 1$, then to achieve anytime regret, the doubling trick uses time intervals of length $2, 2^2, 2^3,.. , 2^m$. This achieves regret of at most $\mathrm{R}(T)/(1-2^\alpha)$ at time $T$ for any $T$. (2) Alternatively, if $\mathrm{R}(T)= O(\log T)$, then using intervals of length $2^2, 2^{2^2}, .., 2^{2^m}$ achieves regret of at most $4\mathrm{R}(T)$ at time $T$ for any $T$.

Clearly, different scalings can be used before and after $T_1$ if the algorithm achieves regret $O (\log T)$ for $T<T_1$ and $O(\sqrt{T})$ if $T\geq T_1$, as is the case for the proposed item-item CF algorithm.

\section{Proof of Lemma~\ref{l:user-partition-prob-noisy}}
\label{sec:user-partition-proof-noisy}
We show that  $\Pr[ B_{uv}^c ]\leq 2\epsilon/(N^2)$ for any pair of users $u,v\in[N]$. Using a union bound over the ${N\choose 2}$ pairs of users  gives $\Pr[B^c]\leq \epsilon$. According to Line 7 of the Algorithm~\ref{l:user-partition-prob-noisy}, $g_{u,v}=\ident{ \{\sum_{s=1}^{\thr} {L_{u, a_{u,s}} L_{v,a_{v,s}}} \geq \lambda\thr \} }$ . At $s\leq \thr$, the same item is recommended to all users: $a_s:=a_{u,s}=a_{v,s}$. Hence, 
$L_{u,a_{u,s}}= \xi_{\tau_U(u),\tau_I(a_s)} z_{u,a_s}$
and
$L_{u,a_{u,s}}= \xi_{\tau_U(v),\tau_I(a_s)} z_{v,a_s}$.
First, we look at the users of the same type:

\begin{align*}
\Pr\big[ &g_{u,v}\neq 1 \big| \tau_U(u)=\tau_U(v)\big]
\stackrel{(a)}=
\Pr\Big[ \sum_{s=1}^{\thr} {L_{u,a_{u,s}} L_{v,a_{v,s}}} < \lambda\thr  \big| \tau_U(u)=\tau_U(v)\Big]
\\
&\stackrel{(b)} = 
\Pr\Big[ \sum_{s=1}^{\thr}   \xi_{\tau_U(u),\tau_I(a_s)} z_{u,a_s} \,\xi_{\tau_U(v),\tau_I(a_s)} z_{v,a_s} < \lambda\thr  \big| \tau_U(u)=\tau_U(v)\Big]
\\&
 \stackrel{(c)}= 
\Pr\Big[ \sum_{s=1}^{\thr}    z_{u,a_s} z_{v,a_s} < \lambda\thr  \big| \tau_U(u)=\tau_U(v)\Big]
 \stackrel{(d)}= 
\Pr\Big[ \sum_{s=1}^{\thr}    z_{u,a_s} z_{v,a_s} < \lambda\thr \Big]
\stackrel{(e)}\leq 
\exp\big( -\frac{(1-2\gamma)^2 \thr }{18} \big) 
\stackrel{(f)}\leq
 \frac{2\epsilon}{N^2}\,,
\end{align*}
where (a) uses the definition of $g_{u,v}$ according to the Line 7 of Algorithm~\ref{alg:partitioning-noisy-user}. (b) uses the noise model ~\eqref{eq:noisemodel}. If $\tau_U(u)=\tau_U(v)$ then $\xi_{\tau_U(u),\tau_I(a_s)} =\xi_{\tau_U(v),\tau_I(a_s)}$ which gives (c). The variables $z_{u,i}$ are independent of other variables in the model which implies (d).  For users $u\neq v$ and any item $i$, $\Ex[z_{u,i} z_{v,i}]=(1-2\gamma)^2$. Lemma \ref{l:Chernoff} and the choice of $\lambda$ in Algorithm~\ref{alg:partitioning-noisy-user} gives (e). The choice of $\thr$ gives the result (f).

Now, we look at the pair of users $u$ and $v$ such that $\tau_U(u)\neq \tau_U(v)$.
Define $\mathcal{L}_{u,v}=\big| \{ j : \xi_{\tau_U(u),j}\neq \xi_{\tau_U(v),j}  \}\big|$ to be the set of item types on which user types $\tau_U(u)$ and $\tau_U(v)$ disagree. 
\begin{align*}
\Pr\big[g_{u,v}\neq 0 & \big| \tau_U(u)\neq\tau_U(v)\big]
 \stackrel{(a)} = 
\Pr\Big[ \sum_{s=1}^{\thr}   \xi_{\tau_U(u),\tau_I(a_s)} z_{u,a_s} \xi_{\tau_U(v),\tau_I(a_s)} z_{v,a_s} \geq \lambda\thr  \big| \tau_U(u)\neq\tau_U(v)\Big]
\\
& \stackrel{(b)} \leq
\Pr\Big[ \,| \mathcal{L}_{u,v} | < 5\qi/12 \,\big| \,\tau_U(u)\neq\tau_U(v) \Big]
\\
& + 
\Pr\Big[\sum_{s=1}^{\thr}   \xi_{\tau_U(u),\tau_I(a_s)} \xi_{\tau_U(v),\tau_I(a_s)} z_{u,a_s} z_{v,a_s} \geq \lambda\thr \, \big|\,  | \mathcal{L}_{u,v}| > 5 \qi/12, \tau_U(u)\neq\tau_U(v)\Big]
\\
& \stackrel{(c)}\leq 
\exp\big(-\qi/144\big) + \exp \big(-\thr \,(1-2\gamma)^2\,/4\big)\leq 2\epsilon/N^2\,.
\end{align*}
(a) uses the definition of $g_{u,v}$ in Line 7 of Algorithm~\ref{alg:partitioning-noisy-user}. 
Total probability lemma gives (b). 
Conditional on $\tau_U(u)\neq\tau_U(v)$, the variables $ \xi_{\tau_U(u),j}$ and  $ \xi_{\tau_U(v),j}$ are independently uniformly distributed.  Using the definition on $\mathcal{L}_{u,v}$, this implies that $|\mathcal{L}_{u,v}|$ is the sum of $\qi$ i.i.d. uniform Bernouli random variables. Hence, Chernoff bound in Lemma~\ref{l:Chernoff} gives the bound on the first term in (c). 
To bound the second term, note that the items $a_{s}$ are chosen independently of feedback for $s\leq \thr$. Hence, conditional on $|\mathcal{L}_{u,v}|\geq 5\qi/12$,  the variables $\xi_{\tau_U(u),\tau_I(a_s)} \xi_{\tau_U(v),\tau_I(a_s)}=-1$ with probability at least $5/12$. The variables $z_{u,a_s}$ and $z_{v,a_s}$ are independent of other parameters of the model and algorithm and $z_{u,a_s}z_{v,a_s}=-1$ with probability $2\gamma(1-\gamma)$. Hence, $\xi_{\tau_U(u),\tau_I(a_s)} \xi_{\tau_U(v),\tau_I(a_s)} z_{u,a_s} z_{v,a_s}=-1$  with probability at least $[5+4\gamma(1-\gamma)]/12$. Using Chernoff bound in Lemma~\ref{l:Chernoff} gives the second term in (c). The assumption $\qi>144 \log(N^2/\epsilon)$ and the choice of $\thr$ gives the result.\qedhere

\bibliographystyle{IEEEtran}
\bibliography{RSbib.bib} 
\end{document}